\documentclass[11pt]{article}
\usepackage[margin=1in]{geometry}

\usepackage{caption}
\usepackage{subcaption}
\usepackage{amsmath, amssymb} 
\usepackage{tikz}
\usepackage{mathtools}
\usepackage{amsthm}
\usepackage{enumitem}

\usepackage[authoryear,round]{natbib}

\usepackage{algorithm}
\usepackage{algorithmic}

\newtheorem{theorem}{Theorem}

\newtheorem{lemma}{Lemma}
\newtheorem{proposition}{Proposition}

\newtheorem{corollary}{Corollary}

\newtheorem{remark}{Remark}
\newtheorem{definition}{Definition}

\newtheorem{example}{Example}

\usepackage[dvipsnames]{xcolor}        

\usepackage[colorlinks=true, linkcolor={rgb,255:red,0;green,20;blue,115}, citecolor={rgb,255:red,0;green,20;blue,115}]{hyperref}

\usepackage{cleveref}

\crefformat{subfigure}{#2Fig.~#1#3}
\Crefformat{subfigure}{#2Fig.~#1#3}

\crefname{theorem}{Theorem}{Theorems}
\crefname{lemma}{Lemma}{Lemmas}
\crefname{property}{Property}{Properties}
\crefname{assumption}{Assumption}{Assumptions}
\crefname{proposition}{Proposition}{Propositions}
\crefname{property}{Property}{Properties}
\crefname{corollary}{Corollary}{Corollaries}
\crefname{figure}{Fig.}{Figs.}
\crefname{section}{Section}{Sections}
\crefname{definition}{Definition}{Definitions}
\crefname{table}{Table}{Tables}

\crefname{algorithm}{Algorithm}{Algorithms}
\crefname{example}{Example}{Examples}
\crefname{remark}{Remark}{Remarks}
\crefname{appendix}{Appendix}{Appendices}
\crefname{section}{Section}{Sections}

\usepackage{float}
\usepackage{placeins}

\usepackage{titletoc}

\usepackage[textsize=tiny]{todonotes}

\usepackage{adjustbox}

\usepackage{centernot}

\newcommand{\CI}{\mathrel{\perp\mspace{-10mu}\perp}}
\newcommand{\nCI}{\centernot{\CI}}

\newdimen\arrowsize

\pgfarrowsdeclare{arcsq}{arcsq}
{
	\arrowsize=0.5pt
	\advance\arrowsize by .5\pgflinewidth
	\pgfarrowsleftextend{-10\arrowsize-.10\pgflinewidth}
	\pgfarrowsrightextend{.10\pgflinewidth}
}
{
	\arrowsize=0.5pt
	\advance\arrowsize by .5\pgflinewidth
	\pgfsetdash{}{0pt} 
	\pgfsetroundjoin   
	\pgfsetroundcap    
	\pgfpathmoveto{\pgfpoint{0\arrowsize}{0\arrowsize}}
	\pgfpatharc{-90}{-130}{7\arrowsize}
	\pgfusepathqstroke
	\pgfpathmoveto{\pgfpointorigin}
	\pgfpatharc{90}{130}{7\arrowsize}
	\pgfusepathqstroke
}
\pgfarrowsdeclare{arcsqblue}{arcsqblue}
{
	\arrowsize=0.5pt 
	\advance\arrowsize by .5\pgflinewidth
	\pgfarrowsleftextend{-12\arrowsize-.10\pgflinewidth}
	\pgfarrowsrightextend{.10\pgflinewidth}
}
{
	\arrowsize=0.8pt 
	\advance\arrowsize by .5\pgflinewidth
	\pgfsetdash{}{0pt} 
	\pgfsetroundjoin   
	\pgfsetroundcap    
	\pgfsetcolor{blue} 
	\pgfsetlinewidth{1.5pt} 
	\pgfpathmoveto{\pgfpoint{0\arrowsize}{0\arrowsize}}
	\pgfpatharc{-90}{-130}{6\arrowsize}
	\pgfusepathqstroke
	\pgfpathmoveto{\pgfpointorigin}
	\pgfpatharc{90}{130}{6\arrowsize}
	\pgfusepathqstroke
}

\tikzset{
    every path/.style={line width=0.75pt},
    every node/.style={font=\normalsize,inner sep=0pt,
        text=black,
        text opacity=1},
    selection/.style={
        draw,
        rectangle,
        fill=black!50,
        minimum width=5.5mm,
        minimum height=5.5mm,text=white,
    },
    latent/.style={
        draw,
        circle,
        dash pattern={on 3.0pt off 2.5pt},
        fill=black!50,
        line width=1.0pt, minimum size=5.5mm,text=white,
    },
    obs/.style={
        circle, draw=Gray, fill=Gray!10, line width=0.8pt, minimum size=5.5mm
    },
    geneobs/.style={
        rectangle,
        rounded corners=2pt,
        draw=Gray,
        fill=Gray!10,
        line width=0.8pt,
        minimum width=7.5mm,
        minimum height=5.5mm
    },
   target/.style={
        circle, 
        draw=OrangeRed, 
        fill=OrangeRed!10, 
        line width=1.5pt, 
        minimum size=5.5mm, 
        inner sep=0pt
   },
   genetarget/.style={
        rectangle,
        rounded corners=2pt,
        draw=OrangeRed,
        fill=OrangeRed!10,
        line width=1.5pt,
        minimum width=7.5mm,
        minimum height=5.5mm,
        inner sep=0pt
    },
    mb/.style={
        circle, 
        draw=RoyalBlue, 
        fill=RoyalBlue!10, 
        line width=1pt, 
        minimum size=5.5mm, 
        inner sep=0pt
    }
    }
\usetikzlibrary{arrows.meta}

\makeatletter
\pgfarrowsdeclare{circle}{circle}
{
    \pgfarrowsleftextend{0pt}
    \pgfarrowsrightextend{6\pgflinewidth}
}
{
    \pgfpathcircle{\pgfqpoint{1.5pt}{0pt}}{1.5pt} 
    \pgfusepathqstroke 
}
\makeatother

\usepackage{xspace}
\newcommand{\ie}{i.e.\@\xspace}           
\newcommand{\method}{{\textsc{LoCaLS}}\@\xspace}  

\DeclareMathOperator{\An}{\mathrm{An}}
\DeclareMathOperator{\Anplus}{\mathrm{An^{+}}}
\DeclareMathOperator{\Ant}{\mathrm{Ant}}
\DeclareMathOperator{\Antplus}{\mathrm{Ant^{+}}}
\DeclareMathOperator{\Pa}{\mathrm{Pa}}

\DeclareMathOperator{\MB}{\mathrm{MB}}
\DeclareMathOperator{\MBplus}{\mathrm{MB^{+}}}

\DeclareMathOperator{\ArrColl}{\mathrm{ArrColl}}
\DeclareMathOperator{\Sepset}{\mathrm{Sepset}}
\DeclareMathOperator{\Sepsets}{\mathbf{Sepsets}}

\DeclareMathOperator{\target}{T}

\DeclareMathOperator{\MAG}{\mathcal{M}}
\DeclareMathOperator{\PAG}{\mathcal{P}}
\DeclareMathOperator{\DAG}{\mathcal{D}}
\newcommand{\aug}[1][]{(#1)^a}

\newcommand{\vars}[1][V]{\mathbf{#1}}

\newcommand{\g}[1][G]{\mathcal{#1}}  

\newcommand{\LocalP}[1][X]{\mathcal{L}_{#1}}

\usepackage{threeparttable}
\usepackage{booktabs}    
\usepackage{multirow}    
\usepackage{xcolor}      
\usepackage{colortbl}    
\usepackage{pifont}      
\usepackage{fontawesome5}
\usepackage{arydshln} 
\usepackage{array}
\newcolumntype{C}[1]{>{\centering\arraybackslash}m{#1}}

\newcommand{\cmark}{\textcolor{teal}{\ding{51}}} 
\newcommand{\xmark}{\textcolor{gray!30}{\ding{55}}} 

\newcommand{\starterm}{\ding{72}}

\definecolor{lightblue}{RGB}{220,240,244}

\newcommand{\mycommfont}[1]{%
    {\small\ttfamily\textcolor{gray}{#1}}%
}

\newcommand{\algcomment}[1]{%
    \STATE \mycommfont{$\triangleright$~#1}%
}

\newlength{\localfigbodyheight}
\newcommand{\fixedtikzbox}[1]{%
    \begin{minipage}[b][\localfigbodyheight][c]{\linewidth}
        \centering
        \makebox[\linewidth][c]{#1}%
    \end{minipage}%
}

\newlength{\seqfigbodyheight}
\newcommand{\fixedseqtikzbox}[1]{%
    \begin{minipage}[c][\seqfigbodyheight][c]{\linewidth}
        \centering
        \makebox[\linewidth][c]{#1}%
    \end{minipage}%
}

\date{}

\begin{document}

\title{Local Causal Structure Learning in the Presence of Latent Variables and Selection Bias}

\author{Zheng Li\textsuperscript{1}, Hao Zhang\textsuperscript{1}, Ruxin Wang\textsuperscript{1}, Ruichu Cai\textsuperscript{3,4}, Kun Zhang\textsuperscript{5,6}, and Feng Xie\textsuperscript{2}%
}

\maketitle

\begingroup
\makeatletter
\insert\footins{%
    \reset@font\footnotesize
    \interlinepenalty\interfootnotelinepenalty
    \splittopskip\footnotesep
    \splitmaxdepth\dp\strutbox
    \floatingpenalty\@MM
    \hsize\columnwidth
    \@parboxrestore
    \noindent
    \textsuperscript{1}Shenzhen Institutes of Advanced Technology, Chinese Academy of Sciences, Shenzhen, China.
    \textsuperscript{2}Department of Applied Statistics, Beijing Technology and Business University, Beijing, China.
    \textsuperscript{3}School of Computer Science, Guangdong University of Technology, Guangzhou, China.
    \textsuperscript{4}Peng Cheng Laboratory, Shenzhen, China.
    \textsuperscript{5}Department of Philosophy, Carnegie Mellon University, Pittsburgh, USA.
    \textsuperscript{6}Mohamed bin Zayed University of Artificial Intelligence, Abu Dhabi, United Arab Emirates.
    Corresponding authors: Hao Zhang and Feng Xie (e-mails: h.zhang10@siat.ac.cn; fengxie@btbu.edu.cn).\par
}
\makeatother
\endgroup

\begin{abstract}
Discovering the direct causes and effects of a target variable from observational data is a fundamental problem in causal discovery, with broad applications in domains such as gene regulatory analysis and biomedical research.
Existing causal discovery methods either learn a global causal structure, which incurs substantial computational cost, or assume the absence of latent variables and selection bias, assumptions that are often violated in real-world settings.
Motivated by these challenges, we study local causal structure learning in the presence of latent variables and selection bias.
Specifically, we first characterize a local region that enables target-specific causal discovery without recovering the entire global structure. We then establish a theoretical bridge between causal information learned from the observed distribution induced on this local region and the corresponding information in the global causal structure. Building on these foundations, we propose \method, a local causal structure learning algorithm that is sound and complete under standard assumptions and identifies the same direct causes and effects of a target variable as those identifiable by global causal discovery methods, while allowing for latent variables and selection bias.
Extensive experiments on random and real-world structures demonstrate that the proposed method consistently achieves higher structural accuracy than existing local methods while requiring substantially less computational effort than state-of-the-art global methods. Furthermore, applications to two real-world gene expression datasets reveal biologically plausible target-specific causal structures, demonstrating its practical applicability in large-scale biological data analysis.
\end{abstract}

\medskip 
\noindent 
{\bf Keywords}: 
Causal discovery, local learning, latent variables, selection bias, Markov blanket.

\section{Introduction}
\label{sec:introduction}


Learning causal relations, commonly known as causal discovery, has attracted increasing attention across many fields, including computer science \citep{peters2017elements,pearl2018theoretical,scholkopf2022causality,chen2025mecd,zhu2026cfsm}, social science \citep{spirtes2000causation}, epidemiology \citep{hernan2010causal}, biology \citep{glymour2019review}, and neuroscience \citep{smith2011network,sanchez2019estimating}. Such causal relationships are useful for predicting the behavior of a system under external interventions, which is a crucial step in both understanding and manipulating complex systems~\citep{pearl2009causality}. 
In observational studies, researchers are often interested in \emph{discovering the causes and effects of a target variable rather than discovering a global causal structure}~\citep{walters2007genome,peter2011gene,geng2019evaluation,ma2023local}, such as finding the causes of lung cancer, the effects of smoking, the causes and effects of hypertension, and the regulatory causes and effects of a particular gene~\citep{geng2019evaluation}.

\begin{figure}[t!]
    \centering
    \includegraphics[trim=0.0cm 3.5cm 0.0cm 0.0cm, clip, width=0.55\textwidth, page=1]{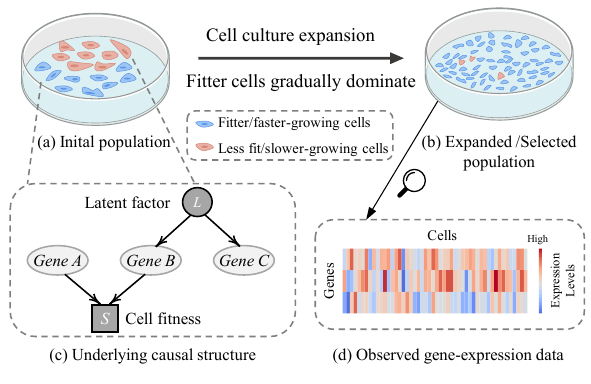}
  
    \caption{A Motivating example of local causal discovery in gene expression analysis under latent variables and selection bias.}
    \label{fig:motivating_examples}
    \vspace{-6mm}
\end{figure}

Driven by such practical demands, a growing body of work has focused on local causal discovery to efficiently identify the direct causes and effects of a target variable. Among the early contributions, \cite{yin2008partial} and \cite{zhou2010discover} introduced the PCD-by-PCD algorithm, where PCD denotes parents, children, and some descendants. This method leverages the MMPC algorithm~\citep{tsamardinos2006max} to extract the target's initial PCD set, and sequentially expands the structural boundary by computing the PCD sets of newly discovered adjacent variables. Throughout this expansion, local V-structures are identified, enabling the progressive orientation of edges incident to the target via Meek's rules~\citep{meek1995causal}. Building upon this foundation, \cite{wang2014discovering} proposed a more efficient MB-by-MB (Markov Blanket) approach. Additional significant contributions to this field include the CMB (Causal Markov Blanket) algorithm \citep{gao2015local}, the ELCS (Efficient Local Causal Structure) algorithm \citep{yang2021towards}, the PSL (Partial Bayesian Network Structure Learning) algorithm~\citep{ling2022psl}, and the GraN-LCS (GradieNt-based LCS) algorithm \citep{liang2023gradient}. 
Despite their success across various domains, the reliability of these methods typically relies on two restrictive assumptions:

\begin{itemize}[leftmargin=15pt]
    \item \textbf{No latent variables:} There is no latent confounding among the observed variables; that is, no unobserved variable serves as a common cause of two or more measured variables.
    \item \textbf{No selection bias:} The observed data are assumed to be collected without unobserved selection effects, so that the sample is representative of the population of interest.
\end{itemize}

However, in many real-world settings, such as clinical studies~\citep{richardson1998chain,zhang2008completeness} and gene expression studies~\citep{wille2004sparse,boquest2005isolation,frot2019robust,versteeg2022local}, these assumptions are often violated. 
\begin{example}[Latent Variables and Selection Bias in Gene Expression Analysis]
Consider a gene-expression study in which a population of cells is cultured and expanded over multiple generations before expression profiling~\citep{boquest2005isolation,kemmeren2014large,versteeg2022local}, as illustrated in \cref{fig:motivating_examples}(a)--(b). During this culture-expansion process, cells with higher survivability or faster growth rates gradually dominate the assayed population. Consequently, the measured gene-expression profiles are affected by selection bias, because the observed samples no longer represent the original cell population~\citep{hernan2004structural,boquest2005isolation,versteeg2022local}. Meanwhile, standard gene-expression assays do not capture all biologically relevant regulators. For instance, unmeasured upstream regulatory activities, such as the protein-level activity of transcription factors or transcription-factor complexes, can jointly regulate multiple observed genes but may remain unmeasured in expression profiles~\citep{stegle2012using,buettner2015computational,wei2018protein,buccitelli2020mrnas}.

The causal structure in \cref{fig:motivating_examples}(c) summarizes these two mechanisms, where $A$, $B$, and $C$ represent three distinct genes. The variable $S$ represents an unobserved selection variable that encodes cell fitness or survivability, and is directly affected by the expressions of genes $A$ and $B$. Thus, conditioning on the selected population can open the collider path $A \to S \leftarrow B$ and induce a spurious association between $A$ and $B$. The latent variable $L$ represents an unmeasured upstream regulatory activity and acts as a common cause of $B$ and $C$, thereby inducing latent confounding between the two observed genes. As a result, the observed gene-expression data in \cref{fig:motivating_examples}(d) contain both selection-induced and latent-confounding-induced associations. These mechanisms can further create apparent dependencies among $A$, $B$, and $C$ through paths involving both $S$ and $L$. If latent variables and selection bias are disregarded, such dependencies may be incorrectly identified as direct or indirect causal relations among the observed genes; for example, one could falsely conclude that $A$ is an (indirect) cause of both $B$ and $C$~\citep{versteeg2022local}.
\end{example}
This example shows that latent variables and selection bias are not merely technical complications, but pervasive features of real-world data collection processes. 
Therefore, overcoming these challenges is essential for reliable causal discovery in realistic settings~\citep{spirtes2016causal}.

A substantial body of work has studied causal discovery in the presence of latent variables and selection bias. \cite{spirtes1995fci} proposed the seminal FCI (Fast Causal Inference) algorithm, which recovers an equivalence class of causal structures in this setting by leveraging conditional independence (CI) relations derived from observational data. Subsequent work has largely focused on improving the efficiency of global discovery procedures. Representative examples include RFCI (Really Fast Causal Inference)~\citep{colombo2012rfci} and FCI$^{+}$~\citep{claassen2013learning}, which improve efficiency by relaxing certain CI search steps, making them more suitable for sparse, high-dimensional settings. More recently, \cite{rohekar2021iterative} proposed the ICD (Iterative Causal Discovery) algorithm, which reduces the number of CI tests by restricting conditioning sets according to the topological distance between the testing variables. Additionally, \cite{akbari2021recursive} introduced L-MARVEL, which reduces CI testing complexity by identifying and recursively removing a particular class of variables. Other related developments include \citep{pellet2008finding,claassen2011logical,claassen2013learning,mokhtarian2023novel,mokhtarian2025recursive}. Despite these important advances, existing methods in this line are primarily designed for global causal discovery, aiming to recover a causal structure over all observed variables. As a result, they can be unnecessarily expensive when the goal is only to identify the direct causes and effects of a single target variable.

Taken together, these limitations naturally raise the following important yet underexplored question:
\begin{center}
\begin{minipage}{0.95\linewidth}
~~\starterm~\textit{
How can we efficiently and accurately identify the direct causes and effects of a target variable from observational data when latent variables and selection bias may both be present?}
\end{minipage}
\end{center}
To address this question, we propose a novel local causal discovery algorithm that can efficiently identify the direct causes and effects of a target variable in the presence of latent confounding and selection bias. Specifically, we make the following contributions:
\begin{itemize}[leftmargin=15pt]
    \item We establish a theoretical foundation for local causal discovery in the presence of latent variables and selection bias. Specifically, we characterize an identifiable local region, and show which target-specific adjacencies and orientation information learned from this region are guaranteed to be consistent with the results of global causal discovery.
    
    \item Building on these foundations, we propose \method, a local causal structure learning algorithm for identifying the direct causes and effects of a target variable in the presence of latent confounding and selection bias. We prove that \method is sound and complete with respect to the target-specific information identifiable from the global learning.

    \item We conduct extensive experiments on synthetic random structures and real-world benchmark structures. The results show that \method achieves strong target-specific structural accuracy with higher efficiency compared to global methods, while outperforming existing local methods.
    We further apply \method to two real-world gene expression datasets. The results demonstrate its practical utility for recovering biologically plausible local causal structures in high-dimensional biological systems.

\end{itemize}


\section{Related Work}
\label{sec:related-work}

This work studies \emph{local causal structure} (LCS) learning in the presence of latent variables and selection bias. It is closely related to three lines of research: \emph{LCS learning}, \emph{global causal structure} (GCS) learning, and \emph{Markov blanket} (MB) learning. We briefly review these areas below and clarify how our problem setting differs from existing work. For comprehensive surveys of causal structure learning and MB learning, see~\citep{spirtes2016causal,heinze2018causal,glymour2019review,geng2019evaluation,yu2020causality,kitson2023survey,zanga2022survey}.

\textbf{LCS learning.}
Existing methods for LCS learning can be broadly divided into two categories. The first category is based on small local causal patterns such as Y-structures. Representative methods include the LCD algorithm~\citep{cooper1997simple} and its variants~\citep{silverstein2000scalable,mani2004causal,mani2006theoretical,versteeg2022local}. These methods typically infer causal relations from small local configurations involving only three or four variables. They are useful for detecting certain local causal patterns, but they do not aim to recover all causal edges incident to a target variable.
The second category aims to identify the direct causes and effects of a target variable without reconstructing the entire causal graph. Representative methods include PCD-by-PCD~\citep{yin2008partial,zhou2010discover}, MB-by-MB~\citep{wang2014discovering}, and a series of extensions~\citep{gao2015local,yang2021towards,ling2022psl,liang2023gradient,dai2024local,ling2025hybrid,zheng2026local}. 
Most methods in this line are developed under causal sufficiency and assume that the observed data are not affected by selection bias. More recently, \cite{xie2024local} and related methods~\citep{li2025local,ling2025local} extended local causal discovery to settings with latent variables. However, these methods do not explicitly handle selection bias.
Therefore, the problem considered in this paper is more general: we aim to learn the target-specific local causal structure while allowing both latent variables and selection bias.

\textbf{GCS learning.}
Under the assumptions that there are no latent variables and no selection bias, classical global causal discovery methods such as the IC algorithm~\citep{verma1990equivalence} and the PC algorithm~\citep{spirtes1991PC} recover a Markov equivalence class of DAGs from conditional independence relations. Numerous extensions have since been developed to improve computational efficiency, statistical robustness, and scalability~\citep{chickering2002GES,harris2013pc,colombo2014order,le2016multipc,ramsey2017fGES,rohekar2018bayesian,zheng2018notears,yu2019dag,mokhtarian2021recursive,mokhtarian2022learning,shiragur2024causal,zhang2024towards,dong2025intersecting}. When latent variables and selection bias are allowed, the seminal FCI algorithm~\citep{spirtes1995fci,zhang2008completeness} extends PC to this more general setting and is sound and complete under standard assumptions. Building on FCI, several methods have been proposed to improve the efficiency of global discovery, including RFCI~\citep{colombo2012learning} and FCI$^{+}$~\citep{claassen2013learning}. Other relevant developments include~\citep{pellet2008finding,claassen2011logical,ogarrio2016hybrid,tsirlis2018scoring,chen2023causal,wang2023sound}, as well as more recent iterative and recursive techniques~\citep{rohekar2021iterative,rohekar2023temporal,akbari2021recursive,mokhtarian2025recursive}. These methods are designed for global causal discovery, namely, recovering a causal structure over all observed variables. By contrast, when the goal is only to identify the local causal structure around a single target variable, recovering a global structure may be unnecessarily costly, both computationally and statistically.

\textbf{MB learning.}
MB learning is closely related to LCS learning because the Markov blanket can provide a compact local search space for identifying the direct causes and effects of a target variable. A large number of MB learning algorithms have been developed, including the GS (Grow-Shrink) algorithm~\citep{margaritis1999bayesian}, IAMB~\citep{tsamardinos2003algorithms}, and its variants such as inter-IAMB~\citep{tsamardinos2003algorithms}, Fast-IAMB~\citep{yaramakala2005speculative}, and KIAMB~\citep{pena2007towards}. Another important line of work adopts divide-and-conquer strategies, including MMMB (Min-Max MB)~\citep{tsamardinos2003time}, HITON-MB~\citep{aliferis2003hiton}, PCMB~\citep{pena2007towards}, and STMB~\citep{gao2016efficient}. Other advances include~\citep{pellet2008using,yu2019multi,wu2019accurate,ling2022light}. However, most of these methods are developed under the assumptions that there are no latent variables and no selection bias. \cite{yu2018mining} proposed M3B for MB discovery in the presence of latent confounders, but it does not handle selection bias or recover the target-specific causal structure. Therefore, although MB learning is useful for narrowing the local search space, existing MB methods do not address the problem of recovering the local causal structure of a target variable in the presence of both latent variables and selection bias.

To the best of our knowledge, no existing local causal structure learning method provides soundness and completeness guarantees for identifying target-specific causal relations in the presence of both latent variables and selection bias.

\section{Preliminaries}
\label{sec:preliminaries}

This section introduces the notation and graphical background used throughout the paper. 
We begin in \cref{sec:scm} by describing the structural causal model (SCM) associated with an underlying DAG, where latent variables and selection variables may be present. 
We then review maximal ancestral graphs (MAGs) and partial ancestral graphs (PAGs) in \cref{sec:mag-pag}, which provide the standard graphical representations of the observed-variable structure in the presence of latent variables and selection bias. 
Finally, in \cref{sec:problem-formulation}, we formalize the local causal structure learning problem considered in this paper.

Throughout this paper, individual variables, or vertices, are denoted by uppercase letters, e.g., $V$, while sets of variables, or vertices, are denoted by bold uppercase letters, e.g., $\mathbf{V}$. Graphs are denoted by calligraphic letters, e.g., $\g$. Additionally, \cref{Appendix:Preliminaries} provides a summary of the main notation used throughout the paper in \cref{table:list-symbols}, together with additional terminology and formal definitions.

\begin{figure}[t!]
    \centering
    \captionsetup[subfigure]{font=small, skip=2pt}

    \begin{subfigure}[c]{0.31\linewidth}
        \centering
        \fixedtikzbox{%
        \begin{tikzpicture}[x=1.0cm,y=1.0cm]
              \draw (0.0, 0.0) node(A) [obs] {$A$};
              \draw (2.0, 0.0) node(T) [obs] {$T$};
              \draw (3.6, 0.0) node(B) [obs] {$B$};
              \draw (5.2, 0.0) node(G) [obs] {$G$};
              
              \draw (0.0, -1.1) node(C) [obs] {$C$};
              \draw (1.0, -0.6) node(S) [selection] {$S$};
              \draw (2.0, -1.1) node(D) [obs] {$D$};
              \draw (2.8, -0.8) node(E) [obs] {$E$};
              \draw (4.4, -0.8) node(L1) [latent] {$L_{1}$};
              \draw (5.2, -1.1) node(F) [obs] {$F$};
              \draw (1.7, 1.1) node(L2) [latent] {$L_{2}$};
              \draw (2.8, 0.8) node(K) [obs] {$K$};
              \draw (4.4, 0.8) node(I) [obs] {$I$};
            
              \draw (5.2, 1.5) node(H) [obs] {$H$};
            
              \draw (3.6, 1.5) node(J) [obs] {$J$};
              \draw[-arcsq] (A) -- (S);
              \draw[-arcsq] (T) -- (S);
              \draw[-arcsq] (C) -- (A);
              \draw[-arcsq] (C) -- (D);
              \draw[-arcsq] (D) -- (T);
            
              \draw[-arcsq] (T) -- (E);
              \draw[-arcsq] (B) -- (E);
            
              \draw[-arcsq] (L1) -- (B);
              \draw[-arcsq] (L1) -- (G);
              \draw[-arcsq] (F) -- (G);
            
              \draw[-arcsq] (L2) -- (T);
              \draw[-arcsq] (L2) -- (K);
            
              \draw[-arcsq] (J) -- (K);
              \draw[-arcsq] (J) -- (I);
            
              \draw[-arcsq] (K) -- (B);
              \draw[-arcsq] (I) -- (G);
            
              \draw[-arcsq] (G) -- (H);
        \end{tikzpicture}%
        }
        \caption{DAG}
        \label{fig:dag}
    \end{subfigure}
    \hfill
    \begin{subfigure}[c]{0.31\linewidth}
        \centering
        \fixedtikzbox{%
        \begin{tikzpicture}[x=0.9cm,y=0.9cm]
              \draw (0.0, 0.0) node(A) [obs] {$A$};
              \draw (1.2, 0.0) node(T) [obs] {$T$};
              \draw (2.6, 0.0) node(B) [obs] {$B$};
              \draw (4.0, 0.0) node(G) [obs] {$G$};
              
              \draw (0.0, 1.2) node(C) [obs] {$C$};
              \draw (1.2, 1.2) node(D) [obs] {$D$};
              \draw (1.2, -1.1) node(E) [obs] {$E$};
              \draw (2.6, -1.1) node(H) [obs] {$H$};
              \draw (4.0, -1.1) node(F) [obs] {$F$};
              \draw (2.6, 1.2) node(K) [obs] {$K$};
              \draw (4.0, 1.2) node(I) [obs] {$I$};
              \draw (3.3, 2.1) node(J) [obs] {$J$};
              \draw[-] (C) -- (A);
              \draw[-] (C) -- (D);
              \draw[-] (D) -- (T);
              \draw[-] (A) -- (T);
            
              \draw[-arcsq] (D) -- (K);
              \draw[-arcsq] (T) -- (K);
              \draw[-arcsq] (T) -- (E); 
            
              \draw[-arcsq] (T) -- (E);
              \draw[-arcsq] (B) -- (E);
              \draw[arcsq-arcsq] (B) -- (G);
            
              \draw[-arcsq] (F) -- (G);

              \draw[-arcsq] (J) -- (K);
              \draw[-arcsq] (J) -- (I);
            
              \draw[-arcsq] (K) -- (B);
              \draw[-arcsq] (I) -- (G);
            
              \draw[-arcsq] (G) -- (H);
        \end{tikzpicture}%
        }
        \caption{MAG}
        \label{fig:mag}
    \end{subfigure}
    \hfill
    \begin{subfigure}[c]{0.31\linewidth}
        \centering
        \fixedtikzbox{%
        \begin{tikzpicture}[x=0.9cm,y=0.9cm]
              \draw (0.0, 0.0) node(A) [obs] {$A$};
              \draw (1.2, 0.0) node(T) [obs] {$T$};
              \draw (2.6, 0.0) node(B) [obs] {$B$};
              \draw (4.0, 0.0) node(G) [obs] {$G$};
              
              \draw (0.0, 1.2) node(C) [obs] {$C$};
              \draw (1.2, 1.2) node(D) [obs] {$D$};
              \draw (1.2, -1.1) node(E) [obs] {$E$};
              \draw (2.6, -1.1) node(H) [obs] {$H$};
              \draw (4.0, -1.1) node(F) [obs] {$F$};
              \draw (2.6, 1.2) node(K) [obs] {$K$};
              \draw (4.0, 1.2) node(I) [obs] {$I$};
              \draw (3.3, 2.1) node(J) [obs] {$J$};
              \draw[-] (C) -- (A);
              \draw[-] (C) -- (D);
              \draw[-] (D) -- (T);
              \draw[-] (A) -- (T);
            
              \draw[-arcsq] (D) -- (K);
              \draw[-arcsq] (T) -- (K);
              \draw[-arcsq] (T) -- (E); 
            
              \draw[-arcsq] (T) -- (E);
              \draw[-arcsq] (B) -- (E);
              \draw[arcsq-arcsq] (B) -- (G);
            
              \draw[circle-arcsq] (F) -- (G);

              \draw[circle-arcsq] (J) -- (K);
              \draw[circle-circle] (J) -- (I);
            
              \draw[-arcsq] (K) -- (B);
              \draw[circle-arcsq] (I) -- (G);
            
              \draw[-arcsq] (G) -- (H);
        \end{tikzpicture}%
        }
        \caption{PAG}
        \label{fig:pag}
    \end{subfigure}

    \caption{(a) The underlying causal DAG for a subnetwork of the ANDES network \citep{conati1997line}, where $L_1$ and $L_2$ are latent variables and $S$ represents an unobserved selection variable. (b) The MAG corresponding to the DAG in (a). (c) The PAG representing the Markov equivalence class of the MAG in (b).}
    \vspace{-1.0em}
    \label{fig:andes-dag-mag-pag}
\end{figure}

\subsection{SCMs with Latent Variables and Selection Bias}
\label{sec:scm}

We consider a structural causal model (SCM)~\citep{pearl2000causality} over a set of variables
$
\mathbf{V}=\mathbf{O}\cup\mathbf{L}\cup\mathbf{S},
$
where $\mathbf{O}$, $\mathbf{L}$, and $\mathbf{S}$ denote the observed variables, latent variables, and unobserved selection variables, respectively. The SCM is associated with an underlying directed acyclic graph (DAG) $\DAG$ over $\mathbf{V}$. Each variable $V_i\in\mathbf{V}$ is generated according to a structural equation
$
V_i = f_i\bigl(\Pa(V_i,\DAG),u_i\bigr),
$
where $\Pa(V_i,\DAG)$ denotes the parents of $V_i$ in $\DAG$, and $u_i$ is an exogenous error term, and the exogenous errors are mutually independent. Accordingly, the joint distribution over $\mathbf{V}$ factorizes according to the underlying DAG:
\[
P(\mathbf{V})
=
P(\mathbf{O},\mathbf{L},\mathbf{S})
=
\prod_{V_i\in\mathbf{V}}
P\bigl(V_i\mid \Pa(V_i,\DAG)\bigr).
\]

Following the standard setting for causal discovery in the presence of latent variables and selection bias~\citep{spirtes1995fci,richardson2002ancestral,zhang2008completeness,colombo2012learning}, the latent variables $\mathbf{L}$ are unobserved and marginalized out, while the selection variables $\mathbf{S}$ are unobserved and conditioned upon. Thus, the distribution available from observational data is not the population marginal $P(\mathbf{O})$, but the selected distribution
\[ 
P_{\mathrm{obs}}(\mathbf{O}) 
= P(\mathbf{O}\mid \mathbf{S}=\mathbf{s}) 
= \int P(\mathbf{O},\mathbf{L}\mid \mathbf{S}=\mathbf{s})\,d\mathbf{L}, 
\]
where $\mathbf{s}$ denotes the selection event under which samples are included in the observed dataset. 
For discrete variables, the integral over $\mathbf{L}$ is replaced by summation. Therefore, the conditional independence relations available from data are those among the observed variables under $P_{\mathrm{obs}}(\mathbf{O})$.

The formulation above describes the causal system at the level of the underlying DAG $\DAG$. However, after marginalizing latent variables and conditioning on selection variables, the causal and (conditional) independence relations among the observed variables $\mathbf{O}$ are generally no longer adequately represented by a DAG over $\mathbf{O}$~\citep{spirtes1995fci,richardson2002ancestral}.
We next introduce MAGs and PAGs, which provide the standard graphical language for representing observed-variable causal structures in this setting.

\subsection{Ancestral Graphs}
\label{sec:mag-pag}

We first introduce the relevant graphical notions. We consider simple graphs, meaning that there are no self-loops or multiple edges between two vertices. The two ends of an edge are called \emph{marks}, which can be a tail (`$\--$') or an arrowhead (`$>$'). In partial graphs such as PAGs, a circle mark (`$\circ$') is additionally used to represent an unresolved mark. For convenience, we use an asterisk (`$*$') to denote any allowed edge mark.

\begin{definition}[\textbf{Mixed Graph}]
\label{def:mixed-graph}
A \emph{mixed graph} is a graph that can contain three types of edges: directed edges ($\rightarrow$), bi-directed edges ($\leftrightarrow$), and undirected edges ($\--$), with at most one edge between any two vertices.
\end{definition}

In a mixed graph, $X$ is called a \emph{parent} of $Y$ and $Y$ a \emph{child} of $X$ if $X\rightarrow Y$; $X$ is called a \emph{spouse} of $Y$ if $X\leftrightarrow Y$; and $X$ is called an \emph{undirected neighbor} of $Y$ if $X\--Y$. A \emph{directed path} from $X$ to $Y$ is a path of the form $X\rightarrow \cdots \rightarrow Y$. We say that $X$ is an \emph{ancestor} of $Y$, and $Y$ a \emph{descendant} of $X$, if there is a directed path from $X$ to $Y$. A non-endpoint vertex on a path is called a \emph{collider} if both adjacent edges on the path have arrowheads pointing into it; otherwise, it is called a \emph{non-collider}. 
A path $\pi=\langle V_0,\ldots,V_k\rangle$ is \emph{uncovered (unshielded)} if $V_{i-1}$ and $V_{i+1}$ are non-adjacent for every $1\le i\le k-1$. 
It is called an \emph{uncovered (unshielded) collider path} if, in addition, every non-endpoint vertex $V_i$ on $\pi$ is a collider.

\begin{definition}[\textbf{$\mathbf{m}$-Separation}]
\label{def:m-separation}
In a mixed graph, let $\mathbf{X}$, $\mathbf{Y}$, and $\mathbf{Z}$ be disjoint sets of vertices. A path between vertices $X$ and $Y$ is \emph{$m$-connecting (active)} given $\mathbf{Z}$ if every non-collider on the path is not in $\mathbf{Z}$, and every collider on the path has a descendant in $\mathbf{Z}$. The sets $\mathbf{X}$ and $\mathbf{Y}$ are \emph{$m$-separated} by $\mathbf{Z}$ in $\g$ if there is no $m$-connecting path between any $X\in\mathbf{X}$ and $Y\in\mathbf{Y}$ given $\mathbf{Z}$.
\end{definition}

The notion of $m$-separation generalizes the classical $d$-separation criterion for DAGs to mixed graphs.

\begin{definition}[\textbf{Ancestral Graph}]
\label{def:ancestral-graph}
An \emph{ancestral graph} is a mixed graph satisfying the following conditions:
\begin{enumerate}[leftmargin=22pt,itemsep=0pt,topsep=0pt,parsep=0pt,label=(\roman*)]
    \item there is no directed cycle, \ie, there is no directed edge $X\rightarrow Y$ such that $Y$ is an ancestor of $X$;
    \item there is no almost directed cycle, i.e., there is no bi-directed edge $X\leftrightarrow Y$ such that one endpoint is an ancestor of the other;
    \item for any undirected edge $X\--Y$, neither $X$ nor $Y$ has any parents or spouses.
\end{enumerate}
\end{definition}

Intuitively, the conditions in \cref{def:ancestral-graph} ensure that the arrows and tails in the ancestral graph do not indicate conflicting ancestral relationships.

\begin{definition}[\textbf{Maximal Ancestral Graph}]
\label{def:mag}
A \emph{maximal ancestral graph} (MAG, denoted by $\MAG$) is an ancestral graph such that, for every pair of non-adjacent vertices, there exists a set of vertices that $m$-separates them.
\end{definition}

DAGs are special cases of MAGs.
Given an underlying DAG $\DAG$ over $\mathbf{O}\cup\mathbf{L}\cup\mathbf{S}$, marginalizing over $\mathbf{L}$ and conditioning on $\mathbf{S}=\mathbf{s}$, we can uniquely induce a MAG $\MAG$ over $\mathbf{O}$~\citep{zhang2008completeness}.
For example, the DAG in \cref{fig:dag} induces the MAG in \cref{fig:mag} over the observed variables $\mathbf{O}=\{A,B,C,D,E,F,G,H,I,J,K,T\}$ by marginalizing out the latent variables $\mathbf{L}=\{L_1,L_2\}$ and conditioning on the selection variable $\mathbf{S}=\{S\}$.
Importantly, this MAG preserves the conditional independence statements of $P_{\mathrm{obs}}(\mathbf{O})$ and ancestral relations among $\mathbf{O}$ in $\DAG$~\citep{richardson2002ancestral,zhang2006causal,zhang2008completeness}.~\footnote{The construction from a DAG with latent and selection variables to the corresponding induced MAG, together with its preservation properties, is detailed in \cref{Appendix:DAG-to-MAG}.}

Notably, the MAG $\MAG$ is generally not uniquely identifiable from $P_{\mathrm{obs}}(\mathbf{O})$. Instead, one can at most identify its Markov equivalence class.

\begin{definition}[\textbf{Markov Equivalence}]
\label{def:markov-equivalence}
Two MAGs over the same vertex set are \emph{Markov equivalent} if they entail the same set of $m$-separation relations. Given a MAG $\MAG$, we denote its Markov equivalence class by $[\MAG]$.
\end{definition}

An edge mark in $\MAG$ is called \emph{invariant} if the same mark appears in all MAGs in $[\MAG]$. The invariant information in $[\MAG]$ can be represented by a partial ancestral graph.

\begin{definition}[\textbf{Partial Ancestral Graph}]
\label{def:pag}
A \emph{partial ancestral graph} (PAG, denoted by $\PAG$)  represents a Markov equivalence class $[\MAG]$ of MAGs. It has the same adjacencies as $\MAG$ and any member of $[\MAG]$. A tail `$-$' or arrowhead `$>$' appears in $\PAG$ if the corresponding mark is invariant in $[\MAG]$; otherwise, a circle `$\circ$' appears. \footnote{In the terminology of \citep{zhang2006causal,zhang2008completeness}, such PAGs are referred to as \emph{complete} (or \emph{maximally oriented}) partial ancestral graphs. In this paper, we do not consider PAGs that fail to display all invariant edge marks in the equivalence class of MAGs, and we therefore use the term `PAG' to mean a `maximally oriented PAG'.}
\end{definition}

The above definition describes the global graphical object that can be recovered from $P_{\mathrm{obs}}(\mathbf{O})$ under the standard causal Markov and faithfulness conditions~\citep{spirtes2000causation}.
For example, the PAG in \cref{fig:pag} represents the Markov equivalence class induced by the MAG in \cref{fig:mag}. 
It retains all edge marks shared by every MAG in the class and uses circle marks for those that remain ambiguous. 
Thus, the PAG provides a maximally informative summary of the causal information identifiable from $P_{\mathrm{obs}}(\mathbf{O})$.

\subsection{Problem Formulation}
\label{sec:problem-formulation}

We now state the target-specific problem considered in this paper.
Let $\DAG$ be the underlying causal DAG over $\mathbf{O}\cup\mathbf{L}\cup\mathbf{S}$, and let $\MAG$ be the induced MAG over the observed variables $\mathbf{O}$ obtained by marginalizing latent variables and conditioning on unobserved selection variables~\citep{richardson2002ancestral,zhang2008completeness}. Let $\PAG$ denote the PAG representing the Markov equivalence class $[\MAG]$.

Under the standard causal Markov and faithfulness conditions~\citep{spirtes2000causation}, given a target variable $T\in\mathbf{O}$, our goal is to learn the local causal structure around $T$ from the observational distribution $P_{\mathrm{obs}}(\mathbf{O})$ without recovering the entire global structure over $\mathbf{O}$. In particular, we aim to identify the direct causes and effects of $T$ that are identifiable from $P_{\mathrm{obs}}(\mathbf{O})$. 
The desired local structure should be consistent with the target-specific causal information obtainable from global causal discovery, \ie, PAG $\PAG$, while avoiding the cost of learning the full graph.

\section{Local Causal Structure Learning}
\label{sec:local-learning}

In this section, we study how to learn the local causal structure around a target variable in the presence of latent variables and selection bias. 
To address this problem, we need to answer the following two key questions.

\begin{center}
\begin{minipage}{0.95\linewidth}
\starterm~\textit{\textbf{Q1.} How can we avoid learning the entire global structure over all observed variables?}
\end{minipage}
\end{center}
To answer \textit{\textbf{Q1}}, \cref{sec:markov-blanket} introduces the Markov blanket in MAGs under latent variables and selection bias.
The Markov blanket serves as an instrumental local region around the target variable, allowing us to learn a local structure from the marginal distribution over this region instead of performing causal discovery over all observed variables.

\begin{center}
\begin{minipage}{0.95\linewidth}
\starterm~\textit{\textbf{Q2.} Given such a local region, which structural information learned from it is guaranteed to be consistent with the global PAG?}
\end{minipage}
\end{center}
To answer \textit{\textbf{Q2}}, \cref{sec:foundations-local-learning} establishes a theoretical bridge between the local structure learned from Markov-blanket-based marginal information and the corresponding global PAG. 
These results characterize which local adjacencies and orientation information are sound, and in some cases complete, with respect to the global structure.

Building on these theoretical results, \cref{sec:algorithm} presents \method, a \textbf{Lo}cal \textbf{Ca}usal structure learning method designed to allow for \textbf{L}atent variables and \textbf{S}election bias, together with its theoretical guarantee and complexity analysis.

\subsection{Markov Blanket under Latent Variables and Selection Bias}
\label{sec:markov-blanket}

The concept of the \emph{Markov blanket} (MB) was first introduced by~\cite{pearl1988Probaili} and has since become a foundational principle for feature selection and dimensionality reduction, enabling more efficient and robust model construction~\citep{guyon2003introduction, pellet2008using, gao2016efficient,dong2025intersecting}. Intuitively, the Markov blanket of a target variable $X$ is the minimal set of variables that contains all the relevant information about $X$ not already provided by any other variable~\citep{aliferis2010local}. This fundamental property is formally stated in~\cref{def:Markov-Blanket}.

\begin{definition}[\textbf{Markov Blanket}]
\label{def:Markov-Blanket}
Let $\mathbf{V}$ be a set of variables and let $X\in\mathbf{V}$. 
A set $\mathbf{B}\subseteq \mathbf{V}\setminus\{X\}$ is called the \emph{Markov blanket} of $X$, denoted by $\MB(X)$, if it satisfies the following two conditions:
\begin{enumerate}[label=(\roman*),leftmargin=18pt,itemsep=1pt,topsep=2pt]
    \item \textbf{MB Property:} 
    for every $V\in \mathbf{V}\setminus(\mathbf{B}\cup\{X\})$,
    \[
        X \CI V \mid \mathbf{B}.
    \]
    
    \item \textbf{Minimality:}~\footnote{While some literature distinguishes between a ``Markov blanket'' (which may not require minimality) and a ``Markov boundary'' (the strictly minimal set), we adopt the convention where the Markov blanket inherently implies minimality.} 
    no proper subset $\mathbf{B}'\subsetneq \mathbf{B}$ satisfies the above property, i.e., there does not exist $\mathbf{B}'\subsetneq \mathbf{B}$ such that
    \[
        X \CI V \mid \mathbf{B}'
        \quad
        \text{for all }
        V\in \mathbf{V}\setminus(\mathbf{B}'\cup\{X\}).
    \]
\end{enumerate}
\end{definition}

While~\cref{def:Markov-Blanket} characterizes the MB through conditional independence relations, it is often useful in causal graphical models to have an explicit graphical characterization. Assuming the absence of latent variables and selection bias, the graphical formulation of the MB for a vertex $X$ in a DAG takes the form given in~\cref{def:app-MB-DAG}.

\begin{definition}[\textbf{Markov Blanket for DAGs}~\citep{pearl1988Probaili, pearl2000causality}]
\label{def:app-MB-DAG}
    Under causal faithfulness, in a DAG $\DAG$, the Markov blanket of a vertex $X$ is unique and consists of its parents, children, and the parents of the children of $X$. 
\end{definition}

\begin{remark}
    The idealized assumptions of no latent variables and no selection bias are often violated in real-world settings. Previous studies have established graphical characterizations of the MB of a vertex $X$ in the presence of latent variables, together with theoretical guarantees~\citep{richardson2003markov,pellet2008finding,yu2018mining}. However, these works do not account for selection bias. To fill this gap, we introduce~\cref{def:MB-MAG}.
\end{remark}

\begin{definition}[\textbf{Markov Blanket for MAGs}]
\label{def:MB-MAG}
    Under causal faithfulness, allowing for latent variables and selection bias, consider a MAG $\MAG$ over observed variables $\vars[O]$. The Markov blanket of a vertex $X$, denoted $\MB(X,\MAG)$, is the union of:
    \begin{enumerate}[label=(\roman*)]
        \item the parents, children, and undirected neighbors of $X$;
        \item the parents of the children of $X$;
        \item the district of $X$ (the set of vertices reachable from $X$ using only bidirected edges) and the districts of $X$'s children; and
        \item the parents of every vertex in these districts.
    \end{enumerate} 
\end{definition}

\begin{figure}[htp!]
\centering
\begin{tikzpicture}[x=0.55pt,y=0.55pt,yscale=-1,xscale=1]

\draw (65,88) node(A) [obs] {$A$};
\draw (125,63) node(B) [mb] {$B$};
\draw (234,77) node(C) [obs] {$C$};
\draw (145,114) node (L) [mb] {$L$};
\draw (44,143) node(D) [obs] {$D$};
\draw (146,162) node(E) [obs] {$E$};
\draw (291,110) node(F) [mb] {$F$};
\draw (79,199) node(G) [obs] {$G$};
\draw (211,132) node(H) [mb] {$H$};
\draw (261,43) node(I) [mb] {$I$};
\draw (219,181) node(J) [mb] {$J$};
\draw (292,157) node(K) [obs] {$K$};
\draw (192,62) node(X) [target] {$X$};
\draw (89,144) node(M) [mb] {$M$};
\draw (357,92) node(O) [obs] {$O$};
\draw (357,149) node(P) [obs] {$P$};

\draw[-] (M) -- (A);
\draw[-arcsq] (M) -- (L);
\draw[arcsq-arcsq] (L) -- (B);
\draw[arcsq-arcsq] (L) -- (H);
\draw[-arcsq] (L) -- (E); 
\draw[-arcsq] (G) -- (D);
\draw[-arcsq] (G) -- (E);
\draw[arcsq-arcsq] (E) -- (J);
\draw[-arcsq] (X) -- (H);
\draw[-] (X) -- (I);
\draw[arcsq-arcsq] (H) -- (F);
\draw[-arcsq] (F) -- (C);
\draw[-arcsq] (J) -- (F);
\draw[-arcsq] (K) -- (J);
\draw[-arcsq] (F) -- (O);
\draw[-arcsq] (P) -- (O);

\end{tikzpicture}

\caption{An illustration of the MB in a MAG. The target variable is $X$, and the blue vertices constitute $\MB(X,\MAG)$.}
\vspace{-1.0em}
    \label{fig:Markov-blanket}
\end{figure}

\begin{example}
 Consider the MAG $\MAG$ in \cref{fig:Markov-blanket}, where $X$ is the target variable of interest. The Markov blanket of $X$ in $\MAG$ is $\MB(X, \MAG) = \{B, F, H, L, M, I, J\}$. This set includes the child $H$, the undirected neighbor $I$, the district members associated with $X$ and its child $H$, namely $\{B,F,L\}$, and the parents of vertices in these districts, namely $\{M,J\}$.

\end{example}

Next, we justify \cref{def:MB-MAG} theoretically by showing that the proposed set satisfies the MB property and the minimality condition, as established in \cref{lemma:MB-MAG-property,lemma:MB-MAG-minimality}.

\begin{lemma}[\textbf{MB Property}]
    \label{lemma:MB-MAG-property}
    Let $\MAG$ be a MAG over observed variables $\vars[O]$. For any $X\in\vars[O]$, $X$ is m-separated from all vertices in $\vars[O]\setminus (\MB(X,\MAG)\cup\{X\})$ given $\MB(X,\MAG)$.
\end{lemma}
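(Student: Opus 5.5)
Write $\mathbf{B}=\MB(X,\MAG)$. The plan is a direct path argument: assume some $Y\in\vars[O]\setminus(\mathbf{B}\cup\{X\})$ and a path $\pi=\langle X=V_0,V_1,\ldots,V_k=Y\rangle$ that is $m$-connecting given $\mathbf{B}$, and derive a contradiction. Each step will show that an $m$-connecting path cannot leave $\mathbf{B}$ through any of its possible exits. This reduces to a case analysis on the first edge of $\pi$ and on how $\pi$ can be continued.

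\textbf{Step 1 (first vertex).} Since $Y\notin\mathbf{B}$, we have $k\ge 2$. $V_1$ is adjacent to $X$, so it is a parent, child, spouse, or undirected neighbor of $X$, and in each case $V_1\in\mathbf{B}$ by items (i) and (iii) of \cref{def:MB-MAG}. Because $V_1\in\mathbf{B}$ is conditioned on, it must be a collider on $\pi$.
\begin{itemize}
\item If $X\to V_1$ or $X\--V_1$: by \cref{def:ancestral-graph}(iii), an undirected neighbor $V_1$ has no parents or spouses, so it cannot be a collider. This rules out $X\--V_1$, and also $X\leftarrow V_1$ (the tail at $V_1$ makes it a non-collider).
\item Hence the first edge is $X\to V_1$ or $X\leftrightarrow V_1$. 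In both cases $V_1$ is a child of $X$ or lies in the district of $X$, and $V_1$ is a collider.
\end{itemize}

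\textbf{Step 2 (propagation along a collider chain).} By induction along $\pi$, I will maintain the invariant: $V_1,\ldots,V_j$ are colliders in $\mathbf{B}$, each lying in $D^*$, the union of the district of $X$ and the districts of the children of $X$, joined consecutively by bidirected edges after the first edge. Consider $V_{j+1}$ given the edge $V_j\leftarrow\!\!*\,V_{j+1}$.
\begin{itemize}
\item If the edge is $V_j\leftarrow V_{j+1}$, then $V_{j+1}$ is a parent of a vertex in $D^*$ (or of a child of $X$), so $V_{j+1}\in\mathbf{B}$ by items (ii) and (iv). It is a non-collider on $\pi$ because the tail is at it, and it is conditioned on, so $\pi$ is blocked unless $V_{j+1}=Y$. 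But $Y\notin\mathbf{B}$, a contradiction.
\item If the edge is $V_j\leftrightarrow V_{j+1}$, then $V_{j+1}$ lies in the same district, hence in $\mathbf{B}$. It is not $Y$, so it must be a collider, and the invariant is preserved.
\end{itemize}
Since $\pi$ is finite, it must eventually end at $Y$. By the cases above $Y\in\mathbf{B}$, which is the desired contradiction.

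\textbf{Main obstacle.} The delicate point is the collider condition in $m$-separation. A collider is active if it has a \emph{descendant} in the conditioning set, not only if it lies in it, so the argument above must check that colliders are genuinely in $\mathbf{B}$. This holds here, since every collider encountered lies in $D^*\subseteq\mathbf{B}$. The second issue is that $X$ itself is not in the conditioning set, but this is harmless because $X$ is an endpoint. I will also verify that the undirected-edge case cannot produce a collider, using \cref{def:ancestral-graph}(iii), and that no vertex outside $\mathbf{B}$ can appear as a non-collider adjacent to the chain. The latter reduces to the tail/arrow dichotomy handled in Step 2: any vertex adjacent to the chain is either in $\mathbf{B}$, or forces $\pi$ to be blocked.
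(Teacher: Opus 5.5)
Your proof is correct and takes essentially the same route as the paper's: a direct argument that any path from $X$ to some $Y\notin\MB(X,\MAG)\cup\{X\}$ must pass through a conditioned non-collider. Your invariant (every interior vertex of the path lying in $\MB(X,\MAG)$ must be a collider, so the path can only continue along bidirected edges inside the district of $X$ or of a child of $X$) handles uniformly what the paper splits into its cases (2) and (3); note that the first bullet of Step 1 should read $X\leftarrow V_1$ or an undirected edge between $X$ and $V_1$, not $X\to V_1$.
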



\begin{lemma}[\textbf{Minimality}]
    \label{lemma:MB-MAG-minimality}
    Let $\MAG$ be a MAG over observed variables $\vars[O]$. For any $X\in\vars[O]$, no proper subset $\mathbf{B}'$ of $\MB(X,\MAG)$ m-separates $X$ from all vertices in $\vars[O]\setminus (\mathbf{B}'\cup\{X\})$.
\end{lemma}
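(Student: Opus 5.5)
Write $\mathbf{B}=\MB(X,\MAG)$. The plan is to show that every $Y\in\mathbf{B}$ is \emph{indispensable}: whenever a set $\mathbf{B}'\subseteq\mathbf{O}\setminus\{X\}$ omits $Y$, $X$ is $m$-connected to some vertex outside $\mathbf{B}'\cup\{X\}$ given $\mathbf{B}'$. For a proper subset $\mathbf{B}'\subsetneq\mathbf{B}$, pick any $Y\in\mathbf{B}\setminus\mathbf{B}'$; since $Y\notin\mathbf{B}'\cup\{X\}$, it then suffices to exhibit an $m$-connecting path from $X$ to $Y$, or to another vertex outside $\mathbf{B}'$, given $\mathbf{B}'$. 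I will split into cases following the four clauses of \cref{def:MB-MAG}. Each case starts from a canonical path $\pi$ from $X$ to $Y$ whose interior vertices all lie in $\mathbf{B}$ and are colliders on $\pi$.

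\textbf{Case (i).} $Y$ is adjacent to $X$. The single edge is $m$-connecting given any set. This covers parents, children and undirected neighbours.

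\textbf{Case (ii).} $Y$ is a parent of a child $C$ of $X$, but not adjacent to $X$. The path is $X\to C\leftarrow Y$. If $C\notin\mathbf{B}'$, then $C$ itself is a vertex outside $\mathbf{B}'$ adjacent to $X$, and we are done. Otherwise $C\in\mathbf{B}'$ is a collider in the conditioning set, so the path is active.

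\textbf{Cases (iii) and (iv).} Here $Y$ lies in the district of $X$ or of a child $C$, or is a parent of such a district member $W$. Take a shortest path of the form
\[X \leftrightarrow W_1\leftrightarrow\cdots\leftrightarrow W_k\ (\leftarrow Y)\quad\text{or}\quad X\to C\leftrightarrow W_1\leftrightarrow\cdots\leftrightarrow W_k\ (\leftarrow Y).\]
Every interior vertex is a collider. Scan the interior vertices from the $X$ end. If some interior vertex is not in $\mathbf{B}'$, take the first such vertex $V$. The initial segment from $X$ to $V$ has all of its interior colliders in $\mathbf{B}'$, so it is active, and it connects $X$ to $V\notin\mathbf{B}'\cup\{X\}$. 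If instead all interior vertices are in $\mathbf{B}'$, the full path is active given $\mathbf{B}'$, and it reaches $Y\notin\mathbf{B}'$.

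The path also passes through any endpoints that lie outside $\mathbf{B}'$, so the argument works for every choice of $\mathbf{B}'$. This is in fact stronger than needed, since it holds for any $\mathbf{B}'$ omitting $Y$, not just subsets of $\mathbf{B}$. Together with \cref{lemma:MB-MAG-property}, this gives uniqueness of the blanket under faithfulness.

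\textbf{Main obstacle.} The delicate point is making sure each such walk is a genuine path in $\MAG$, so that $m$-connection applies. Two things can go wrong. First, vertices may repeat, for example $Y$ already appearing in the district chain, or $C$ lying in the district of $X$. Choosing a shortest representation removes repeats, but a shortcut can change which vertices are colliders. Second, there is a possible extra edge between $X$ and $Y$; but then Case (i) applies directly.

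The remaining subcase is when a shortcut edge creates a non-collider, for example $W_i\to W_j$ linking two district members. I expect to handle it by the standard fact that a collider path with every collider in $\mathbf{B}'\cup\{X,Y\}$ yields an $m$-connecting path given $\mathbf{B}'$ (Richardson--Spirtes). In the fallback I invoke it only when the path's colliders lie in $\mathbf{B}'$; the first-exit trick above already guarantees this for the truncated segment. If that lemma is not assumed, I would instead argue minimal-length-path by contradiction.

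The ancestrality conditions also need a check: no undirected edge at a vertex with a parent or spouse. This guarantees that undirected neighbours never sit inside these collider chains, so they are needed only in Case (i).
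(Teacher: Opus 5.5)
Your proposal is correct and follows essentially the same route as the paper: for a removed blanket member, take a collider path from $X$ whose interior lies in $\MB(X,\MAG)$ and truncate it at the first interior vertex outside $\mathbf{B}'$ (the paper phrases this as passing to a shorter collider path ending at another removed vertex), which gives an $m$-connecting path. The ``main obstacle'' you flag about shortcut edges is not an obstacle: $m$-connection of a fixed path depends only on the edges along it, and a shortest bidirected path inside a district is automatically a genuine path. The only cases that need handling are coincidences involving $X$ or $Y$ (e.g.\ $X$ lying on the bidirected chain from a child $C$, or $Y$ already on the chain), and these are resolved by switching to a shorter path of the other form, as you indicate.
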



Combining~\cref{lemma:MB-MAG-property} and~\cref{lemma:MB-MAG-minimality}, we can conclude that the defined $\MB(X,\MAG)$ is indeed the MB of $X$ in $\MAG$.

Furthermore, under the causal Markov and faithfulness assumptions, the MB defined in \cref{def:MB-MAG} admits the following conditional-independence characterization.

\begin{proposition}
    \label{proposition:MB-MAG}
    Let $\MAG$ be the MAG induced over the observed variables $\vars[O]$, and let $P_{\mathrm{obs}}(\mathbf{O})$ be Markov and faithful to $\MAG$. For any distinct $X, Y \in \vars[O]$, the following holds:
    \begin{align} 
        Y \in \MB(X,\MAG) \Leftrightarrow X \nCI Y \mid \vars[O] \setminus \{X, Y\} 
    \end{align} 
\end{proposition}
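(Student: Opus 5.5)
By faithfulness and the Markov property, $X \nCI Y \mid \vars[O]\setminus\{X,Y\}$ holds in $P_{\mathrm{obs}}$ exactly when $X$ and $Y$ are m-connected given $\mathbf{Z}:=\vars[O]\setminus\{X,Y\}$ in $\MAG$. So the proposition reduces to a purely graphical claim:
\[
Y\in\MB(X,\MAG) \iff X \text{ and } Y \text{ are m-connected given } \vars[O]\setminus\{X,Y\} \text{ in } \MAG .
\]
Since every vertex other than $X,Y$ is conditioned on, a path is m-connecting given $\mathbf{Z}$ if and only if every interior vertex is a collider. Every such vertex lies in $\mathbf{Z}$, so each collider automatically has a descendant (itself) in $\mathbf{Z}$. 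The claim therefore becomes: $Y\in\MB(X,\MAG)$ if and only if there is a collider path between $X$ and $Y$ in $\MAG$.

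\emph{($\Leftarrow$) direction.} I would take a collider path $X=V_0,V_1,\dots,V_k=Y$ and argue by cases on its length and edge types.
\begin{itemize}
\item If $k=1$, then $Y$ is adjacent to $X$. It is therefore a parent, child, undirected neighbor or spouse of $X$, and spouses lie in the district of $X$, so $Y\in\MB(X,\MAG)$.
\item If $k\ge 2$, each interior $V_i$ has arrowheads on both sides. Then every interior edge $V_i\leftrightarrow V_{i+1}$ with $1\le i\le k-2$ is bidirected, so $V_1,\dots,V_{k-1}$ all lie in a single district.
\item The first edge has an arrowhead at $V_1$. Undirected edges cannot be incident to a vertex with an arrowhead (condition (iii) of \cref{def:ancestral-graph}), so this edge is either $X\to V_1$, making $V_1$ a child of $X$, or $X\leftrightarrow V_1$, placing $V_1$ in the district of $X$. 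Either way, $V_1,\dots,V_{k-1}$ belong to the district of $X$ or of a child of $X$.
\item The last edge is $Y\to V_{k-1}$ or $Y\leftrightarrow V_{k-1}$. In the first case $Y$ is a parent of a vertex in one of these districts, covered by item (iv) (or item (ii) when $k=2$ and $V_1$ is a child). In the second case $Y$ itself lies in that district, covered by item (iii).
\end{itemize}

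\emph{($\Rightarrow$) direction.} Conversely, for each category in \cref{def:MB-MAG} I would exhibit an explicit collider path.
\begin{itemize}
\item For adjacent $Y$, the single edge is the path.
\item For a parent $Y$ of a child $C$, take $X\to C\leftarrow Y$.
\item For $Y$ in the district of $X$ or of a child $C$, take $X\leftrightarrow\cdots\leftrightarrow Y$ or $X\to C\leftrightarrow\cdots\leftrightarrow Y$.
\item For $Y$ a parent of a district member $W$, append $W\leftarrow Y$ to the path just described.
\end{itemize}
Interior vertices on these paths are all colliders by construction. The only care needed is that the resulting walk is a genuine path with no repeated vertices. I would handle this by choosing a shortest bidirected chain inside the district, which cannot repeat vertices. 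If $Y$ or $X$ already occurs on the chain, I would truncate there; this yields a shorter collider path, or $Y$ lies directly on the chain and is covered anyway.

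The main obstacle I expect is the bookkeeping in this ($\Rightarrow$) step, and in particular the degenerate cases where $Y$ coincides with a vertex on the constructed chain or with the child $C$. Truncating at the first occurrence of $Y$ handles these, since every prefix of a collider path ending in $Y$ is still a collider path. Alternatively, one could shortcut the whole direction by invoking \cref{lemma:MB-MAG-minimality}. Dropping $Y$ from the blanket must break the separation, and one would argue that this forces an $X$–$Y$ dependence given everything else. That route requires care, however, so I would rely on the direct constructive argument.
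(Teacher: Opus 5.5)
Your proposal is correct and follows the paper's route: Markov and faithfulness reduce the claim to whether $\vars[O]\setminus\{X,Y\}$ m-separates $X$ and $Y$ in $\MAG$, and m-connection given all remaining vertices is then matched with membership in $\MB(X,\MAG)$. The paper's two-line proof only asserts both graphical facts, so your collider-path characterization (including the truncation bookkeeping) supplies the missing detail. It also has the small advantage that the second direction does not have to be derived from \cref{lemma:MB-MAG-property} via weak union.
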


\cref{proposition:MB-MAG} indicates that the MB of a vertex $X$ in a MAG $\MAG$ can be identified by testing the conditional independence relations between $X$ and other vertices in $\vars[O]$ given the rest of the observed variables. This equivalence is crucial for learning the MB from data, as it allows us to use conditional independence tests to determine the MB of a target variable. Based on this theoretical guarantee, algorithms such as the TC algorithm~\citep{pellet2008using} can be applied in practice to identify the MB efficiently in the presence of latent variables and selection bias, with a linear number of conditional independence tests with respect to the number of observable variables.

\begin{remark}
The results in this subsection answer \textit{\textbf{Q1}}. 
For target-specific causal discovery, the Markov blanket provides a compact local region, allowing us to avoid global structure learning over all variables.
Notably, the PAG $\PAG$ represents the Markov equivalence class of $\MAG$ and therefore entails the same conditional independence relations among observed variables. 
By \cref{proposition:MB-MAG}, the same Markov blanket is characterized equivalently by the MAG, the PAG, and the observed distribution:
\[
\MB(X,\MAG)=\MB(X,\PAG)=\MB(X,P_{\mathrm{obs}}(\mathbf{O})).
\]
Therefore, in the remainder of the paper, we simply write $\MB(X)$ for this common Markov blanket, and define
\[
\MBplus(X)=\MB(X)\cup\{X\}
\]
as the Markov-blanket-based instrumental local region. 
\end{remark}

\subsection{Foundations of Local Structure Learning}
\label{sec:foundations-local-learning}

Throughout this subsection, $\g[L]_X$ denotes the PAG learned, in the oracle setting, from the marginal selected distribution $P_{\mathrm{obs}}(\MBplus(X))$ over $\MBplus(X)$ by a sound and complete PAG-learning procedure.
Having identified $\MBplus(X)$ as a Markov-blanket-based local region, we now turn to \textit{\textbf{Q2}}: which structural information learned from this region is guaranteed to be consistent with the global PAG? 
This question is nontrivial because learning a causal structure from the marginal distribution over $\MBplus(X)$ does not, in general, automatically reproduce the structure obtained by global causal discovery over all observed variables. 
We therefore establish theoretical results that connect the locally learned PAG $\g[L]_X$ to the global PAG $\PAG$.

Specifically, we focus on the following question:
\begin{center}
\begin{minipage}{0.98\linewidth}
\starterm~\textit{Which \emph{adjacency relations} and \emph{orientation information} in the locally learned structure $\g[L]_{X}$ are guaranteed to be consistent with the global PAG $\PAG$?}
\end{minipage}
\end{center}

We first establish the consistency of locally learned adjacencies. 
A key observation is that, in a MAG, whether a target variable $X$ can be $m$-separated from another observed variable can be determined using only the MB of $X$.

\begin{theorem}
    \label{theorem:local-learning-edge}
    Let $\MAG$ be the underlying MAG over $\mathbf{O}$, and let $X\in\mathbf{O}$ be a target variable.
    For any $V \in \MB(X)$, there exists a set
    $\mathbf{Z}\subseteq \mathbf{O}\setminus\{X,V\}$ that $m$-separates $X$ and $V$
    if and only if there exists a set
    $\mathbf{Z}'\subseteq \MB(X)\setminus\{V\}$ that $m$-separates $X$ and $V$.
\end{theorem}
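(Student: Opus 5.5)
The ``if'' direction is immediate, since $\MB(X)\setminus\{V\}\subseteq\mathbf{O}\setminus\{X,V\}$. For the ``only if'' direction, I would work with the augmented-graph (moralization) criterion for $m$-separation in MAGs (Richardson and Spirtes, 2002), applied to ancestral sets. The criterion says that $X$ and $V$ are $m$-separated by $\mathbf{Z}$ in $\MAG$ iff they are separated by $\mathbf{Z}$ in the augmented graph $\aug[\MAG_{\An(\{X,V\}\cup\mathbf{Z})}]$. In that augmented graph, two vertices are adjacent iff they are \emph{collider-connected} in the induced subgraph, i.e.\ joined by a path on which every non-endpoint vertex is a collider.

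\textbf{Step 1: an ancestral separating set.} Assume some $\mathbf{Z}\subseteq\mathbf{O}\setminus\{X,V\}$ $m$-separates $X$ and $V$, so $X$ and $V$ are non-adjacent in $\MAG$. I will invoke the standard MAG fact that non-adjacent vertices are then $m$-separated by $\mathbf{Z}_0:=\An(\{X,V\})\setminus\{X,V\}$. Equivalently, $X$ and $V$ are not collider-connected within $\MAG_{\An(\{X,V\})}$, i.e.\ they are non-adjacent in $\aug[\MAG_{\An(\{X,V\})}]$. This fact comes from maximality together with the construction of MAGs. I expect this step to be the main technical point, and I would cite it from Richardson and Spirtes (2002) rather than reprove it.

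\textbf{Step 2: the candidate set.} Define $\mathbf{Z}':=\mathbf{Z}_0\cap\MB(X)\subseteq\MB(X)\setminus\{V\}$. Because $\mathbf{Z}'\subseteq\An(\{X,V\})$, the set $\mathbf{A}:=\An(\{X,V\}\cup\mathbf{Z}')$ equals $\An(\{X,V\})$. Hence the relevant augmented graph is exactly $\aug[\MAG_{\An(\{X,V\})}]$, and $X,V$ are non-adjacent in it by Step 1.

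\textbf{Step 3: neighbours of $X$ in the augmented graph.} Next I show that every neighbour $W$ of $X$ in $\aug[\MAG_{\mathbf{A}}]$ lies in $\MB(X)$. Such a $W$ is joined to $X$ by a collider path, which has one of three forms:
\begin{itemize}
\item a single edge, so $W$ is a parent, child, spouse, or undirected neighbour of $X$;
\item a path starting $X\rightarrow C$ or $X\leftrightarrow C$, continuing only through bidirected edges, and ending either at a vertex of the district of $X$ or of a child of $X$, or at a parent of such a vertex via a final edge $W\rightarrow\cdot$ (or $W\leftrightarrow\cdot$);
\item the case involving undirected edges, which is trivial by condition (iii) of \cref{def:ancestral-graph}: vertices with undirected edges have no parents or spouses, so they cannot be interior colliders.
\end{itemize}
In every case, items (i)--(iv) of \cref{def:MB-MAG} place $W$ in $\MB(X)$. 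I would check this case analysis carefully, since the membership of $W$ in $\MB(X)$ must hold literally.

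\textbf{Step 4: conclusion.} Any path from $X$ to $V$ in $\aug[\MAG_{\mathbf{A}}]$ has a first vertex $W\neq V$, because $X$ and $V$ are non-adjacent there. This $W$ lies in $\mathbf{A}\setminus\{X,V\}=\mathbf{Z}_0$, and by Step 3 it lies in $\MB(X)$, so $W\in\mathbf{Z}'$. Every such path is therefore blocked by $\mathbf{Z}'$. The augmented-graph criterion then gives that $\mathbf{Z}'$ $m$-separates $X$ and $V$ in $\MAG$, which completes the proof.
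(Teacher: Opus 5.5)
Your overall strategy works, but one step fails, and it fails in exactly the setting this paper is about. Throughout you use ancestral closures $\An(\cdot)$ where anterior closures are required. In a MAG with undirected edges, which is how selection bias appears, the Richardson--Spirtes augmentation criterion reads: $X$ and $V$ are $m$-separated by $\mathbf{Z}$ iff they are separated by $\mathbf{Z}$ in $(\mathcal{M}[\Antplus(\{X,V\}\cup\mathbf{Z})])^a$ (this is \cref{lemma:moral-m-separates}). Likewise, the canonical separator of two non-adjacent vertices is $\Ant(\{X,V\})\setminus\{X,V\}$. With $\An$ both statements are false, so Step 1 fails, and so does the final appeal to the criterion in Step 4.

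Concretely, take the MAG with edges $X - W$, $W - V$, $X\rightarrow C$, $V\rightarrow C$.
\begin{itemize}
\item It is ancestral: the vertices carrying undirected edges have no parents or spouses.
\item It is maximal: $\{W\}$ $m$-separates $X$ and $V$, and $\{X,V\}$ $m$-separates $W$ and $C$.
\item $V\in\MB(X)$ as a parent of the child $C$, and $V$ is not adjacent to $X$.
\end{itemize}
Since $X$ and $V$ have no parents, $\An(\{X,V\})\setminus\{X,V\}=\emptyset$, so your construction outputs $\mathbf{Z}'=\emptyset$. Yet $X - W - V$ is $m$-connecting given $\emptyset$. Your version of the criterion would wrongly certify this separation, because the subgraph induced on $\An(\{X,V\})=\{X,V\}$ has no edges. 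So the ``equivalently'' in Step 1 is not an equivalence once undirected edges are present.

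The repair is to replace $\An$ by $\Ant$ throughout; in the example this yields $\mathbf{Z}'=\{W\}$, which works.
\begin{itemize}
\item Step 2 then uses transitivity of anteriority: $\Antplus(\{X,V\}\cup\mathbf{Z}')=\Antplus(\{X,V\})$.
\item Step 3 is unaffected: your case analysis of collider paths leaving $X$ is correct and lands in items (i)--(iv) of \cref{def:MB-MAG}.
\item Step 1 becomes the claim that non-adjacent $X,V$ are not collider-connected inside $\mathcal{M}[\Antplus(\{X,V\})]$. You can prove this directly rather than cite it. A collider carries an arrowhead, so by condition (iii) of \cref{def:ancestral-graph} it has no undirected edge. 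Hence any anterior path from it to $\{X,V\}$ is forced to be directed. A collider path inside the anterior set would therefore be an inducing path, contradicting maximality. This is also \cref{lemma:ancestral-m-separates} applied to the whole vertex set $\mathbf{O}$.
\end{itemize}

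After this fix, your separator $\Ant(\{X,V\})\cap\MB(X)$ coincides with the one used in the paper's proof. The paper verifies it by a path argument. An $m$-connecting path given that set would have to begin with a collider segment inside $\MB(X)$ reaching a vertex of $\Ant(\{X,V\})\setminus\MBplus(X)$, contradicting \cref{lemma:MB-MAG-property}. Your augmented-graph version instead replaces the appeal to the MB property with the direct observation that every augmented-graph neighbour of $X$ lies in $\MB(X)$. That is a clean, equally valid alternative.
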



\cref{theorem:local-learning-edge} shows that the separability between $X$ and any vertex in its MB can be decided within the local region $\MBplus(X)$. 
Together with the MB property, this yields a local characterization of adjacencies incident to $X$. 
Specifically, for any $V\in \mathbf{O}\setminus\{X\}$, if there exists a separating set $\mathbf{Z}\subseteq \mathbf{O}\setminus\{X,V\}$ for $X$ and $V$, then \cref{theorem:local-learning-edge} guarantees the existence of a local separating set $\mathbf{Z}'\subseteq \MBplus(X)\setminus\{X,V\}$. 
Conversely, if no such local separating set exists within $\MBplus(X)\setminus\{X,V\}$, then no separating set exists within $\mathbf{O}\setminus\{X,V\}$. 
Therefore, the adjacency between $X$ and any vertex can be determined from the MB-based local region.

This leads to the following consistency result for the locally learned structure $\g[L]_{X}$.

\begin{corollary}
    \label{corollary:local-learning-edge}
    Let $\g[L]_{X}$ denote the inferred local structure over $\MBplus(X)$, and let $V\in\MB(X)$. 
    Then $X$ and $V$ are adjacent in $\g[L]_{X}$ if and only if they are adjacent in the global PAG $\PAG$.
\end{corollary}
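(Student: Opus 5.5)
The plan is to reduce both adjacency statements to statements about the existence of an $m$-separating set in $\MAG$, and then bridge them with \cref{theorem:local-learning-edge}. The procedure that produces $\g[L]_X$ is a sound and complete PAG-learning procedure run on the oracle distribution $P_{\mathrm{obs}}(\MBplus(X))$. So $X$ and $V$ are non-adjacent in $\g[L]_X$ exactly when there is a set $\mathbf{Z}'\subseteq \MBplus(X)\setminus\{X,V\}=\MB(X)\setminus\{V\}$ with $X\CI V\mid \mathbf{Z}'$ under $P_{\mathrm{obs}}$. On the global side, the PAG $\PAG$ has the same adjacencies as $\MAG$. By maximality (\cref{def:mag}) and the fact that adjacent vertices cannot be $m$-separated, $X$ and $V$ are non-adjacent in $\PAG$ exactly when some $\mathbf{Z}\subseteq\mathbf{O}\setminus\{X,V\}$ $m$-separates them in $\MAG$.

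First step: I would justify the local characterization. The marginal selected distribution over $\MBplus(X)$ is obtained by marginalizing further observed variables out of $P_{\mathrm{obs}}(\mathbf{O})$. Its conditional independencies are therefore exactly the independencies $A\CI B\mid\mathbf{Z}$ of $P_{\mathrm{obs}}(\mathbf{O})$ with $A,B,\mathbf{Z}$ inside $\MBplus(X)$. By Markov and faithfulness to $\MAG$, these coincide with the $m$-separations in $\MAG$ among such sets. This is equivalent to saying that $P_{\mathrm{obs}}(\MBplus(X))$ is Markov and faithful to the marginal MAG $\MAG[\MBplus(X)]$ obtained by treating $\mathbf{O}\setminus\MBplus(X)$ as latent. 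In that MAG, two vertices are adjacent iff no subset of the remaining local vertices $m$-separates them. Since the procedure is sound and complete, the skeleton of $\g[L]_X$ equals this skeleton. Hence, in the oracle setting, $X$ and $V$ are non-adjacent in $\g[L]_X$ iff some $\mathbf{Z}'\subseteq\MB(X)\setminus\{V\}$ $m$-separates $X$ and $V$ in $\MAG$.

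Second step: apply \cref{theorem:local-learning-edge} with this $V\in\MB(X)$. It says that a separating set within $\mathbf{O}\setminus\{X,V\}$ exists iff one exists within $\MB(X)\setminus\{V\}$. Chaining the equivalences gives: non-adjacent in $\g[L]_X$ $\Leftrightarrow$ locally separable $\Leftrightarrow$ globally separable $\Leftrightarrow$ non-adjacent in $\MAG$ $\Leftrightarrow$ non-adjacent in $\PAG$. Contraposition yields the corollary.

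The main obstacle is the first step: making precise that the skeleton learned from the marginal over $\MBplus(X)$ is governed by separating sets restricted to $\MBplus(X)$. This needs two facts. One is that marginalizing $P_{\mathrm{obs}}$ preserves faithfulness with respect to $m$-separation restricted to the subset. The other is the standard result (Richardson--Spirtes) that the marginal of a MAG is again a MAG with the corresponding adjacency characterization. I would cite these rather than reprove them. Everything else follows directly from \cref{theorem:local-learning-edge}.
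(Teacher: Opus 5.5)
Your proposal is correct and follows essentially the paper's route. The paper's justification is the paragraph immediately preceding the corollary: non-adjacency in $\g[L]_X$ and in $\PAG$ each reduce to the existence of a separating set, within $\MB(X)\setminus\{V\}$ and within $\mathbf{O}\setminus\{X,V\}$ respectively, and \cref{theorem:local-learning-edge} equates the two. Your marginal-MAG argument makes explicit a step the paper leaves implicit: that the oracle local skeleton is determined by separations restricted to $\MBplus(X)$.
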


\cref{corollary:local-learning-edge} implies that all adjacencies incident to $X$ in the locally learned graph $\g[L]_X$ are exactly the same as those incident to $X$ in the global PAG. 
Thus, local learning over $\MBplus(X)$ preserves the target-specific adjacency information that would be obtained by global causal discovery.


Next, we show that certain structures carrying orientation information in $\g[L]_{X}$ are also consistent with the global PAG. 

\begin{theorem}
\label{theorem:local-learning-collider-paths}
Let $\g[L]_{X}$ denote the inferred local structure over $\MBplus(X)$, and let 
$V_i$ $(1 \le i \le |\MB(X)|)$ be vertices in $\MB(X)$. 
Then an uncovered collider path 
\[
X *\!\!\rightarrow V_{1} \leftrightarrow \dots \leftarrow\!\!* V_{i}
\]
exists in $\g[L]_{X}$ if and only if the same uncovered collider path exists
in the global PAG.
\end{theorem}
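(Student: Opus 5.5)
We reduce everything to the MAG level. By the standard characterization of Markov equivalence of MAGs (Zhang/Ali et al.), uncovered collider paths with these endpoint conditions are invariant, so a PAG displays them exactly when every MAG in the class does. The plan therefore has three steps. First, show that adjacencies among vertices of $\MBplus(X)$ that lie on such a path agree between $\g[L]_X$ and $\PAG$. Second, show that the collider statuses at $V_1,\dots,V_{i-1}$ agree. Third, show that the non-adjacency conditions making the path uncovered agree.

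\textbf{Adjacencies.} The edge $X *\!\!\rightarrow V_1$ is handled directly by \cref{corollary:local-learning-edge}. Every $V_j$ on a collider path $X *\!\!\rightarrow V_1 \leftrightarrow \cdots$ is in the district of $X$ or of a child of $X$ (when $V_1$ is a child), or is a parent of such a district vertex. For any two consecutive vertices $V_j,V_{j+1}$ on the path, I would show they are adjacent in the local marginal MAG over $\MBplus(X)$ iff they are adjacent in $\MAG$.

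For the "if" direction, note that the marginal MAG over $\MBplus(X)$, obtained by marginalizing $\mathbf{O}\setminus\MBplus(X)$, keeps every edge of $\MAG$ between retained vertices. This is the standard fact that latent projection preserves adjacency: an edge in $\MAG$ means no set separates the pair, and in particular no subset of $\MBplus(X)$ does.

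For the "only if" direction, suppose $V_j,V_{j+1}$ are non-adjacent in $\MAG$. I need a separating set inside $\MBplus(X)$. Marginalization can create edges via inducing paths through removed vertices. However, an inducing path relative to $\mathbf{O}\setminus\MBplus(X)$ between two vertices of $\MBplus(X)$ must have every intermediate removed vertex a collider that is an ancestor of an endpoint. Such a collider on a bidirected/parent chain attached to the district of $X$ would itself belong to $\MB(X)$ by \cref{def:MB-MAG}(iii)–(iv), which is a contradiction. I expect this step to be the main obstacle. The argument must use that the path's interior vertices are in districts, so any arrowhead-adjacent collider neighbor is pulled into the MB. The risky case is when $V_j$ is only a parent of a district vertex, and I would treat that case separately.

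\textbf{Collider status and uncoveredness.} With adjacencies matched, I compare the marginal MAG over $\MBplus(X)$ with $\MAG$ restricted to $\MBplus(X)$. Arrowheads at $V_1,\dots,V_{i}$ along the path are determined by ancestral relations. Marginalizing observed vertices preserves ancestral relations among retained vertices, because the induced MAG preserves ancestral relations of the DAG (\cref{Appendix:DAG-to-MAG}). So each edge on the path carries the same marks in both MAGs. Uncoveredness concerns adjacencies of pairs $V_{j-1},V_{j+1}$, which are again vertices of $\MB(X)$ tied to the district structure, so the adjacency argument above applies to them as well.

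\textbf{Transfer to PAGs.} Finally, we pass from MAGs to PAGs. The local PAG $\g[L]_X$ is the PAG of the marginal MAG, since the distribution $P_{\mathrm{obs}}(\MBplus(X))$ is faithful to that MAG. In both graphs, uncovered collider paths of this form are invariant features of their respective Markov equivalence classes. Every MAG in $[\MAG]$ marginalizes to a member of the local class, so invariance transfers in both directions. Since the complete FCI output displays all invariant marks, the path appears in $\g[L]_X$ iff it appears in $\PAG$.
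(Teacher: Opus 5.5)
\paragraph{Review.} Your reduction is fine as far as it goes. Edges of $\MAG$ survive marginalization to $\MBplus(X)$. Marks on edges present in both the true MAG $\MAG$ and its marginal $\MAG'$ agree by the ancestral semantics. Because the path is made of adjacencies, non-adjacencies and unshielded colliders, it transfers between each PAG and its MAG. You only need this for $\MAG$ and $\MAG'$; your remark that members of $[\MAG]$ marginalize into the local class gives just one inclusion and is unnecessary.

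The gap is the step you flag as the main obstacle, and it carries all the content: a pair on the path that is non-adjacent in $\MAG$ must stay non-adjacent in $\MAG'$. Your sketch states the inducing-path condition backwards. Relative to the removed set $\mathbf{O}\setminus\MBplus(X)$, it is the \emph{retained} interior vertices that must be colliders, while removed vertices can be non-colliders such as common parents. Hence membership of the endpoints in $\MB(X)$ or in districts does not exclude spurious edges. In \cref{remark:non-complete-local-V-structures}, $K,I\in\MB(B)$ become adjacent in $\g[L]_B$ because their common parent $J$ is removed.

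What is missing is an orientation hypothesis. Take the endpoint $A$ that is entered by an arrowhead along a collider path from $X$, so that every vertex reachable from $A$ by an arrow-collider path lies in $\MBplus(X)$. Require also that $A$ is not an ancestor of the other endpoint $B$.
\begin{enumerate}
\item An inducing path cannot leave $A$ with a tail. Following the forced directed edges to $B$, or to the first collider $C\in\An(\{A,B\},\MAG)$, would give $A\in\An(B,\MAG)$.
\item Hence its prefix up to the first removed vertex is an arrow-collider path from $A$, which puts that vertex in $\MB(X)$, a contradiction.
\item If no vertex is removed, the maximality of $\MAG$ is contradicted.
\end{enumerate}
This is the inducing-path form of the paper's \cref{lemma:local-define-sepset}, with $\mathbf{Z}=\bigl(\ArrColl(A,\MAG)\cap\Ant(\{A,B\},\MAG)\bigr)\setminus\{A,B\}$.

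The endpoint $V_i$ need not be entered by an arrowhead, so the pair $(V_{i-2},V_i)$ needs the extra case the paper treats. If $V_{i-1}\leftarrow V_i$, then $V_{i-2}\notin\Ant(V_i,\MAG)$, because of the arrowhead at $V_{i-2}$ on $V_{i-2}\leftrightarrow V_{i-1}$. The pair $(X,V_2)$ is covered by \cref{theorem:local-learning-edge}.

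\paragraph{Circularity in the forward direction.} In the direction from $\g[L]_X$ to $\PAG$ your plan is circular. You use that the path's vertices lie in districts of $\MAG$ to certify that the local edges are genuine. At that stage, however, the path is known only in $\g[L]_X$, whose bidirected edges may be spurious. Adjacencies and marks cannot be settled in two separate passes; they must be interleaved by induction along the path, as in the paper:
\begin{enumerate}
\item $X*\!\!\rightarrow V_1$ holds in $\MAG$ by \cref{corollary:local-learning-edge} and \cref{corollary:local-PAG-marks-correct}, which gives $\ArrColl(V_1,\MAG)\subseteq\MBplus(X)$.
\item The local arrowhead in $V_1\leftarrow\!\!* V_2$ shows that $V_1$ is not an ancestor of $V_2$.
\item Hence the edge between $V_1$ and $V_2$ is genuine and carries the same marks in $\MAG$.
\item So $V_2$ is in turn entered by an arrowhead along a collider path from $X$, and the step repeats.
\end{enumerate}
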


\cref{theorem:local-learning-collider-paths} indicates that the uncovered collider paths between $X$ and any vertex in $\MB(X)$ in the locally learned $\g[L]_{X}$ are consistent with the global PAG. Conversely, if there is an uncovered collider path between $X$ and any vertex in $\MB(X)$ in the global PAG, then the same uncovered collider path must exist in the locally learned $\g[L]_{X}$.


\begin{theorem}
    \label{theorem:local-learning-collider-triples}
    Let $\g[L]_{X}$ denote the inferred local structure over $\MBplus(X)$, and let 
    $V_i$ $(1 \le i \le |\MB(X)|)$ be vertices in $\MB(X)$. 
    If an unshielded collider triple
    \[
    V_1 *\!\!\rightarrow X \leftarrow\!\!* V_2
    \]
    exists in $\g[L]_{X}$, then the same triple exists in the global PAG.
\end{theorem}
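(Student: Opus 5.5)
Since $\g[L]_X$ is the complete PAG of the marginal selected distribution over $\MBplus(X)$, an unshielded collider $V_1 *\!\!\rightarrow X \leftarrow\!\!* V_2$ in $\g[L]_X$ yields three facts about the local marginal. First, $V_1$ and $V_2$ are both adjacent to $X$ in $\g[L]_X$. Second, $V_1$ and $V_2$ are non-adjacent in $\g[L]_X$. Third, $X$ is not in any minimal separating set of $V_1,V_2$ within $\MBplus(X)$. We need the same three facts globally, and then the FCI collider rule R0 together with soundness of PAG marks (every invariant mark in $\PAG$ is as R0 dictates) gives the triple in $\PAG$.

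\textbf{Adjacencies.} The edges $X*\!\!-\!\!*V_1$ and $X*\!\!-\!\!*V_2$ transfer to $\PAG$ directly by \cref{corollary:local-learning-edge}.

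\textbf{Non-adjacency of $V_1,V_2$.} If $V_1,V_2$ are $m$-separated by some $\mathbf{Z}'\subseteq\MBplus(X)\setminus\{V_1,V_2\}$, then the marginal CI relation $V_1\CI V_2\mid \mathbf{Z}'$ is a CI relation in $P_{\mathrm{obs}}(\mathbf{O})$. Hence $V_1,V_2$ are non-adjacent in $\MAG$ and in $\PAG$. This direction is immediate, which is why the theorem is stated only as ``if''.

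\textbf{Collider status of $X$.} This is the main step. By the standard characterization, the triple $\langle V_1,X,V_2\rangle$ is an unshielded non-collider in $\PAG$ if and only if $X$ belongs to every set that $m$-separates $V_1$ and $V_2$ in $\MAG$. Equivalently, $X$ is a collider iff $X\in\An(\{V_1,V_2\}\cup\mathbf{Z})$ fails for some separating set $\mathbf{Z}$; more precisely, iff there is a separating set excluding $X$.

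Suppose for contradiction that $X$ is a non-collider globally. Then $X$ is an ancestor (in $\MAG$) of $V_1$ or $V_2$, or of the selection structure via an undirected edge. By the ancestral-set property, $X$ then lies in every separating set of $V_1,V_2$ in $\MAG$.

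Now take the local separating set $\mathbf{Z}'\subseteq \MBplus(X)$, which, given that $X$ is a local collider, can be chosen not to contain $X$. Since this is a genuine $m$-separation in $\MAG$, we get a global separating set omitting $X$, a contradiction.

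The delicate point is making this rigorous. If the whole ancestor/separating-set argument is not to be re-derived, I would cite the lemma of Zhang (2008) / Richardson–Spirtes directly: in a MAG, for an unshielded triple $\langle V_1,X,V_2\rangle$, $X$ is a non-collider iff $X\in\An(\{V_1,V_2\})$ in the augmented sense (tails or undirected edges), and then every separator contains $X$. Local collider orientation in the complete local PAG means some local separator $\mathbf{Z}'$ excludes $X$. Such a $\mathbf{Z}'$ exists because in $\g[L]_X$, R0 orients a collider exactly when the found Sepset omits $X$, and by completeness this is independent of which sepset is found. As above, $\mathbf{Z}'$ is a global separator, which forces $X$ to be a collider in $\MAG$.

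\textbf{Conclusion.} Since $V_1,V_2$ are non-adjacent in $\MAG$ and $X$ is a collider of every unshielded triple in each Markov-equivalent MAG (all equivalent MAGs share unshielded colliders), the arrowheads at $X$ are invariant. Hence they appear in $\PAG$, and the triple $V_1 *\!\!\rightarrow X \leftarrow\!\!* V_2$ exists in the global PAG.
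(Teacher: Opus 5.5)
Your proposal is correct and takes essentially the same route as the paper: the adjacencies transfer by \cref{corollary:local-learning-edge}, and both the non-adjacency of $V_1,V_2$ and the collider status of $X$ follow because a local separating set $\mathbf{Z}'\subseteq\MBplus(X)$ that omits $X$ is already a global separating set, which is impossible if $X$ were a non-collider in $\MAG$. The detour through ancestral sets is unnecessary: if $X$ is a non-collider, the two-edge path $\langle V_1,X,V_2\rangle$ is itself $m$-connecting given any set that omits $X$, so $X$ must lie in every separator.
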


\cref{theorem:local-learning-collider-triples} states that any unshielded collider triple with $X$ as a collider in the locally learned $\g[L]_{X}$ must also exist in the global PAG.


Together, \cref{theorem:local-learning-collider-paths} and \cref{theorem:local-learning-collider-triples} show that the collider-orientation information involving the current target $X$ and identified in the locally learned $\g[L]_{X}$ is sound with respect to the global PAG.

\begin{remark}
It is worth noting that these results do not cover all collider information.
First, collider information in $\g[L]_X$ that does not involve the current target $X$ may be unreliable. 
Second, some collider information involving $X$ in the global PAG may not be recoverable from $\MBplus(X)$. 
The following two examples illustrate these cases using the global PAG in \cref{fig:pag}.
    \begin{itemize}[leftmargin=15pt]
        \item \emph{\textbf{Spurious.}} Collider information in $\g[L]_X$ that does not involve the current target $X$ may be spurious. For example, let $B$ be the target variable, and consider the locally learned graph $\g[L]_B$ shown in \cref{fig:inferred-local-pag-B}. From the marginal distribution over $\MBplus(B)$, one may identify $ T *\!\!\rightarrow K \leftarrow\!\!* I $ because $T \CI I \mid \emptyset$ but $T \nCI I \mid K$. However, this collider is not consistent with the global PAG. In the global structure, the corresponding collider information is $ T *\!\!\rightarrow K \leftarrow\!\!* J, $ rather than $ T *\!\!\rightarrow K \leftarrow\!\!* I. $
        \item \emph{\textbf{Missing.}} Not all collider information involving the current target $X$ can necessarily be recovered from the marginal distribution over $\MBplus(X)$. 
        For example, let $E$ be the target variable.
        In this case, $\MBplus(E)=\{B,E,T\}$, and the locally learned graph $\g[L]_E$ is shown in \cref{fig:inferred-local-pag-E}. Although the global PAG contains the uncovered collider $ T *\!\!\rightarrow E \leftarrow\!\!* B, $ this collider cannot be identified from the marginal distribution over $\MBplus(E)$, as the separating set for $T$ and $B$ (i.e., $K$) is not contained in $\MBplus(E)$.
    \end{itemize}
\label{remark:non-complete-local-V-structures}
\end{remark}

\begin{figure}[hpt!]
    \centering
    \captionsetup[subfigure]{font=small, skip=2pt}

    \begin{subfigure}[t]{0.48\linewidth}
        \centering
        \begin{tikzpicture}
            \draw[fill=red!80, fill opacity=0.35, draw=none]
                (2.6, 0.0) ellipse [x radius=0.5cm, y radius=0.4cm];

            \draw (1.2, 0.0) node(T) [obs] {$T$};
            \draw (2.6, 0.0) node(B) [obs] {$B$};
            \draw (4.0, 0.0) node(G) [obs] {$G$};

            \draw (1.2, -1.1) node(E) [obs] {$E$};
            \draw (4.0, -1.1) node(F) [obs] {$F$};

            \draw (2.6, 1.2) node(K) [obs] {$K$};
            \draw (4.0, 1.2) node(I) [obs] {$I$};

            \draw[circle-arcsq,red] (T) -- (K);
            \draw[circle-arcsq] (T) -- (E); 
            \draw[circle-arcsq] (B) -- (E);
            \draw[arcsq-arcsq] (B) -- (G);
            \draw[circle-arcsq] (K) -- (B);
            \draw[circle-arcsq] (F) -- (G);  
            \draw[circle-arcsq] (I) -- (G);
            \draw[circle-arcsq,red] (I) -- (K);
        \end{tikzpicture}
        \caption{$\g[L]_{B}$}
        \label{fig:inferred-local-pag-B}
    \end{subfigure}
    \hfill
    \begin{subfigure}[t]{0.48\linewidth}
        \centering
        \begin{tikzpicture}
            \draw[fill=red!80, fill opacity=0.35, draw=none]
                (1.2, -1.1) ellipse [x radius=0.5cm, y radius=0.4cm];

            \draw (1.2, 0.0) node(T) [obs] {$T$};
            \draw (2.6, 0.0) node(B) [obs] {$B$};
            \draw (1.2, -1.1) node(E) [obs] {$E$};

            \draw[circle-circle,red] (T) -- (B);
            \draw[circle-circle] (T) -- (E);
            \draw[circle-circle] (E) -- (B);
        \end{tikzpicture}
        \caption{$\g[L]_{E}$}
        \label{fig:inferred-local-pag-E}
    \end{subfigure}

    \caption{Causal graphs accompanying \cref{remark:non-complete-local-V-structures}. (a) The locally learned structure $\g[L]_{B}$; (b) the locally learned structure $\g[L]_{E}$. The red shaded region highlights the current target variable, and red edges denote local results whose consistency with the global learning results is not guaranteed.}
    \label{fig:non-complete-example}
\end{figure}

\subsection{The \method Algorithm}
\label{sec:algorithm}

Building on the theoretical results in \cref{sec:markov-blanket,sec:foundations-local-learning}, we present the \method algorithm in \cref{alg:local-structure-learn}. The algorithm starts from the target variable $T$ and sequentially learns local structures over Markov-blanket-based regions.

\begin{algorithm}[hpt!]
   \caption{\method}
   \label{alg:local-structure-learn}
   \begin{algorithmic}[1]
   \REQUIRE Target variable $T$, observed data over $\mathbf{O}$.
   \ENSURE The learned local structure around $T$ in $\hat{\PAG}$.
   
   \STATE Initialize $\mathrm{Waitlist}\gets \{T\}$, $\mathrm{Donelist}\gets \emptyset$, $\hat{\PAG}\gets \emptyset$, and $\Sepsets\gets \emptyset$.
   
   \REPEAT
        \STATE $X$ $\leftarrow$ the first variable of $\mathrm{Waitlist}$.
        
        \STATE $\MBplus(X)\gets \textsc{MBLearn}(X)$.
        \STATE $(\g[L]_X,\Sepsets_X)\gets \textsc{LocalLearn}(\MBplus(X))$.
        \STATE $\Sepsets\gets \Sepsets\cup \Sepsets_X$.
        
        \STATE $\hat{\PAG}\gets \textsc{PreserveConsistentInfo}(\hat{\PAG},\g[L]_X,X)$.
        \STATE $\hat{\PAG}\gets \textsc{OrientPAG}(\hat{\PAG},\Sepsets)$.
        
        \STATE Add $X$ to $\mathrm{Donelist}$.
        \STATE $\mathrm{Waitlist}\gets \textsc{UpdateWaitlist}(T,\hat{\PAG},\mathrm{Donelist})$.
        
   \UNTIL{one of the stopping rules $\mathcal{R}1$--$\mathcal{R}3$ is satisfied}
   
   \STATE \textbf{return} the local structure around $T$ in $\hat{\PAG}$.
   \end{algorithmic}
\end{algorithm}

The algorithm maintains four key objects during this sequential procedure.
First, $\mathrm{Waitlist}$ contains variables whose local structures may provide useful information for determining the local causal structure around the original target $T$. Second, $\mathrm{Donelist}$ records variables that have already been processed. Third, $\hat{\PAG}$ stores the locally learned structural information that is guaranteed to be consistent with the global PAG. Finally, $\Sepsets$ stores the separating sets obtained during local structure learning and is used for subsequent PAG orientation.

The algorithm is initialized with $\mathrm{Waitlist}=\{T\}$, $ \mathrm{Donelist}=\emptyset$, $\hat{\PAG}=\emptyset$, and $\Sepsets=\emptyset$. At each iteration, it selects a variable $X$ from $\mathrm{Waitlist}$, identifies the Markov-blanket-based local region $\MBplus(X)$ using \textsc{MBLearn}~\citep{pellet2008using}, and applies a constraint-based learner \textsc{LocalLearn}~\citep{spirtes1995fci,akbari2021recursive,rohekar2021iterative} to the marginal distribution over $\MBplus(X)$ to obtain a local structure $\g[L]_X$ and separating sets $\Sepsets_X$. The algorithm then updates $\Sepsets$, preserves in $\hat{\PAG}$ only the locally learned structural information justified by \cref{sec:foundations-local-learning} through \textsc{PreserveConsistentInfo}, and applies the complete PAG orientation rules \textsc{OrientPAG} summarized in \cref{app:orientation-rules}. After $X$ is processed, it is added to $\mathrm{Donelist}$, and $\mathrm{Waitlist}$ is updated to retain variables that may still help determine the local structure around the original target $T$. This procedure continues until one of the stopping rules in \cref{theorem:completeness-stop} is satisfied. The full pseudocode is shown in \cref{alg:local-structure-learn}.

We next describe the conceptual stopping conditions. If one updates $\mathrm{Waitlist}$ in a coarse manner, for example by adding all unprocessed vertices connected to $T$ in the current graph $\hat{\PAG}$, then the following three stopping rules are sufficient for termination. 
\begin{theorem}[\textbf{Stopping Rules}] 
    \label{theorem:completeness-stop} 
    Let $T$ be the target variable and let $\mathrm{Waitlist}$ be the current set of variables to be processed. If any of the following rules is satisfied, then the local structure identified for $T$, including its direct causes and direct effects, is equivalent to the corresponding target-specific structure obtained from global causal discovery: 
\begin{description} 
    \item[$\mathcal{R}1.$] All edge marks incident to the target $T$ have been determined. 
    \item[$\mathcal{R}2.$] The set $\mathrm{Waitlist}$ is empty. \item[$\mathcal{R}3.$] Every path from any unprocessed variable to $T$ is not a potentially anterior path toward $T$. 
\end{description} 
A path $\pi=\langle V_1,V_2,\dots,T\rangle$ is said to be not potentially anterior toward $T$ if it contains a vertex $V_i$ such that the edge adjacent to $V_i$ along $\pi$ has an arrowhead into $V_i$, i.e., is of the form $V_i \leftarrow *$.
\end{theorem}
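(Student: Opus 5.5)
We argue by establishing a standing invariant and then treating the three rules separately. The invariant says that at every iteration, every adjacency and edge mark incident to $T$ in $\hat{\PAG}$ agrees with the global PAG $\PAG$: adjacencies agree, and every non-circle mark in $\hat{\PAG}$ is the invariant mark in $\PAG$. We prove it by induction on iterations. \textsc{PreserveConsistentInfo} only keeps adjacencies incident to the processed $X$, which are correct by \cref{corollary:local-learning-edge}, and collider information involving $X$, which is sound by \cref{theorem:local-learning-collider-paths,theorem:local-learning-collider-triples}. The separating sets in $\Sepsets$ are genuine $m$-separating sets in $\MAG$. The complete orientation rules applied by \textsc{OrientPAG} are sound whenever their antecedents consist of correct adjacencies, marks and sepsets. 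Hence no wrong mark is ever introduced. Given this invariant, completeness is the only remaining issue: every mark incident to $T$ that is invariant in $\PAG$ must already be displayed.

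\emph{Rule $\mathcal{R}1$.} Here all marks at $T$ are non-circle. By the invariant each is the invariant mark of $\PAG$, and by \cref{corollary:local-learning-edge} the neighbour set is exact. So the local structure coincides with that of $\PAG$, and this case is immediate.

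\emph{Rule $\mathcal{R}2$.} With the coarse update, an empty Waitlist means every vertex connected to $T$ in $\hat{\PAG}$ has been processed. By \cref{corollary:local-learning-edge} applied repeatedly, $\hat{\PAG}$ then contains the full connected component of $T$ in $\PAG$ with correct adjacencies. Every unshielded collider and every uncovered collider path in this component passes through some processed vertex, so \cref{theorem:local-learning-collider-paths} gives that its marks are recorded. The discriminating-path information (rule $\mathcal{R}4$ of the orientation rules) is likewise based on collider paths whose endpoints are processed.

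Since orientation in $\PAG$ depends only on skeleton, colliders and discriminating paths within a component, running the complete rules on this information yields exactly the restriction of $\PAG$. We invoke the completeness theorem of \citet{zhang2008completeness} here.

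\emph{Rule $\mathcal{R}3$.} This is the main obstacle. We must show that unprocessed vertices $U$ cannot influence the marks at $T$ once every path from $U$ to $T$ contains an arrowhead into some vertex $V_i$ when traversed toward $T$. The plan has three steps:

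\begin{enumerate}
\item Examine each orientation rule whose consequent alters a mark at $T$: $\mathcal{R}1$--$\mathcal{R}4$ propagate through arrowheads and tails pointing along the path, while $\mathcal{R}5$--$\mathcal{R}10$ handle tails and circle-path arguments.
\item Show that any chain of rule applications transmitting information from $U$ to a mark at $T$ traces a potentially anterior path from $U$ to $T$. Such chains are uncovered potentially directed paths, discriminating paths, or circle paths.
\item Observe that a blocking arrowhead $V_i \leftarrow *$ on the path toward $T$ means the corresponding mark is already fixed, so no rule can propagate past $V_i$ toward $T$.
\end{enumerate}

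The delicate case is discriminating paths in $\mathcal{R}4$, whose non-endpoint vertices are colliders and parents of $T$. For these we would argue that the interior vertices all lie in $\MB(T)$ or in the MB of a processed vertex, so the relevant collider information is already available. Once propagation is shown to stop at every blocking arrowhead, the marks at $T$ obtained from the processed part equal those of $\PAG$, which concludes the proof.
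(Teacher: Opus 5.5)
Your $\mathcal{R}1$ case agrees with the paper. The problem is $\mathcal{R}3$, the substantive case: you give only a plan, and the principle behind steps 2--3 is false, because FCI orientation information does cross arrowheads. Take a discriminating path $\langle W,V_1,\dots,X,T,Z\rangle$ for $T$ with $T\circ\!\!\rightarrow Z$. The route along the collider chain contains an arrowhead pointing back toward $W$, and so does the route $W*\!\!\rightarrow V_1\rightarrow Z\leftarrow\!\!\circ T$. Neither is potentially anterior toward $T$, yet orientation rule 4 sets the mark at $T$ on $T$--$Z$ from $\Sepset(W,Z)$. This case is handled not by blocked propagation but by capture. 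First, $W\in\MB(T)$ because the chain is a collider path. Second, since $T*\!\!\rightarrow Z$ and $Z\notin\Ant(W)$, \cref{lemma:local-define-sepset} gives a separating set inside $\ArrColl(Z)\subseteq\MBplus(T)$. So $T$'s own local learner already finds the needed independence. Your remark on this case looks at the interior colliders, but rule 4 needs a separating set for the endpoint pair $(W,Z)$. A proof of $\mathcal{R}3$ must therefore characterize which information about marks at $T$ the processed regions already supply. It must also account for the fact that processing an unprocessed $U$ adds new adjacencies, separating sets and collider paths, not just further rule firings on the current $\hat{\PAG}$. The paper argues this case differently, treating the unprocessed vertices as removable relative leaves via \cref{lemma:relative-leaf-deletion}.

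Your $\mathcal{R}2$ step fails for \textsc{OrientPAG} as specified. Zhang's completeness presupposes the inputs of global FCI. However, \cref{alg:orient-pag} fires rules 1, 3, 4, 5, 7, 9 and 10 only on separating sets recorded in $\Sepsets$, and \method records only those found inside some $\MBplus(X)$. In particular, a pair $A,C$ with $C\notin\MB(A)$ is never tested by $A$'s or $C$'s own learner. For example, let selection variables induce the undirected MAG $A-B-C-D-A$ and take $T=B$:
\begin{itemize}
\item Each $\MBplus(\cdot)$ is a triangle with no independences, so every $\g[L]_X$ is a complete circle graph.
\item No separating set for $(A,C)$ or $(B,D)$ is ever recorded, so rule 5 cannot fire.
\item Once all four vertices are processed (Waitlist empty), the output still has $B\circ\!\!-\!\!\circ A$ and $B\circ\!\!-\!\!\circ C$, whereas the global PAG has $B-A$ and $B-C$.
\end{itemize}
You would need to supply the missing separating sets (e.g.\ $\MB(A)$ separates $A$ from every $C\notin\MB(A)$), or read non-adjacency off the now-complete skeleton of $T$'s component.

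The same negative-information problem affects your invariant. Rules 5, 9 and 10 read ``uncovered'' and non-collider status off the current graph, and an intermediate $\hat{\PAG}$ does not certify either. An edge between two unprocessed vertices is simply absent. A collider at a processed vertex with unprocessed endpoints can still be shown with circles, as in the ``Missing'' case of \cref{remark:non-complete-local-V-structures}. So ``correct adjacencies, marks and sepsets'' is not enough to guarantee soundness.
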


Rules $\mathcal{R}1$ and $\mathcal{R}2$ are both direct stopping criteria, meaning that all causal information of interest has been identified, or all variables have been fully processed.
By contrast, $\mathcal{R}3$ captures a structural redundancy condition. 
Roughly speaking, it states that continuing to learn the local structures of the remaining unprocessed variables cannot provide additional information for determining the local structure around $T$. 
Intuitively, if the remaining unprocessed variables have no potentially directed path to $T$, then their causal information cannot propagate toward $T$ and therefore cannot further refine the target-specific local structure.

In practice, we use a more selective waitlist update. Instead of adding all vertices connected to $T$, \textsc{UpdateWaitlist} only retains vertices that have a potentially anterior path to $T$ in the current graph $\hat{\PAG}$. Consequently, vertices excluded by $\mathcal{R}3$ are automatically removed from consideration. Thus, with this refined update strategy, the algorithm only needs to explicitly check $\mathcal{R}1$ and $\mathcal{R}2$. Detailed implementation is shown in \cref{alg:update-waitlist} in \cref{app:update-waitlist}. The \cref{app:example-R3} provides an example that terminates by $\mathcal{R}3$ under a coarse update but by $\mathcal{R}2$ when using \textsc{UpdateWaitlist}.

\begin{figure*}[t!]
    \centering
    \captionsetup[subfigure]{font=small, skip=2pt}
    {\renewcommand\fbox[1]{\fcolorbox{gray!45}{gray!05}{#1}}%
    \begin{tikzpicture}
    \node[inner sep=0pt, outer sep=0pt] (figgrid) {%
    \begin{minipage}{\textwidth}
    \centering

    \fbox{%
    \begin{minipage}[c]{0.265\linewidth}
        \centering
        \begin{subfigure}[c]{\linewidth}
            \centering
            \fixedtikzbox{%
            \begin{tikzpicture}[x=1.0cm,y=1.0cm]
                \draw[fill=red!80, fill opacity=0.35, draw=none]
                    (1.2, 0.0) ellipse [x radius=0.5cm, y radius=0.4cm];

                \draw (0.0, 0.0) node(A) [obs] {$A$};
                \draw (1.2, 0.0) node(T) [obs] {$T$};
                \draw (2.6, 0.0) node(B) [obs] {$B$};

                \draw (1.2, 1.2) node(D) [obs] {$D$};
                \draw (1.2, -1.1) node(E) [obs] {$E$};

                \draw (2.6, 1.2) node(K) [obs] {$K$};
                \draw (3.3, 2.0) node(J) [obs] {$J$};

                \draw[circle-circle] (D) -- (T);
                \draw[circle-arcsq,red] (D) -- (K);
                \draw[circle-circle,red] (D) -- (A);

                \draw[circle-arcsq] (J) -- (K);
                \draw[circle-circle] (A) -- (T);

                \draw[circle-arcsq] (T) -- (K);
                \draw[circle-arcsq] (T) -- (E); 
                \draw[circle-arcsq] (B) -- (E);
                \draw[circle-circle,red] (K) -- (B);
            \end{tikzpicture}%
            }
            \caption{$\g[L]_{T}$}
            \label{fig:inferred-local-pag-T}
        \end{subfigure}

        \vspace{3mm}

        \begin{subfigure}[c]{\linewidth}
            \centering
            \fixedtikzbox{%
            \begin{tikzpicture}[x=1.0cm,y=1.0cm]
                \draw (0.0, 0.0) node(A) [obs] {$A$};
                \draw (1.2, 0.0) node(T) [target] {$T$};
                \draw (2.6, 0.0) node(B) [obs] {$B$};

                \draw (1.2, 1.2) node(D) [obs] {$D$};
                \draw (1.2, -1.1) node(E) [obs] {$E$};

                \draw (2.6, 1.2) node(K) [obs] {$K$};
                \draw (3.3, 2.0) node(J) [obs] {$J$};

                \draw[circle-circle] (D) -- (T);
                \draw[circle-arcsq] (J) -- (K);
                \draw[circle-circle] (A) -- (T);

                \draw[circle-arcsq] (T) -- (K);
                \draw[circle-arcsq] (T) -- (E); 
                \draw[circle-arcsq] (B) -- (E);
            \end{tikzpicture}%
            }
            \caption{$\hat{\PAG}$ after learning $\g[L]_{T}$}
            \label{fig:hat-pag-after-T}
        \end{subfigure}
    \end{minipage}}
    \hfill
    \fbox{%
    \begin{minipage}[c]{0.265\linewidth}
        \centering
        \begin{subfigure}[c]{\linewidth}
            \centering
            \fixedtikzbox{%
            \begin{tikzpicture}[x=1.0cm,y=1.0cm]
                \draw[fill=red!80, fill opacity=0.35, draw=none]
                    (1.2, 1.2) ellipse [x radius=0.5cm, y radius=0.4cm];

                \draw (1.2, 0.0) node(T) [obs] {$T$};

                \draw (0.0, 1.2) node(C) [obs] {$C$};
                \draw (1.2, 1.2) node(D) [obs] {$D$};

                \draw (2.6, 1.2) node(K) [obs] {$K$};
                \draw (3.3, 2.0) node(J) [obs] {$J$};

                \draw[circle-circle] (C) -- (D);
                \draw[circle-circle] (D) -- (T);
                \draw[circle-circle,red] (C) -- (T);

                \draw[circle-arcsq] (D) -- (K);
                \draw[circle-arcsq,red] (T) -- (K);
                \draw[circle-arcsq] (J) -- (K);
            \end{tikzpicture}%
            }
            \caption{$\g[L]_{D}$}
            \label{fig:inferred-local-pag-D}
        \end{subfigure}

        \vspace{4mm}

        \begin{subfigure}[c]{\linewidth}
            \centering
            \fixedtikzbox{%
            \begin{tikzpicture}[x=1.0cm,y=1.0cm]
                \draw (0.0, 0.0) node(A) [obs] {$A$};
                \draw (1.2, 0.0) node(T) [target] {$T$};
                \draw (2.6, 0.0) node(B) [obs] {$B$};

                \draw (0.0, 1.2) node(C) [obs] {$C$};
                \draw (1.2, 1.2) node(D) [obs] {$D$};
                \draw (1.2, -1.1) node(E) [obs] {$E$};

                \draw (2.6, 1.2) node(K) [obs] {$K$};
                \draw (3.3, 2.0) node(J) [obs] {$J$};

                \draw[circle-circle] (D) -- (T);
                \draw[circle-arcsq] (D) -- (K);

                \draw[circle-circle] (C) -- (D);
                \draw[circle-arcsq] (J) -- (K);
                \draw[circle-circle] (A) -- (T);

                \draw[circle-arcsq] (T) -- (K);
                \draw[circle-arcsq] (T) -- (E); 
                \draw[circle-arcsq] (B) -- (E);
            \end{tikzpicture}%
            }
            \caption{$\hat{\PAG}$ after learning $\g[L]_{D}$}
            \label{fig:hat-pag-after-D}
        \end{subfigure}
    \end{minipage}}
    \hfill
    \fbox{%
    \begin{minipage}[c]{0.265\linewidth}
        \centering
        \begin{subfigure}[c]{\linewidth}
            \centering
            \fixedtikzbox{%
            \begin{tikzpicture}[x=1.0cm,y=1.0cm]
                \draw[fill=red!80, fill opacity=0.35, draw=none]
                    (0.0, 0.0) ellipse [x radius=0.5cm, y radius=0.4cm];

                \draw (0.0, 0.0) node(A) [obs] {$A$};
                \draw (1.2, 0.0) node(T) [obs] {$T$};
                \draw (0.0, 1.2) node(C) [obs] {$C$};

                \draw[circle-circle,red] (C) -- (T);
                \draw[circle-circle] (A) -- (T);
                \draw[circle-circle] (A) -- (C);
            \end{tikzpicture}%
            }
            \caption{$\g[L]_{A}$}
            \label{fig:inferred-local-pag-A}
        \end{subfigure}

        \vspace{4mm}

        \begin{subfigure}[c]{\linewidth}
            \centering
            \fixedtikzbox{%
            \begin{tikzpicture}[x=1.0cm,y=1.0cm]
                \draw (0.0, 0.0) node(A) [obs] {$A$};
                \draw (1.2, 0.0) node(T) [target] {$T$};
                \draw (2.6, 0.0) node(B) [obs] {$B$};

                \draw (0.0, 1.2) node(C) [obs] {$C$};
                \draw (1.2, 1.2) node(D) [obs] {$D$};
                \draw (1.2, -1.1) node(E) [obs] {$E$};

                \draw (2.6, 1.2) node(K) [obs] {$K$};
                \draw (3.3, 2.0) node(J) [obs] {$J$};

                \draw[-] (C) -- (A);
                \draw[-] (C) -- (D);
                \draw[-,ProcessBlue,line width=1.2pt] (D) -- (T);
                \draw[-,ProcessBlue,line width=1.2pt] (A) -- (T);

                \draw[-arcsq] (D) -- (K);
                \draw[-arcsq,ProcessBlue,line width=1.2pt] (T) -- (K);
                \draw[-arcsq,ProcessBlue,line width=1.2pt] (T) -- (E); 

                \draw[circle-arcsq] (B) -- (E);
                \draw[circle-arcsq] (J) -- (K);
            \end{tikzpicture}%
            }
            \caption{$\hat{\PAG}$ after learning $\g[L]_{A}$}
            \label{fig:hat-pag-after-A}
        \end{subfigure}
    \end{minipage}}
    \end{minipage}};

    \begin{scope}[overlay]
        \draw[-stealth, line width=1.5pt]
            ([xshift=4.35cm,yshift=-4.20cm]figgrid.north west) --
            ([xshift=7.35cm,yshift=-4.20cm]figgrid.north west);
        \node[font=\small, fill=white, inner sep=4pt,align=center,draw=black!25,rounded corners=4pt,]
            at ([xshift=5.85cm,yshift=-3.7cm]figgrid.north west) {$\mathrm{Waitlist}=\{D,A\}$};
        \node[font=\small, fill=white, inner sep=4pt,align=center,draw=black!25,rounded corners=4pt,]
            at ([xshift=5.85cm,yshift=-4.7cm]figgrid.north west) {$\mathrm{Donelist}=\{T\}$};

        \draw[-stealth, line width=1.5pt]
            ([xshift=11cm,yshift=-4.20cm]figgrid.north west) --
            ([xshift=14cm,yshift=-4.20cm]figgrid.north west);
        \node[font=\small, fill=white, inner sep=4pt,align=center,draw=black!25,rounded corners=4pt,]
            at ([xshift=12.5cm,yshift=-3.7cm]figgrid.north west) {$\mathrm{Waitlist}=\{A,C\}$};
        \node[font=\small, fill=white, inner sep=4pt,align=center,draw=black!25,rounded corners=4pt,]
            at ([xshift=12.5cm,yshift=-4.7cm]figgrid.north west) {$\mathrm{Donelist}=\{T,D\}$};
    \end{scope}
    \end{tikzpicture}
    }

    \caption{Illustration of the sequential procedure for identifying the local structure of the target variable $T$ in \cref{example:completed-algorithm}. The underlying SCM with latent and selection variables is shown in \cref{fig:dag}, and the corresponding ground-truth PAG is shown in \cref{fig:pag}. The first row shows the sequentially learned local structures, and the second row shows the corresponding updates to the global PAG. In each local structure, the red shaded region indicates the current target variable, and the red edges denote local results whose consistency with the global PAG is not guaranteed.}

    \label{fig:inferred-local-pag-process}
\end{figure*}

\begin{example}
\label{example:completed-algorithm}
We illustrate \method using the example in \cref{fig:inferred-local-pag-process}. 
Assume that oracle conditional independence tests are available, and let $T$ be the target variable of interest; consider the ground-truth global PAG shown in \cref{fig:pag}. 
The algorithm starts with $\mathrm{Waitlist}=\{T\}$, $\mathrm{Donelist}=\emptyset$, and $\hat{\PAG}=\emptyset$. 
The learning process proceeds as follows:
\begin{itemize}[leftmargin=15pt]
    \item \textbf{Processing $T$.} 
    The algorithm first selects $T$ from $\mathrm{Waitlist}$ and identifies its MB-based local region as
    $
    \MBplus(T)=\{A,B,D,E,J,K,T\}.
    $
    Using the marginal distribution over $\MBplus(T)$, it learns the local structure $\g[L]_{T}$ shown in \cref{fig:inferred-local-pag-T}. 
    From $\g[L]_{T}$, \method preserves the local information whose consistency with the global PAG is guaranteed. 
    The red edges in \cref{fig:inferred-local-pag-T} denote local results whose consistency with the global PAG is not guaranteed and are therefore not preserved. 
    After this update, the current graph $\hat{\PAG}$ is shown in \cref{fig:hat-pag-after-T}. 
    The algorithm then updates
    $
    \mathrm{Donelist}=\{T\}
    $
    and
    $
    \mathrm{Waitlist}=\{D,A\}.
    $

    \item \textbf{Processing $D$.} 
    The algorithm next selects $D$ from $\mathrm{Waitlist}$ and identifies
    $
    \MBplus(D)=\{C,D,J,K,T\}.
    $
    It then learns the local structure $\g[L]_{D}$ over $\MBplus(D)$, as shown in \cref{fig:inferred-local-pag-D}. 
    Again, only the locally learned structures that are guaranteed to be consistent with the global PAG are incorporated into $\hat{\PAG}$, while the red edge in $\g[L]_{D}$ is discarded. 
    This yields the updated graph shown in \cref{fig:hat-pag-after-D}. 
    The algorithm then updates
    $
    \mathrm{Donelist}=\{T,D\}
    $
    and
    $
    \mathrm{Waitlist}=\{A,C\}.
    $

    \item \textbf{Processing $A$.} 
    The algorithm then selects $A$ from $\mathrm{Waitlist}$ and identifies
    $
    \MBplus(A)=\{A,C,T\}.
    $
    It learns the local structure $\g[L]_{A}$ shown in \cref{fig:inferred-local-pag-A}. 
    After incorporating the globally consistent local information from $\g[L]_{A}$ and applying the orientation rules, \method obtains the updated graph shown in \cref{fig:hat-pag-after-A}. 
    At this point, the local structure around the original target $T$ has been determined: the edge marks incident to $T$ are determined as
    $
    A\!-\!T,\quad D\!-\!T,\quad T\rightarrow K,\quad T\rightarrow E.
    $
    Therefore, stopping rule $\mathcal{R}1$ is satisfied, and the algorithm returns the local structure around $T$ in $\hat{\PAG}$.
\end{itemize}
\end{example}

We now state the overall correctness guarantee of \method. 
The theorem shows that the local output of \method agrees with the target-specific information that would be obtained by global causal discovery.

\begin{theorem}[\textbf{Soundness and Completeness of \method}]
\label{theorem:locals-correctness}
Assume the causal Markov and faithfulness conditions, oracle access to the conditional independence relations in $P_{\mathrm{obs}}(\mathbf{O})$.
Then \method identifies the direct causes and effects of the target variable $T$ that are identifiable from the global PAG $\PAG$. 
Equivalently, the edges incident to $T$ and their identifiable endpoint marks in the output $\hat{\PAG}$ are consistent with those in the global PAG $\PAG$.
\end{theorem}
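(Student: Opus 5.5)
The argument splits into adjacency, soundness of marks, and completeness of marks, taking \cref{proposition:MB-MAG} and the results of \cref{sec:foundations-local-learning} as given.

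\textbf{Adjacencies.} By \cref{proposition:MB-MAG}, \textsc{MBLearn} returns $\MB(T)$ exactly under oracle tests. Every vertex adjacent to $T$ in $\PAG$ lies in $\MB(T)$ by \cref{def:MB-MAG}(i)--(iii). \cref{corollary:local-learning-edge} then shows that the adjacencies of $T$ in $\g[L]_T$ coincide with those in $\PAG$. Since \textsc{PreserveConsistentInfo} keeps the edges incident to the current target, the edge set incident to $T$ in $\hat{\PAG}$ is already correct after the first iteration. The same argument applies to every processed $X$.

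\textbf{Soundness of marks.} I would argue by induction over iterations that every non-circle mark in $\hat{\PAG}$ is a mark of $\PAG$, and that every edge of $\hat{\PAG}$ is an edge of $\PAG$.
\begin{itemize}
\item Marks imported by \textsc{PreserveConsistentInfo} are either uncovered collider paths starting at $X$ or unshielded colliders at $X$. These are sound by \cref{theorem:local-learning-collider-paths} and \cref{theorem:local-learning-collider-triples}.
\item Every separating set stored in $\Sepsets$ is a genuine $m$-separating set in $\MAG$. Each such set was found under the marginal of $P_{\mathrm{obs}}$, hence comes from a true CI statement over $\mathbf{O}$, and faithfulness converts it to $m$-separation.
\item Zhang's orientation rules are sound whenever applied to a graph whose adjacencies and marks are sound and whose sepsets are correct. \textsc{OrientPAG} therefore introduces only invariant marks.
\end{itemize}
One subtlety is that the rules are applied to a partial skeleton. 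Rules whose antecedents require non-adjacency of two vertices must only fire when both vertices' adjacencies are known, for example when one endpoint has been processed. I would verify this case by case (R1--R4, R8--R10) and restrict the rules if needed.

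\textbf{Completeness.} This is the main obstacle. We must show that at termination every invariant mark at $T$ in $\PAG$ appears in $\hat{\PAG}$. Under $\mathcal{R}1$ this is immediate from soundness. Under $\mathcal{R}2$ and $\mathcal{R}3$, my plan is to consider the set $\mathbf{A}$ of vertices having a potentially anterior path to $T$ in $\PAG$. The steps are:
\begin{enumerate}
\item Show that \textsc{UpdateWaitlist} eventually processes every vertex of $\mathbf{A}$. Soundness guarantees the current graph never blocks a path that is potentially anterior in $\PAG$, so such paths are never pruned early.
\item Argue that the invariant marks at $T$ in $\PAG$ are derived, via FCI's skeleton, collider and rule phase, from facts about vertices in $\mathbf{A}$ and their neighbours only. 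These facts are unshielded and uncovered colliders at or from those vertices, plus discriminating paths. Intuitively, the marks at $T$ depend only on the sub-PAG induced by vertices that could be anterior to $T$, since arrowheads pointing away block propagation of tails and circles toward $T$.
\item Show that every such collider fact at a processed vertex $X$ is recovered. Either it is an unshielded collider at $X$ whose sepset lies in $\MB(X)$ (by \cref{theorem:local-learning-edge}), or it is an uncovered collider path from $X$ (\cref{theorem:local-learning-collider-paths}). The "missing" case of \cref{remark:non-complete-local-V-structures} is recovered when the other endpoint is processed.
\item Conclude that running the complete rules on this information reproduces the marks at $T$, using the completeness of Zhang's rules applied to the relevant subgraph.
\end{enumerate}
The delicate part is step 2, formalizing the claim that $\mathcal{R}3$-excluded vertices cannot influence marks at $T$. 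For this I would first try an induction on the order in which FCI rules fire, showing that any rule application concluding a mark on an edge into $\mathbf{A}$ uses only antecedents inside $\mathbf{A}\cup\mathrm{Adj}(\mathbf{A})$.
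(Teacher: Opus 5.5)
Your adjacency and soundness parts follow the paper's proof, which is an invariant argument. Everything inserted into $\hat{\PAG}$ is justified by \cref{corollary:local-learning-edge,corollary:local-PAG-marks-correct,theorem:local-learning-collider-paths,theorem:local-learning-collider-triples}. \textsc{OrientPAG} is taken to be sound. Since $T$ is processed first, its adjacencies are correct from the first iteration.

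Your concern about firing rules on a partial skeleton is legitimate. R1, R3, R4 and R7 carry a recorded separating set, but the uncoveredness conditions in R5, R9 and R10 are not certified, and the paper simply asserts that \textsc{OrientPAG} is sound. Still, ``restrict the rules if needed'' cannot be left open, because any restriction would then have to be shown not to cost completeness.

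The genuine gap is completeness. The paper does not argue rule by rule. It closes the proof by invoking \cref{theorem:completeness-stop}: whichever of $\mathcal{R}1$--$\mathcal{R}3$ triggers termination, the structure around $T$ already agrees with $\PAG$, with $\mathcal{R}3$ handled through \cref{lemma:relative-leaf-deletion}. You mention the stopping rules but do not use that result. Your substitute leaves its load-bearing step, step 2, at the level of intuition.

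Moreover, the invariant you propose for step 2 (antecedents lie in $\mathbf{A}\cup\mathrm{Adj}(\mathbf{A})$) is false. Take the MAG with $W\rightarrow V\leftrightarrow T\rightarrow Z$ and $V\rightarrow Z$, induced by a latent $L$ with $L\rightarrow V$, $L\rightarrow T$.
\begin{itemize}
\item The only set separating $W$ and $Z$ is $\{V,T\}$.
\item The PAG is $W\circ\!\!\rightarrow V\leftarrow\!\!\circ T$, $V\rightarrow Z$, $T\rightarrow Z$.
\item The tail at $T$ on $T\rightarrow Z$ comes only from R4 on the discriminating path $\langle W,V,T,Z\rangle$; over $\{V,T,Z\}$ alone all marks are circles.
\item No vertex other than $T$ has a potentially anterior path to $T$, so $\mathbf{A}\cup\mathrm{Adj}(\mathbf{A})=\{T,V,Z\}$, which omits $W$.
\end{itemize}
\method gets this mark only because $W$, a parent of the district member $V$ of $T$, belongs to $\MB(T)$. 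The locality statement you actually need is therefore different: every separation and collider used to derive a mark at $T$ must be detectable, together with a separating set, inside $\MBplus(X)$ for some processed $X$. Proving that, for discriminating paths and for R9 and R10 as well, is the real content, and it is absent.

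Two further steps need repair.
\begin{itemize}
\item Step 4 cannot appeal to completeness of Zhang's rules on a ``relevant subgraph.'' That theorem presupposes the full skeleton and all separating sets. An induced subgraph of $\PAG$ is not the PAG of the corresponding marginal; compare $\g[L]_E$ in \cref{remark:non-complete-local-V-structures}.
\item In step 3, \cref{theorem:local-learning-edge} only separates $X$ from members of $\MB(X)$, not two members of $\MB(X)$ from each other. The ``missing'' case is exactly a collider at $X$ whose separating set lies outside $\MBplus(X)$. You must show it is recovered from a processed endpoint via \cref{theorem:local-learning-collider-paths}.
\end{itemize}
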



\subsection{Complexity Analysis}
\label{sec:complexity-analysis}

Let $|\vars[O]|$ denote the number of observed variables, and let $r$ denote the number of local structures learned sequentially by \method. The computational cost of \method mainly consists of two components: identifying local search spaces via Markov blanket (MB) learning, and performing local causal structure learning within these search spaces.

For the first component, if an MB discovery algorithm such as TC~\citep{pellet2008using} is used, the cost of learning the MBs of $r$ variables is bounded by $\mathcal{O}(r|\vars[O]|)$. For the second component, suppose a constraint-based causal discovery algorithm is used as the local learner, such as FCI algorithm~\citep{spirtes1995fci}. Then the total cost over the $r$ local subproblems is bounded by
$
\mathcal{O}\!\left(rL(|\MB|)\right),
$
where $|\MB|$ denotes the maximum MB size encountered during the procedure, and $L(\cdot)$ denotes the time complexity of the local learner as a function of the size of the local search space.

Therefore, the overall complexity of \method is bounded by
$
\mathcal{O}\!\left(r\left(|\vars[O]|+L(|\MB|)\right)\right).
$
Since the MB size is typically much smaller than the total number of observed variables, i.e., $|\MB|\ll |\vars[O]|$, \method can substantially reduce the computational burden compared with applying a global causal discovery algorithm to all observed variables.





\section{Experiments}
\label{sec:Experiments}

\subsection{Baseline Methods}
\label{sec:baseline-methods}

We compare \method with a broad collection of causal structure learning methods, including both global and local algorithms. 
For global causal discovery, we consider two types of baselines. 
The first type assumes the absence of latent variables and selection bias, including PC~\citep{spirtes1991PC} and EEMBI-PC~\citep{dong2025intersecting}. 
The second type allows for latent variables and selection bias, including FCI~\citep{spirtes1995fci}, RFCI~\citep{colombo2012rfci}, FCI$^+$~\citep{claassen2013learning}, L-MARVEL~\citep{akbari2021recursive}, and ICD~\citep{rohekar2021iterative}. 
These global methods serve as important references for assessing whether \method can achieve comparable structural accuracy while avoiding the cost of learning the entire causal structure.

For local causal structure learning, we include representative methods such as PCD-by-PCD~\citep{yin2008partial}, MB-by-MB~\citep{wang2014discovering}, CMB~\citep{gao2015local}, PSL~\citep{ling2022psl}, and GraN-LCS~\citep{liang2023gradient}, which were developed under the assumptions of no latent variables and no selection bias. 
We also include LatentLCD~\citep{ling2025local}, which allows for latent variables but does not account for selection bias. 
These local methods are designed to recover target-specific structures, but none of them can handle both latent variables and selection bias, which is the setting addressed by \method.

Additional details of all compared methods, including their learning scope, supported assumptions, and implementation settings, are provided in \cref{tab:baseline-methods} in \cref{app:supplementary-baseline-methods}.
Our source code is available at \href{https://github.com/zhengli0060/LoCaLS}{github.com/zhengli0060/LoCaLS}.

\begin{figure*}[t!]
    \centering

    \begin{subfigure}{1.0\linewidth}
        \centering
        \includegraphics[width=\linewidth]{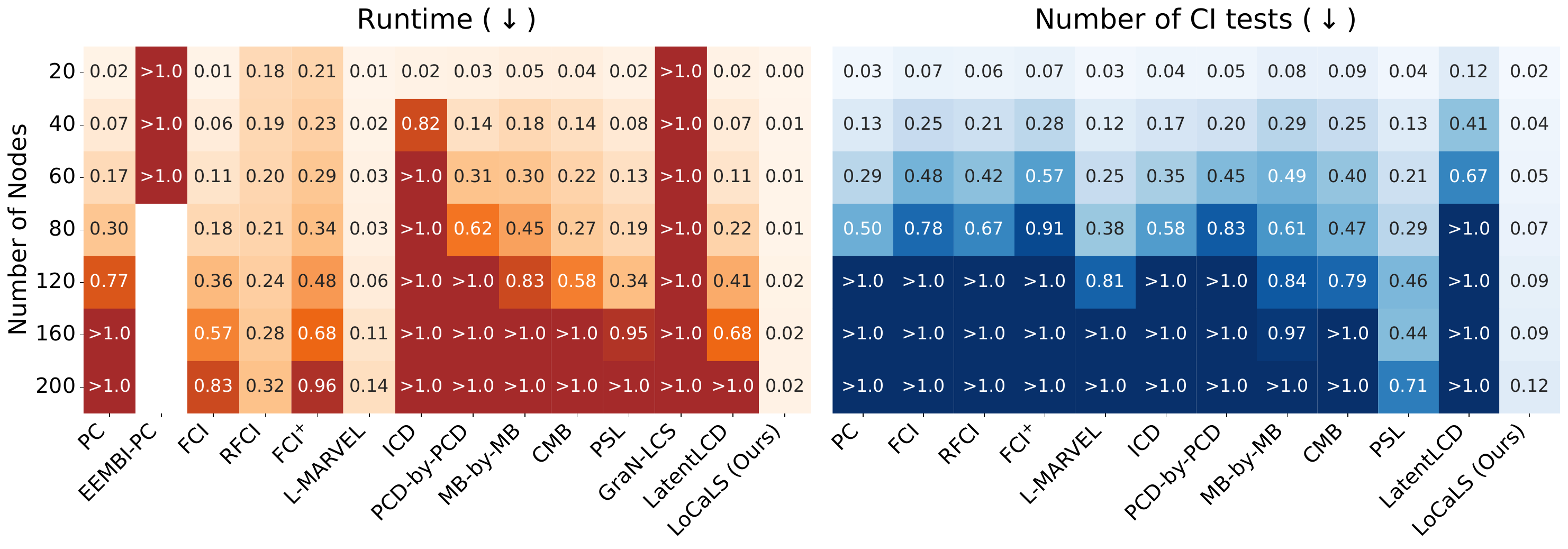}
        \label{fig:vary-nodes-runtime-ci-test}
    \end{subfigure}

    \begin{subfigure}{1.0\linewidth}
        \centering
        \includegraphics[width=\linewidth]{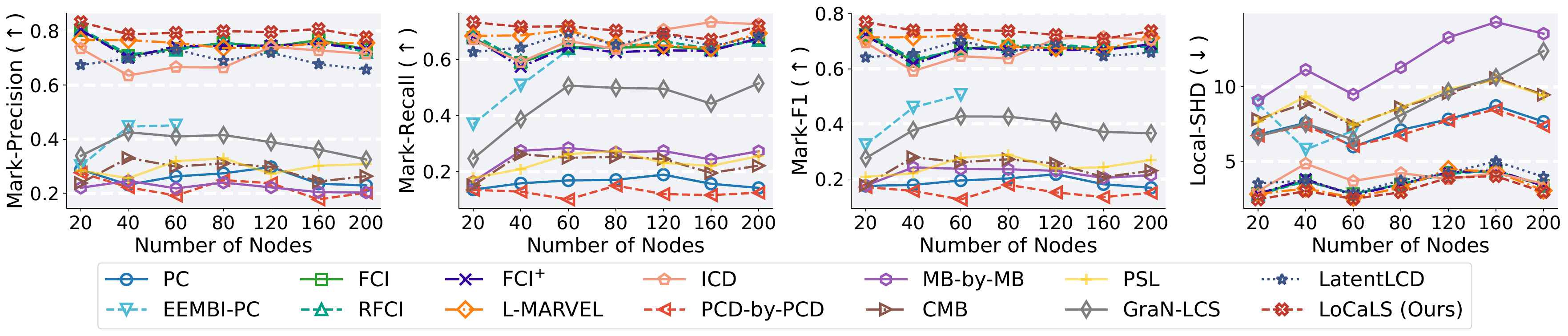}
        \vspace{-0.3cm}
        \label{fig:vary-nodes-prf}
    \end{subfigure}

    \caption{
    Results on random structures with varying dimensionality.
    The top panel reports normalized runtime and CI-test counts; runtime values are normalized by $5$ seconds and CI-test counts by $8\times10^3$, with values above these thresholds shown as $>1.0$. 
    The bottom panel reports Mark-Precision, Mark-Recall, Mark-F1, and Local-SHD.
    }
    \label{fig:ER-varying-vertices-results}
    \vspace{-0.5cm}
\end{figure*}

\subsection{Evaluation Metrics}
\label{sec:evaluation-metrics}

We evaluate all methods from two perspectives: target-specific structural accuracy and efficiency. 
Since the goal is to recover the local causal structure around a target variable $T$, all structural accuracy metrics are computed only on the edge marks incident to $T$.

For structural accuracy, we report Mark-Precision, Mark-Recall, Mark-F1, and Local-SHD. 
Mark-Precision measures how many predicted local edge marks are correct, while Mark-Recall measures how many true local edge marks are successfully recovered. 
Mark-F1 summarizes the trade-off between these two quantities. 
Local-SHD further measures the total structural discrepancy around the target variable, penalizing missing adjacencies, false adjacencies, and incorrect edge marks. 
Thus, higher Mark-Precision, Mark-Recall, and Mark-F1 indicate better recovery, whereas lower Local-SHD indicates smaller local structural error.
The formal definitions of all metrics are provided in \cref{app:supplementary-evaluation-metrics}.
For efficiency, we report the number of conditional independence (CI) tests and runtime. 
Fewer CI tests and lower runtime indicate higher computational efficiency.

\begin{figure*}[t!]
    \centering
    \begin{subfigure}{1.0\linewidth}
        \centering
        \includegraphics[width=\linewidth]{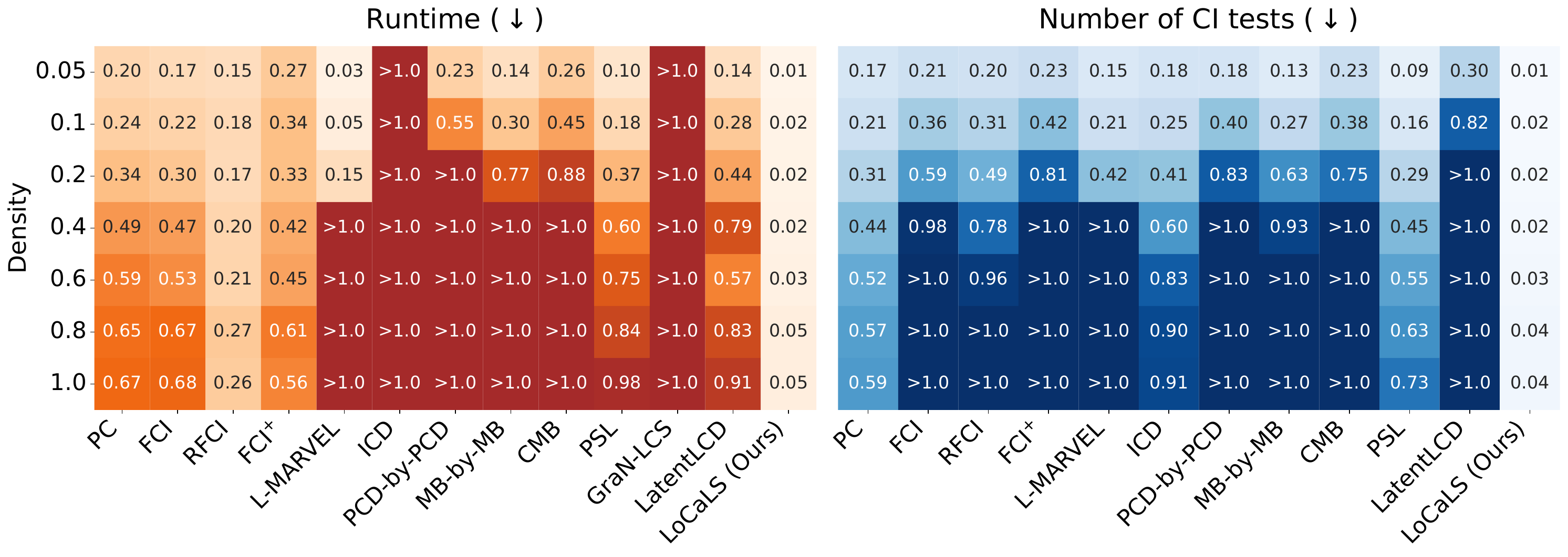}
        \label{fig:vary-density-runtime-ci-test}
    \end{subfigure}
    \begin{subfigure}{1.0\linewidth}
        \centering
        \includegraphics[width=\linewidth]{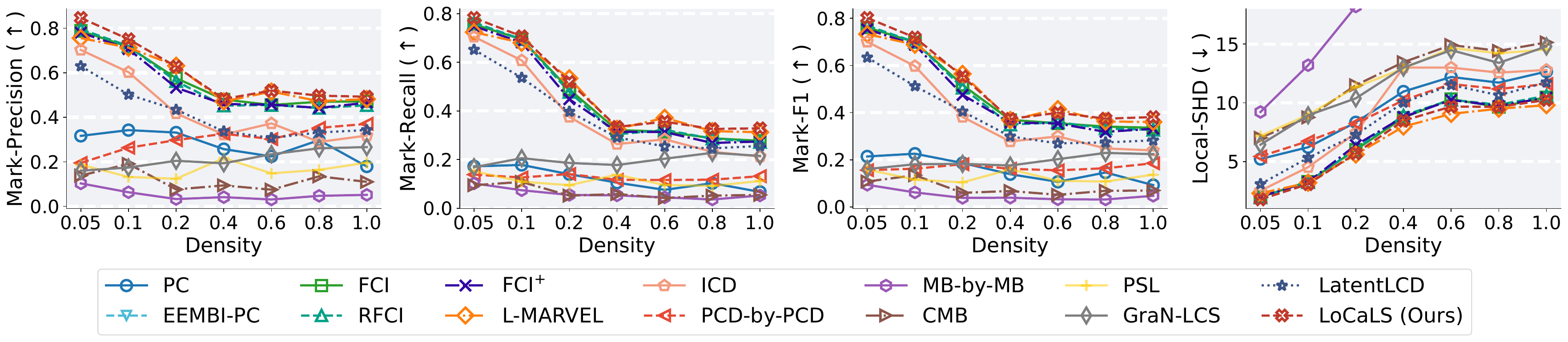}
        \label{fig:vary-density-prf}
        \vspace{-0.3cm}
    \end{subfigure}
    \caption{Results on random structures with varying graph density. The top panel reports normalized runtime and CI-test counts; runtime values are normalized by $10$ seconds and CI-test counts by $3\times10^4$, with values above these thresholds shown as $>1.0$. The bottom panel reports Mark-Precision, Mark-Recall, Mark-F1, and Local-SHD.}
    \label{fig:ER-varying-density-results}
    \vspace{-0.5cm}
\end{figure*}

\subsection{Random Structures with Varying Dimensions}
\label{sec:performance-varying-dimensions}

We first evaluate the performance of different methods as the problem dimensionality increases. Specifically, we generate random underlying causal structures from the Erdős--Rényi model $\mathrm{ER}(n,d)$~\citep{erd6s1960evolution}, where $n$ denotes the number of variables in $\vars[V]$ and $d$ specifies the expected average degree. In this experiment, we fix $d=2$ and vary $n$ over $\{20,40,60,80,120,160,200\}$. Following the convention in~\citep{colombo2012rfci,akbari2021recursive,rohekar2021iterative,wang2014discovering,ling2025local}, each generated underlying causal structure is parameterized as a linear Gaussian structural causal model. The causal coefficients are sampled uniformly from $[-1,-0.5]\cup[0.5,1]$, and all noise terms are independently drawn from a standard Gaussian distribution.

To simulate the presence of latent variables and selection bias, we randomly select $5\%$ of the variables as latent variables and another $5\%$ as unobserved selection variables, \ie, $|\vars[L]| = |\vars[S]| = 5\% \cdot n$.
Latent variables are selected from vertices in the underlying DAG that have at least two children, so that marginalizing them can induce latent confounding among observed variables.
Selection variables are selected from vertices that have at least two parents, so that conditioning on them can induce selection effects.
Following the selection bias mechanism in~\citep{versteeg2022local}, we first generate an initial pool of samples and then retain samples whose aggregate selection score $\sum \mathbf{s}$ falls within the rank interval $[0.5,0.9]$ of the initial samples.
This sampling procedure is repeated until the desired sample size is reached.
The sample size is fixed at $1000$ for all settings, and all reported results are averaged over $50$ independently generated datasets. Target variables are selected from vertices with relatively large neighborhoods.\footnote{Summary statistics of the underlying structures are provided in \cref{app:supplementary-experiments-varying-dimensions}.}

The results are summarized in \cref{fig:ER-varying-vertices-results}, with detailed runtime and CI-test results provided in \cref{tab:vary-dimension-runtime,tab:vary-dimension-ci-test} of \cref{app:supplementary-experiments-varying-dimensions}.
EEMBI-PC is excluded from some plots due to excessive runtime.  
Overall, \method achieves strong target-specific structural accuracy while remaining highly efficient across all dimensions. 
Compared with global methods, \method requires substantially fewer CI tests and shorter runtime, while achieving comparable or better structural accuracy. 
Compared with existing local methods, \method attains substantially better structural accuracy while preserving high efficiency, since these methods do not account for latent variables and selection bias.

As the number of dimensions increases, the task becomes more challenging for all methods. 
Nevertheless, the efficiency advantage of \method over global methods becomes more pronounced in higher-dimensional settings, while its structural accuracy advantage over most baselines remains substantial. 
For example, when $n=200$, \method performs fewer than $10^3$ CI tests, whereas most competing methods require several thousand to tens of thousands of CI tests. 
The runtime results show a similar trend: \method remains highly efficient even in the largest setting, whereas many baselines become substantially more expensive.

\begin{figure*}[t!]
    \centering

    \begin{subfigure}{1.0\linewidth}
        \centering
        \includegraphics[width=\linewidth]{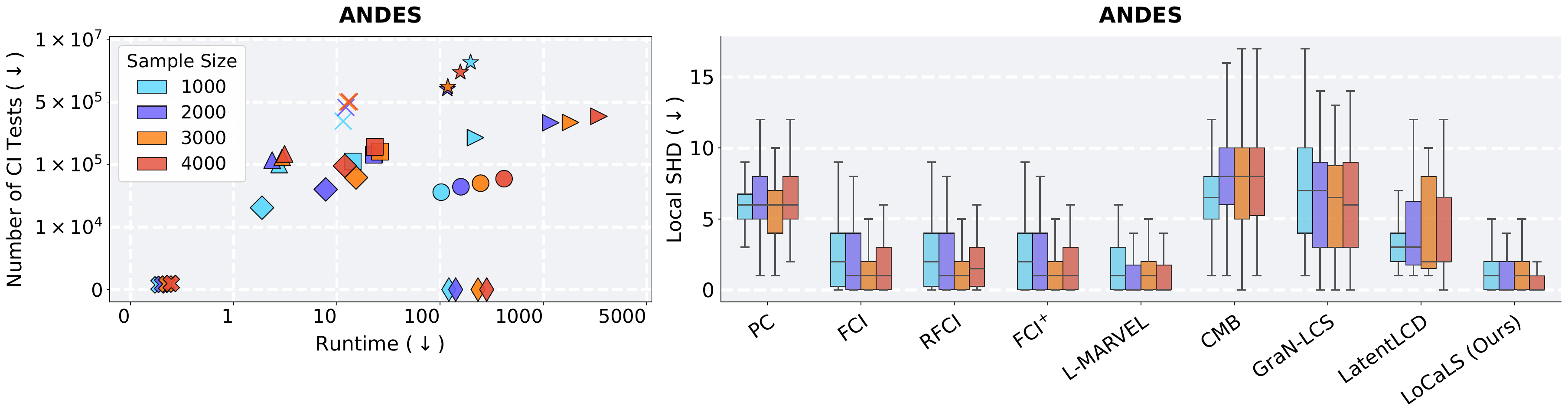}
        \label{fig:andes-main}
    \end{subfigure}
    \begin{subfigure}{1.0\linewidth}
        \centering
        \includegraphics[width=\linewidth]{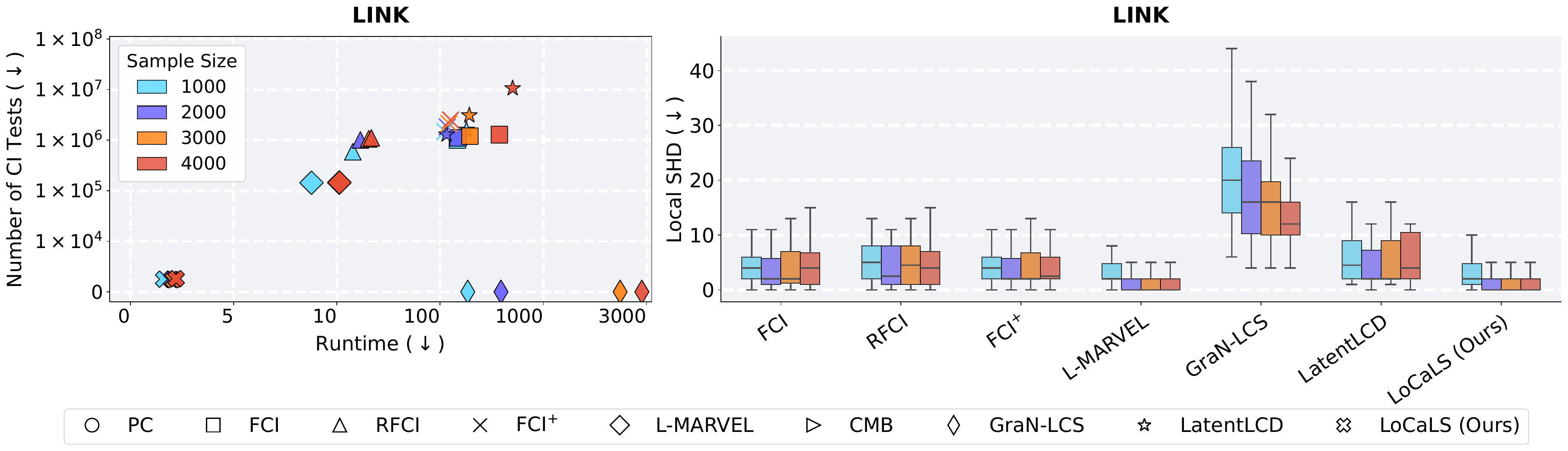}
        \vspace{-0.6cm}
        \label{fig:link-main}
    \end{subfigure}
    \caption{
    Results on the ANDES and LINK benchmark structures under varying sample sizes. The top and bottom panels correspond to ANDES and LINK, respectively. In each panel, the left plot reports runtime and CI-test counts, and the right plot reports Local-SHD. The symbol $\downarrow$ indicates that lower values are better. 
    }
    \label{fig:benchmark-results}
    \vspace{-0.5cm}
\end{figure*}

\subsection{Random Structures with Varying Density}
\label{sec:performance-varying-density}

We next evaluate the performance of \method and the baseline methods as the density of the underlying graph increases. 
Specifically, we generate random underlying causal structures from the Erdős--Rényi model $\mathrm{ER}(n,p)$~\citep{erd6s1960evolution}, where $p$ denotes the probability that an edge is present between a pair of vertices. Following the dense regime considered in~\citep{mokhtarian2025recursive,akbari2021recursive}, we use $\log n/n$ as the reference upper density level. To obtain a gradual transition from sparse to dense graphs, we introduce a density scaling factor $s$ and set \[ p = s\cdot \frac{\log n}{n}. \] In our experiments, we fix $n=100$ and vary $s$ over $\{0.05,0.1,0.2,0.4,0.6,0.8,1.0\}$. Here, $s$ is referred to as the density level in the plots. All other settings are the same as those in \cref{sec:performance-varying-dimensions}.

The results are summarized in \cref{fig:ER-varying-density-results}, with detailed numerical runtime and CI-test results provided in \cref{tab:vary-density-runtime,tab:vary-density-ci-test} of \cref{app:supplementary-experiments-varying-density}. As the graph density increases, causal discovery becomes increasingly challenging: all methods show some degree of degradation in structural accuracy, either through lower Mark-Precision, Mark-Recall, or Mark-F1, or through larger Local-SHD. Despite this difficulty, \method generally achieves better target-specific accuracy across different density levels. In particular, it outperforms existing local methods that do not account for latent variables and selection bias, and remains competitive with global methods designed for this more general setting. 
In terms of efficiency, \method shows a clear advantage. Across all density levels, it requires substantially fewer CI tests and shorter runtime than the baselines. This advantage becomes more pronounced in denser graphs, where global methods and many local baselines suffer from larger conditioning-set search space. EEMBI-PC is omitted from this experiment due to excessive runtime. 

\subsection{Real-World Benchmark Structures}
\label{sec:performance-real-world-structures}

We further evaluate the compared methods on four real-world benchmark structures, MILDEW, BARLEY, ANDES, and LINK, drawn from the Bayesian Network Repository\footnote{\url{https://www.bnlearn.com/bnrepository/}; detailed descriptions of these structures are available therein. Summary statistics of the underlying structures in this experiment are reported in \cref{tab:real-structure-degree} of \cref{app:supplementary-experiments-real-world-structures}.}. Their dimensionalities range from $35$ to $724$ variables. This experiment complements the random-graph evaluations in \cref{sec:performance-varying-dimensions,sec:performance-varying-density} by testing whether the observed advantages persist on non-random real-world structures. We vary the sample size over $\{1000,2000,3000,4000\}$ and keep the remaining experimental settings the same as those in \cref{sec:performance-varying-dimensions}.

\cref{fig:benchmark-results} reports the main results on the two larger structures, ANDES and LINK. 
Complete results on MILDEW and BARLEY, as well as the Mark-Precision, Mark-Recall, and Mark-F1 curves for all four structures, are provided in \cref{app:supplementary-experiments-real-world-structures}.
Note that some methods are excluded from certain plots due to excessive runtime, and GraN-LCS is reported with zero CI tests because it does not perform CI tests.
The results show that \method scales well on real-world benchmark structures. 
On ANDES, \method requires only around $10^3$ CI tests and finishes within one second across all sample sizes. 
On LINK, which contains $724$ variables, \method still requires only a few thousand CI tests and finishes within a few seconds. 
In terms of accuracy, \method maintains low and stable Local-SHD on both ANDES and LINK as the sample size increases, indicating that its efficiency gain does not come at the cost of target-specific structural accuracy.
The supplementary Mark-Precision, Mark-Recall, and Mark-F1 curves further show that \method is consistently among the top-performing methods. 
Similar trends are observed on MILDEW and BARLEY.

\subsection{Sensitivity to the Number of Latent Variables} \label{sec:sensitivity-latent} 

We evaluate how sensitive different methods are to the number of latent variables. This experiment is designed to isolate the effect of latent confounding while keeping the observed problem size fixed. Specifically, we first generate an observed backbone from $\mathrm{ER}(n,d)$ models with $n=100$ and $d=2$, and then augment the underlying DAG with additional latent and selection variables. The latent ratio $r_L$ is varied over $\{0.05,0.10,0.15,0.20\}$, where $r_L=0.05$ corresponds to adding $5=r_L\cdot n$ latent variables. To focus on the effect of latent variables, the selection ratio is fixed at $r_S=0.05$. All other experimental settings follow \cref{sec:performance-varying-dimensions}. 

All results are provided in \cref{app:supplementary-sensitivity-latent}. 
As the latent ratio increases, 
all methods show some degree of degradation in structural accuracy, either through lower Mark-Precision, Mark-Recall, or Mark-F1, or through larger Local-SHD.
Notably, as the latent ratio increases, \method retains its advantage in requiring substantially fewer CI tests and shorter runtime than the baselines, while maintaining higher Mark-Precision, Mark-Recall, Mark-F1, and lower Local-SHD.

\subsection{Sensitivity to the Number of Selection Variables}
\label{sec:sensitivity-selection}

We further evaluate the sensitivity of different methods to the number of selection variables. 
This experiment complements \cref{sec:sensitivity-latent} by isolating the effect of selection bias while keeping the observed problem size fixed. 
Specifically, we first generate an observed backbone from $\mathrm{ER}(n,d)$ models with $n=100$ and $d=2$, and then augment the underlying DAG with additional latent and selection variables. 
The selection ratio $r_S$ is varied over $\{0.05,0.10,0.15,0.20\}$, where $r_S=0.05$ corresponds to adding $5=r_S\cdot n$ selection variables. 
To focus on the effect of selection variables, the latent ratio is fixed at $r_L=0.05$. 
All other experimental settings follow \cref{sec:performance-varying-dimensions}.

All results are provided in \cref{app:supplementary-sensitivity-selection}. 
Consistent with the latent-variable sensitivity experiment, increasing the selection ratio leads to some degradation in structural accuracy for all methods.
As expected, \method still keeps the advantage of requiring substantially fewer CI tests and shorter runtime than the baselines, while maintaining higher Mark-Precision, Mark-Recall, Mark-F1, and lower Local-SHD, even as the selection ratio increases.

\section{Applications to Gene Expression Datasets}
\label{sec:real-datasets}

In this section, we apply \method to two gene expression datasets: a low-dimensional \emph{Arabidopsis thaliana} dataset with 33 genes and a high-dimensional \emph{melanoma} dataset with over 1,000 genes. 
Together, these applications evaluate the effectiveness of \method in both low- and high-dimensional real-world biological settings.

\subsection{Application to \emph{Arabidopsis thaliana} Gene Expression Data}
\label{sec:real-data-Arabidopsis}

In this subsection, we apply \method to the gene expression dataset from Wille et al.~\citep{wille2004sparse}. 
The dataset contains expression measurements of \emph{Arabidopsis thaliana} collected under 118 experimental conditions, including light and dark treatments and exposure to growth hormones. 
In \emph{Arabidopsis thaliana}, isoprenoids are synthesized through two major pathways located in distinct cellular compartments: the cytosolic mevalonate (MVA) pathway and the plastidial non-mevalonate, or MEP, pathway. 
The dataset includes the 33 genes analyzed in~\citep{wille2004sparse}, which are involved in these two pathways or localized in the mitochondrion. 
Although the true causal graph is not available, biological studies and previous graphical-model analyses suggest a modular organization: genes within the MEP and MVA pathways tend to form dense pathway-specific groups, while cross-talk between the two compartments is mediated by a smaller number of bridge loci~\citep{laule2003crosstalk,rodrieguez2004distinct,wille2004sparse,frot2019robust}.

\begin{figure}[hpt!]
    \centering
    \captionsetup[subfigure]{font=small, skip=2pt}

    \begin{subfigure}[t]{0.48\linewidth}
        \centering
        \begin{tikzpicture}

            \draw[densely dashed, rounded corners=8pt, line width=0.5pt]
                (-2.2,-1.5) -- (2.1,-1.5) -- (2.1,-0.50) --
                (-0.90,1.55) -- (-2.2,1.55) --
                (-2.2,0.35) -- cycle;
            \node[font=\scriptsize] at (-1.5,1.75) {MEP};


            \draw (0.0, 0.0) node(MCT) [genetarget, inner sep=2pt, font=\footnotesize] {$\mathrm{MCT}$};

            \draw (-1.5, -1.0) node(DXR) [geneobs, inner sep=2pt, font=\footnotesize] {$\mathrm{DXR}$};
            \draw (1.5, -1.0) node(CMK) [geneobs, inner sep=2pt, font=\footnotesize] {$\mathrm{CMK}$};
            \draw (-1.5, 1.0) node(MECPS) [geneobs, inner sep=2pt, font=\scriptsize] {$\mathrm{MECPS}$};
            \draw (1.5, 1.0) node(FPPS1) [geneobs, inner sep=2pt, font=\scriptsize] {$\mathrm{FPPS1}$};

            \draw[-arcsq]       (MCT) -- (DXR);
            \draw[-arcsq]       (MCT) -- (CMK);
            \draw[arcsq-arcsq]  (MCT) -- (MECPS);
            \draw[arcsq-arcsq]  (MCT) -- (FPPS1);

        \end{tikzpicture}
        \caption{$\target = \mathrm{MCT}$}
        \label{fig:local-pag-MCT}
    \end{subfigure}
    \hfill
    \begin{subfigure}[t]{0.48\linewidth}
        \centering
        \begin{tikzpicture}


            \draw[densely dashed, rounded corners=8pt, line width=0.5pt]
                (2.2,1.5) -- (-2.1,1.5) -- (-2.1,0.50) --
                (0.90,-1.55) -- (2.2,-1.55) --
                (2.2,-0.35) -- cycle;
            \node[font=\scriptsize] at (1.50,1.75) {MVA};

            \draw (0.0, 0.0) node(HMGS) [genetarget, inner sep=2pt, font=\footnotesize] {$\mathrm{HMGS}$};

            \draw (-1.5, -1.0) node(PPDS2) [geneobs, inner sep=2pt, font=\scriptsize] {$\mathrm{PPDS2}$};
            \draw (1.5, -1.0) node(FPPS1) [geneobs, inner sep=2pt, font=\scriptsize] {$\mathrm{FPPS1}$};
            \draw (-1.5, 1.0) node(FPPS2) [geneobs, inner sep=2pt, font=\scriptsize] {$\mathrm{FPPS2}$};
            \draw (1.5, 1.0) node(DPPS1) [geneobs, inner sep=2pt, font=\scriptsize] {$\mathrm{DPPS1}$};

            \draw[-arcsq]       (HMGS) -- (PPDS2);
            \draw[arcsq-arcsq]  (HMGS) -- (FPPS1);
            \draw[-arcsq]       (HMGS) -- (FPPS2);
            \draw[arcsq-arcsq]  (HMGS) -- (DPPS1);
        \end{tikzpicture}
        \caption{$\target = \mathrm{HMGS}$}
        \label{fig:local-pag-HMGS}
    \end{subfigure}

    \caption{
    Local causal structures learned for representative \emph{Arabidopsis thaliana} target genes.
    Dashed contours indicate genes from the same pathway group.
    (a) Target $\mathrm{MCT}$;
    (b) target $\mathrm{HMGS}$.
    }
    \label{fig:main-text-Arabidopsis}
\end{figure}

\textbf{Results.}
\cref{fig:main-text-Arabidopsis} reports the learned local structure for two representative target genes, $\mathrm{MCT}$ and $\mathrm{HMGS}$. 
Comparing our results with the known isoprenoid metabolic organization and the graphical models reported by Wille et al.~\citep[Figure~3]{wille2004sparse}, we observe several biologically meaningful patterns.
\begin{itemize}[leftmargin=10pt,itemsep=2pt,topsep=2pt,parsep=0pt]
    \item 
    \emph{(1) Intra-pathway Connections:} 
    The local neighborhoods recovered by \method are mainly concentrated within the corresponding biosynthetic pathways.
    For the MEP target $\mathrm{MCT}$, the recovered neighbors include $\mathrm{DXR}$, $\mathrm{CMK}$, and $\mathrm{MECPS}$, which form a closely connected segment of the MEP pathway and agree with the pathway structure reported in~\citep{wille2004sparse}.
    For the MVA-related target $\mathrm{HMGS}$, the recovered neighbors include $\mathrm{FPPS1}$, $\mathrm{FPPS2}$, and $\mathrm{DPPS1}$, which are located in the cytosolic MVA pathway. The recovered edges are also consistent with~\citep{wille2004sparse}, except for the edge involving $\mathrm{DPPS1}$.

    \item
    \emph{(2) Cross-pathway and Orientation Patterns:}
    Beyond pathway-local connections, \method also identifies sparse links across pathway regions, such as $\mathrm{MCT}$--$\mathrm{FPPS1}$ and $\mathrm{HMGS}$--$\mathrm{PPDS2}$. These links are compatible with previously reported cross-talk between plastidial and cytosolic isoprenoid biosynthesis~\citep{laule2003crosstalk,rodrieguez2004distinct,wille2004sparse}. Moreover, the recovered edge-mark information between $\mathrm{MCT}$ and $\mathrm{CMK}$, and between $\mathrm{HMGS}$ and $\mathrm{FPPS2}$, is consistent with known biological relationships~\citep{wille2004sparse}. The remaining bi-directed edges indicate potential latent confounding, possibly arising from unmeasured genes or regulatory factors outside the observed gene set.

\end{itemize}
Overall, these results show that \method can recover compact target-specific structures that preserve the main modular organization of the isoprenoid network and retain biologically plausible cross-pathway information. 
Additional local structures for $\mathrm{DXR}$, $\mathrm{PPDS1}$, and $\mathrm{MECPS}$ are provided in \cref{app:supplementary-real-data-Arabidopsis}.

\subsection{Application to \emph{Melanoma} Gene Expression Data}
\label{sec:real-data-melanoma}

To further evaluate the performance of \method on high-dimensional real-world data, we apply it to the melanoma (cancer) gene expression dataset from Frangieh et al.~\citep{frangieh2021multimodal}. The dataset contains $218{,}331$ melanoma cells and $23{,}712$ genes, with perturbations targeting $249$ genes. For each cell, the dataset records the perturbation identity, i.e., the perturbation target or no perturbation, together with a genome-wide gene expression count vector. Cells were measured under three experimental conditions: co-culture with patient-derived tumor-infiltrating T cells, which can recognize and kill melanoma cells; interferon (IFN)-$\gamma$ stimulation; and a control condition. We treat these three conditions as separate datasets.

Following~\citep{versteeg2022local,chevalley2025large,li2026root}, we use only the non-perturbed cells in each condition to learn the local causal structure around a target gene, and evaluate the learned structure using perturbation-induced distributional changes, since no ground-truth causal graph is available for this dataset.
We retain genes that show nonconstant expression and are expressed in more than $60\%$ of cells within each condition. After preprocessing, the three condition-specific datasets contain $5{,}039$ cells and $1{,}494$ genes for the control condition, $10{,}403$ cells and $1{,}139$ genes for IFN-$\gamma$ stimulation, and $7{,}586$ cells and $1{,}640$ genes for co-culture. We then learn target-specific local causal structures separately for each condition.

For evaluation, we use the two-sided Mann--Whitney U rank test~\citep{mann1947test,nachar2008mann,chevalley2025large} to assess whether each predicted definite causal mark in the learned local structures is supported by perturbation-induced distributional changes. 
Specifically, a predicted mark is counted as a true positive if perturbing the corresponding source gene leads to a significant distributional change in the target gene. 
We define the Interventional Validation Rate (IVR) as the fraction of predicted local definite causal marks validated by perturbation data. 
A higher IVR indicates stronger support from interventional evidence.

\textbf{Target genes.}
We report the main results for $\mathrm{B2M}$ in \cref{tab:gene-target-performance-B2M}. 
$\mathrm{B2M}$ is a key component of MHC class I antigen presentation and is directly related to tumor immune recognition; moreover, it is a biologically meaningful target in the cancer immune-evasion setting studied by Frangieh et al.~\citep{frangieh2021multimodal}. 
Additional results for $\mathrm{HLA}$-$\mathrm{A}$, another antigen-presentation gene relevant to immune response, and $\mathrm{CD59}$, a complement-regulatory gene related to immune escape, are provided in \cref{app:supplementary-melanoma-gene-expression}.

\begin{table}[hpt!]
\centering
\caption{Comparison on melanoma gene expression data for target $\mathrm{B2M}$.}
\label{tab:gene-target-performance-B2M}
\begin{threeparttable}
\begin{tabular}{l l C{2.5cm} C{2.5cm} C{2.0cm}}
\toprule
\textbf{Condition} & \textbf{Algorithm} & \textbf{Runtime $\downarrow$} & 
\textbf{\#CI Tests $\downarrow$} & \textbf{IVR $\uparrow$} \\
\midrule
\multirow{3}{*}{Co-culture} 
    & L-MARVEL & 1717.59 & 13064931 & 0.54 \\
    & GraN-LCS & 2172.13 & -- & 0.33 \\
    & \cellcolor{gray!10}LoCaLS (Ours) & \cellcolor{gray!10}\textbf{13.35} & \cellcolor{gray!10}\textbf{177727} & \cellcolor{gray!10}\textbf{0.91} \\
\midrule
\multirow{3}{*}{IFN-$\gamma$} 
    & L-MARVEL & 810.05 & 5632375 & 0.52 \\
    & GraN-LCS & 1888.14 & -- & 0.33 \\
    & \cellcolor{gray!10}LoCaLS (Ours) & \cellcolor{gray!10}\textbf{52.10} & \cellcolor{gray!10}\textbf{886062} & \cellcolor{gray!10}\textbf{0.70} \\
\midrule
\multirow{3}{*}{Control} 
    & L-MARVEL & 19.58 & 1194846 & 0.50 \\
    & GraN-LCS & 1909.46 & -- & 0.47 \\
    & \cellcolor{gray!10}LoCaLS (Ours) & \cellcolor{gray!10}\textbf{4.76} & \cellcolor{gray!10}\textbf{50286} & \cellcolor{gray!10}\textbf{0.57} \\
\bottomrule
\end{tabular}
\begin{tablenotes}[flushleft]
\footnotesize
\item Note: 
The best result in each group is highlighted in bold.
\end{tablenotes}
\end{threeparttable}
\end{table}

\textbf{Results.}
Due to the high dimensionality of this dataset, we report results only for L-MARVEL and GraN-LCS as representative global and local baselines, respectively. The remaining compared methods are omitted because their runtime exceeds two hours.
\cref{tab:gene-target-performance-B2M} shows that \method achieves the highest IVR for $\mathrm{B2M}$ under all three conditions. These results indicate that the local structures learned by \method are not only computationally efficient, but also better aligned with perturbation-supported regulatory effects in this single-cell immune-evasion dataset. In contrast, GraN-LCS attains the lowest IVR, which is consistent with its inability to account for latent variables and selection bias. The supplementary results in \cref{app:supplementary-melanoma-gene-expression} provide additional evidence on other immune-relevant targets, further supporting the effectiveness of \method on high-dimensional biological data.

\section{Conclusion and Future Work}

In this work, we studied local causal structure learning in the presence of latent variables and selection bias. 
We proposed \method, a local causal discovery algorithm for identifying the direct causes and effects of a target variable. 
Compared with existing global methods, \method substantially reduces computational cost while remaining sound and complete, under standard assumptions, with respect to the target-specific information identifiable from global causal discovery. 
Compared with existing local methods, \method accommodates both latent variables and selection bias, thereby addressing a more realistic and challenging setting.

Extensive experiments on random structures with varying dimensionalities, graph densities, latent-variable ratios, and selection-variable ratios, demonstrate that \method achieves strong target-specific structural accuracy while substantially reducing runtime and the number of conditional independence tests. 
Experiments on four real-world benchmark structures further confirm its efficiency and reliability. 
Moreover, applications to one low-dimensional and one high-dimensional gene expression datasets with more than $1000$ genes provide additional evidence for the practical utility of \method in real-world biological data analysis.

Several directions remain open for future work. 
First, some causal directions are inherently unidentifiable from purely observational data without additional assumptions, and can only be characterized up to a Markov equivalence class. 
An important direction is therefore to incorporate additional information, such as knowledge of data-generation mechanisms~\citep{kaltenpoth2023nonlinear} or expert knowledge~\citep{wang2023sound,zheng2026local}, to further refine local causal structures. 
Second, combining observational and interventional data~\citep{hauser2015jointly} is a promising direction, as interventional information may help resolve ambiguities that cannot be eliminated using observational conditional independence information alone.

\bibliographystyle{plainnat}
\bibliography{reference.bib}

\clearpage
\appendix

\crefalias{section}{appendix}
\crefalias{subsection}{appendix}
\crefalias{subsubsection}{appendix}

\section*{Appendix Contents}
\addcontentsline{toc}{section}{Appendix Contents}
\startcontents[appendix]
\printcontents[appendix]{}{1}{}

\section{Supplementary Preliminaries}
\label{Appendix:Preliminaries}

\begin{center}
  \begin{table}[H]
    \centering
    \small
    \renewcommand{\arraystretch}{1.2}
    \caption{Main symbols and notation used in this paper.}
    \label{table:list-symbols}
    \begin{tabular}{@{}p{0.20\textwidth}p{0.74\textwidth}@{}}
    \toprule
    \textbf{Symbol} & \textbf{Description}\\ 
    \midrule
    $\vars[V]$ & The full set of variables in the underlying causal system.\\
    $\vars[O]$, $\vars[L]$, $\vars[S]$ & The sets of observed, latent, and selection variables, respectively, with $\vars[V]=\vars[O]\cup\vars[L]\cup\vars[S]$.\\
    $\target\in\vars[O]$ & The target observed variable whose local causal structure is to be recovered.\\
    $|\vars[K]|$ & The cardinality of a variable set $\vars[K]$.\\
    $\DAG$ & The underlying directed acyclic graph (DAG) over $\vars[V]$.\\
    $\MAG$ & The maximal ancestral graph (MAG) induced over the observed variables $\vars[O]$.\\
    $\PAG$ & The ground-truth partial ancestral graph (PAG) over $\vars[O]$.\\
    $\hat{\PAG}$ & The PAG estimate maintained and updated by \method.\\
    $\g[L]_{X}$ & The locally learned graph over the MB-based region $\MBplus(X)$.\\
    $\MB(X,\MAG)$, $\MB(X)$ & The Markov blanket of $X$ in $\MAG$; after identifiability is established, $\MB(X)$ denotes the common MB.\\
    $\MBplus(X)$ & The MB-based local region $\MB(X)\cup\{X\}$.\\
    $\An(X,\g)$, $\Anplus(X,\g)$ & The ancestors of $X$ in $\g$, and the ancestors including $X$ itself.\\
    $\Ant(X,\g)$, $\Antplus(X,\g)$ & The anterior vertices of $X$ in $\g$, and the anterior vertices including $X$ itself.\\
    $\Sepset(X,Y)$, $\Sepsets$ & A separating set for $X$ and $Y$, and the collection of separating sets recorded during learning.\\
    $\vars[X]\CI \vars[Y]\mid \vars[Z]$ & $\vars[X]$ and $\vars[Y]$ are conditionally independent given $\vars[Z]$; graphically, this corresponds to m-separation under the stated assumptions.\\
    $\vars[X]\nCI \vars[Y]\mid \vars[Z]$ & $\vars[X]$ and $\vars[Y]$ are conditionally dependent given $\vars[Z]$.\\
    $n,d,p,s,r_L,r_S$ & Experimental parameters: number of variables, expected degree, edge probability, density scaling factor, latent ratio, and selection ratio.\\
    \bottomrule
    \end{tabular}
    \end{table}  
\end{center}

We first recall several definitions used throughout the paper.

We refer to graphs with three possible edge marks---tail `$-$', arrowhead `$>$', and circle `$\circ$'---as \emph{partial mixed graphs} (PMGs). Hence, a PMG may contain six types of edges:
$
\rightarrow,\ \leftrightarrow,\ \circ\!\rightarrow,\ \circ\!\--,\ \circ\!\--\!\circ,\ \--
$.

\begin{definition}[\textbf{Potentially Anterior Path}]
\label{def:potentially-anterior-path}
A path $\pi$ between $X$ and $Y$ in a PMG is a \emph{potentially anterior path} from $X$ to $Y$ if there is no arrowhead on the path pointing toward $X$.
\end{definition}

\begin{definition}[\textbf{Causal Markov Condition}~\citep{spirtes2000causation}]
\label{def:markov-condition}
    Let $\g$ be a causal graph with vertex set $\mathbf{V}$, and let $P(\mathbf{V})$ denote a probability distribution over $\mathbf{V}$ generated by the causal structure represented by $\g$. 
    We say that $P(\mathbf{V})$ satisfies the \emph{Causal Markov Condition} with respect to $\g$ if, for any triplet of disjoint subsets $\mathbf{X}, \mathbf{Y}, \mathbf{Z} \subseteq \mathbf{V}$, 
    whenever $\mathbf{X}$ and $\mathbf{Y}$ are m-separated by $\mathbf{Z}$ in $\g$, then $\mathbf{X}$ and $\mathbf{Y}$ are conditionally independent given $\mathbf{Z}$ in $P(\mathbf{V})$. 
\end{definition}

\begin{definition}[\textbf{Causal Faithfulness Condition}~\citep{spirtes2000causation}]
\label{def:faithfulness-condition}
    Let $\g$ be a causal graph with vertex set $\mathbf{V}$, and let $P(\mathbf{V})$ denote a probability distribution over $\mathbf{V}$ generated by the causal structure represented by $\g$. 
    We say that $P(\mathbf{V})$ satisfies the \emph{Causal Faithfulness Condition} with respect to $\g$ if, for any triplet of disjoint subsets $\mathbf{X}, \mathbf{Y}, \mathbf{Z} \subseteq \mathbf{V}$, 
    whenever $\mathbf{X}$ and $\mathbf{Y}$ are conditionally independent given $\mathbf{Z}$ in $P(\mathbf{V})$, then $\mathbf{X}$ and $\mathbf{Y}$ are m-separated by $\mathbf{Z}$ in $\g$. 
\end{definition}

Under these two conditions, the conditional independence relations among the observed variables correspond exactly to m-separation relations in the causal graph. 
Accordingly, when the context is clear, we use the notation $\CI$ interchangeably to denote conditional independence in the distribution and m-separation in MAGs.

The following notions are mainly used in the proofs.

\begin{definition}[\textbf{Ancestors and Anterior Vertices}]
\label{def:ancestors-anterior}
Let $\g$ be a mixed graph and let $X,Y$ be vertices in $\g$. We say that $X$ is an \emph{ancestor} of $Y$ if there exists a directed path
$
X\rightarrow\cdots\rightarrow Y
$
in $\g$. The set of all ancestors of $X$ in $\g$ is denoted by $\An(X,\g)$.

We say that $X$ is \emph{anterior} to $Y$ if there exists a path from $X$ to $Y$ consisting only of undirected ($\--$) and directed ($\rightarrow$) edges, such that no edge on the path is oriented toward $X$. The set of all anterior vertices of $X$ in $\g$ is denoted by $\Ant(X,\g)$.
\end{definition}

For a vertex set $\vars[X]$, we define 
\[ \An(\vars[X],\g)=\bigcup_{X\in\vars[X]}\An(X,\g), \qquad \Anplus(\vars[X],\g)=\An(\vars[X],\g)\cup\vars[X], \] 
and similarly, 
\[ \Ant(\vars[X],\g)=\bigcup_{X\in\vars[X]}\Ant(X,\g), \qquad \Antplus(\vars[X],\g)=\Ant(\vars[X],\g)\cup\vars[X]. 
\]

\begin{definition}[\textbf{Induced Subgraph}]
    \label{def:induced-graph}
    Given a graph $\g = (\mathbf{V}, \mathbf{E})$ and a subset $\mathbf{V}' \subseteq \mathbf{V}$, the \emph{induced subgraph} of $\g$ on $\mathbf{V}'$, denoted by $\g[G][\mathbf{V}']$, is the graph whose vertex set is $\mathbf{V}'$ and whose edge set consists of all edges in $\g$ with both endpoints in $\mathbf{V}'$.
\end{definition}

\begin{definition}[\textbf{Augmented Graph}]
    \label{def:augmented-graph}
    Given a mixed graph $\g$, the \emph{augmented graph} of $\g$, denoted by $\aug[\g]$, is an undirected graph over the same set of vertices as $\g$ such that two vertices are adjacent in $\aug[\g]$ if and only if they are collider connected in $\g$.
\end{definition}

Two vertices $X$ and $Y$ are \emph{collider connected} in a mixed graph $\g$ if there exists a path between $X$ and $Y$ in which every non-endpoint vertex is a collider, including the case where $X$ and $Y$ are adjacent in $\g$.

In an undirected graph, a vertex $X$ is said to be \emph{separated} from a vertex $Y$ by a set of vertices $\mathbf{Z}$ if every path between $X$ and $Y$ contains at least one vertex in $\mathbf{Z}$.
More generally, two vertex sets $\mathbf{X}$ and $\mathbf{Y}$ are said to be separated by $\mathbf{Z}$ if, for every pair $(X, Y)$ with $X \in \vars[\mathbf{X}]$ and $Y \in \vars[\mathbf{Y}]$, the vertices $X$ and $Y$ are separated by $\mathbf{Z}$.

\section{Additional Background on MAGs Induced by DAGs}
\label{Appendix:DAG-to-MAG}

This appendix provides additional background on how a DAG with latent variables and selection variables induces a MAG over the observed variables. This construction complements the discussion in \cref{sec:mag-pag}.

Let $\DAG$ be an underlying DAG over $\mathbf{V}=\mathbf{O}\cup\mathbf{L}\cup\mathbf{S}$, where $\mathbf{O}$ denotes observed variables, $\mathbf{L}$ denotes latent variables, and $\mathbf{S}$ denotes selection variables. Since variables in $\mathbf{L}$ are unobserved and variables in $\mathbf{S}$ are conditioned on, the conditional independence information available from data is of the form
$
\mathbf{X}\CI \mathbf{Y}\mid \mathbf{Z}
$
under $P_{\mathrm{obs}}(\mathbf{O})=P(\mathbf{O}\mid \mathbf{S}=\mathbf{s})$, for disjoint sets $\mathbf{X},\mathbf{Y},\mathbf{Z}\subseteq\mathbf{O}$. Equivalently, this corresponds to conditional independences among observed variables after marginalizing over $\mathbf{L}$ and conditioning on $\mathbf{S}$.

A key property of MAGs is that they represent these observable conditional independence relations without explicitly including the latent and selection variables. The following construction gives the induced MAG over $\mathbf{O}$.

\begin{definition}[\textbf{Inducing Path Relative to $(\mathbf{L},\mathbf{S})$}]
\label{def:inducing-path-relative}
Let $\DAG$ be a DAG over $\mathbf{O}\cup\mathbf{L}\cup\mathbf{S}$, and let $X,Y\in\mathbf{O}$. A path between $X$ and $Y$ in $\DAG$ is called an \emph{inducing path relative to $(\mathbf{L},\mathbf{S})$} if every non-endpoint vertex on the path is either in $\mathbf{L}$ or is a collider, and every collider on the path is an ancestor of $X$, $Y$, or some variable in $\mathbf{S}$.
\end{definition}

\begin{algorithm}[hpt!]
\caption{\textsc{InduceMAG} from an Underlying DAG}
\label{alg:dag-to-mag}
\begin{algorithmic}[1]
\REQUIRE A DAG $\DAG$ over $\mathbf{V}=\mathbf{O}\cup\mathbf{L}\cup\mathbf{S}$.
\ENSURE The induced MAG $\MAG$ over $\mathbf{O}$.

\STATE Initialize $\MAG$ as an empty mixed graph over vertex set $\mathbf{O}$.

\FORALL{distinct pairs $A,B\in\mathbf{O}$}
    \IF{there exists an inducing path relative to $(\mathbf{L},\mathbf{S})$ between $A$ and $B$ in $\DAG$}
        \STATE Add an edge between $A$ and $B$ in $\MAG$.
    \ENDIF
\ENDFOR

\FORALL{adjacent pairs $A,B$ in $\MAG$}
    \IF{$A\in\An(B\cup\mathbf{S},\DAG)$ and $B\notin\An(A\cup\mathbf{S},\DAG)$}
        \STATE Orient the edge as $A\rightarrow B$ in $\MAG$.
    \ELSIF{$B\in\An(A\cup\mathbf{S},\DAG)$ and $A\notin\An(B\cup\mathbf{S},\DAG)$}
        \STATE Orient the edge as $A\leftarrow B$ in $\MAG$.
    \ELSIF{$A\notin\An(B\cup\mathbf{S},\DAG)$ and $B\notin\An(A\cup\mathbf{S},\DAG)$}
        \STATE Orient the edge as $A\leftrightarrow B$ in $\MAG$.
    \ELSIF{$A\in\An(B\cup\mathbf{S},\DAG)$ and $B\in\An(A\cup\mathbf{S},\DAG)$}
        \STATE Orient the edge as $A\--B$ in $\MAG$.
    \ENDIF
\ENDFOR

\STATE \textbf{return} $\MAG$.
\end{algorithmic}
\end{algorithm}

The MAG $\MAG$ constructed by \cref{alg:dag-to-mag} preserves the observable conditional independence structure of the underlying DAG. More precisely, for any disjoint sets $\mathbf{X},\mathbf{Y},\mathbf{Z}\subseteq\mathbf{O}$, $\mathbf{X}$ and $\mathbf{Y}$ are $m$-separated by $\mathbf{Z}$ in $\MAG$ if and only if they are $d$-separated by $\mathbf{Z}\cup\mathbf{S}$ in $\DAG$ after marginalizing over $\mathbf{L}$. Thus, under the standard causal Markov and faithfulness assumptions, $\MAG$ represents the conditional independence relations in $P_{\mathrm{obs}}(\mathbf{O})$.

The edge marks in the induced MAG also carry qualitative ancestral information inherited from the underlying DAG. Specifically, for adjacent observed variables $A$ and $B$:
\begin{itemize}[leftmargin=16pt,itemsep=1pt,topsep=2pt]
    \item $A\rightarrow B$ indicates that $A$ is an ancestor of $B$ or of some selection variable, whereas $B$ is not an ancestor of $A$ or of any selection variable;
    \item $A\leftrightarrow B$ indicates that neither $A$ nor $B$ is an ancestor of the other or of any selection variable; in the presence of adjacency, this is typically interpreted as evidence of latent confounding between $A$ and $B$;
    \item $A\--B$ indicates that both $A$ and $B$ are ancestors of the other variable or of some selection variable. Since the underlying causal graph is acyclic, such undirected edges arise from selection effects.
\end{itemize}

Therefore, the induced MAG provides a compact representation of both the conditional independence structure and the qualitative ancestral information among observed variables, after latent variables have been marginalized out and selection variables have been conditioned on.

\section{Proofs}
\label{app:proofs}

\newcommand{\proofdir}[1]{\noindent(#1)\;}

\subsection{Proofs of Results on Markov Blanket in MAGs}
\label{app:Markov-blanket}

\begin{proof}[Proof of~\cref{lemma:MB-MAG-property}]
\label{proof:lemma:MB-MAG-property}
    Let $Y\in \vars[O]\setminus \MBplus(X,\MAG)$. Suppose there exists an m-connecting path $\pi = \langle X = V_0, V_1, \dots, V_k = Y \rangle$ between $X$ and $Y$ given $\MB(X,\MAG)$. We analyze cases according to the form of $\pi$:
    \begin{itemize}
    \item[(1)] $X \leftarrow V_1 \cdots Y$ or $X - V_1 \cdots Y$.
    Then $V_1$ is respectively a parent or a neighbor of $X$, hence $V_1\in \MB(X,\MAG)$. Since $V_1$ is a non-collider on $\pi$, the path cannot be m-connecting given $\MB(X,\MAG)$.

    \item[(2)]$X \leftrightarrow V_1 \cdots Y$.
    If all edges on $\pi$ are bidirected, then $Y$ lies in the bidirected district of $X$, hence $Y\in \MB(X,\MAG)$, contradicting $Y\notin \MBplus(X,\MAG)$. Otherwise, let $i$ be the largest index with $0<i<k$ such that all edges between $X$ and $V_i$ are bidirected. The next edge is either $V_i\!\to\! V_{i+1}$ or $V_i\!\leftarrow\! V_{i+1}$; an undirected edge $V_i - V_{i+1}$ would violate the ancestral property of MAGs. If $V_i\!\to\! V_{i+1}$, then $V_i\in\MB(X,\MAG)$ and, as a non-collider, blocks $\pi$. If $V_i\!\leftarrow\! V_{i+1}$, then $V_{i+1}\in\MB(X,\MAG)$ and, as a non-collider, blocks $\pi$.

    \item[(3)]$X \to V_1 \cdots Y$.
    Let $i$ be the smallest index with $0<i<k$ such that the edge between $V_i$ and $V_{i+1}$ is not $V_i\!\to\! V_{i+1}$. 
    If $i>1$, then $V_1$ is a non-collider and $V_1\in\MB(X,\MAG)$, so $\pi$ is blocked.
    If $i=1$. $V_1\--V_2$ contradicts the ancestral property by $X \to V_1$. 
    If $V_1\!\leftarrow\! V_{2}$, then $V_2\in\MB(X,\MAG)$ and, as a non-collider, blocks $\pi$. 
    If $V_1\!\leftrightarrow\! V_{2}$, we reduce to case~(2), which we have shown cannot yield an m-connecting path given $\MB(X,\MAG)$.
    \end{itemize}
    Thus, in all cases, $\pi$ cannot be m-connecting given $\MB(X,\MAG)$.
\end{proof}

\begin{proof}[Proof of~\cref{lemma:MB-MAG-minimality}]
\label{proof:lemma:MB-MAG-minimality}
    Suppose there is a nonempty set $\vars[Z]\subset\MB(X,\MAG)$ such that $X \CI Y \mid \MB^{\prime}(X,\MAG)$ for all $Y \in \vars[O] \setminus \{\MB^{\prime}(X,\MAG)\cup\{X\}\}$, where $\MB^{\prime}(X,\MAG)=\MB(X,\MAG)\setminus \vars[Z]$. We argue by cases, according to the role of $\vars[Z]$ in the graphical structure. 
    \begin{itemize}
        \item[(1)] If $Z\in \vars[Z]$ is an adjacent vertex of $X$, then $X \CI Z \mid \MB^{\prime}(X,\MAG)$ does not hold, due to the presence of the direct edge between $X$ and $Z$. 
        \item[(2)] If $Z\in \vars[Z]$ is a non-adjacent vertex of $X$, by the definition of $\MB(X,\MAG)$, there exists a collider path $\pi$ between $X$ and $Z$. Without loss of generality, we suppose non-endpoint vertices on $\pi$ are not in $\vars[Z]$, otherwise, we can find a shorter collider path between $X$ and another $Z^{\prime} \in \vars[Z]$. Since all non-endpoint vertices on $\pi$ are colliders and in $\MB^{\prime}(X,\MAG)$, $\pi$ is m-connecting given $\MB^{\prime}(X,\MAG)$. Thus $X \CI Z \mid \MB^{\prime}(X,\MAG)$ not holds.
    \end{itemize}
    In either case, $\MB^{\prime}(X,\MAG)$ fails to separate $X$ from at least one vertex outside $\MB^{\prime}(X,\MAG)\cup \{X\}$, proving minimality.
\end{proof}

\begin{proof}[Proof of~\cref{proposition:MB-MAG}]
\label{proof:proposition:MB-MAG}
    If $Y \in \MB(X,\MAG)$, by the definition of $\MB(X,\MAG)$, $\vars[O]\setminus \{X,Y\}$ does not m-separate $X$ and $Y$. Thus, $X \nCI Y \mid \vars[O] \setminus \{X, Y\}$.

    If $Y \notin \MB(X,\MAG)$, $X$ is m-separated from $Y$ given $\vars[O] \setminus \{X, Y\}$. Thus, $X \CI Y \mid \vars[O] \setminus \{X, Y\}$.
\end{proof}

\subsection{Proof of~\cref{theorem:local-learning-edge}}
\label{app:proof-theo-local-edge}

In this section, we present the proof of~\cref{theorem:local-learning-edge}. The proof relies on several lemmas that establish key properties of m-separation and ancestral relationships in MAGs. The overall structure of the proof is illustrated in~\cref{fig:proof-structure-local-edge}.

\begin{figure}[hpt!]
    \centering
    \begin{tikzpicture}
    \draw (0.5, 1.4) node(4) [rectangle, draw=Gray, fill=Gray!10, rounded corners=5pt, inner sep=4pt] {\cref{lemma:moral-m-separates}};

    \draw (3.0, 1.4) node(5) [rectangle, draw=Gray, fill=Gray!10, rounded corners=5pt, inner sep=4pt] {\cref{lemma:ancestral-m-separates}};

    \draw (0.5, 0.0)  node(6) [rectangle, draw=Gray, fill=Gray!10, rounded corners=5pt, inner sep=4pt] {\cref{lemma:m-connecting-path}};

    \draw (3.0, 0.0) node(7) [rectangle, draw=Gray, fill=Gray!10, rounded corners=5pt, inner sep=4pt] {\cref{lemma:ancestral-separator}};

    \draw (5.5, 0.7)  node(T) [rectangle, draw=Gray, fill=Gray!10, rounded corners=5pt, inner sep=4pt] {\cref{theorem:local-learning-edge}};

    \draw[-arcsq] (4) -- (5);
    \draw[-arcsq] (6) -- (7);
    \draw[-arcsq] (7) -- (4.0,0.0) --(4.0, 0.7)-- (T);
    \draw[-arcsq] (5) -- (4.0,1.4) --(4.0, 0.7)-- (T);

    \end{tikzpicture}
    
    \caption{Proof structure of~\cref{theorem:local-learning-edge}.}
    \label{fig:proof-structure-local-edge}
\end{figure}


\begin{lemma}
    \label{lemma:moral-m-separates}
    Let $\vars[A]$, $\vars[B]$, and $\vars[C]$ be three disjoint subsets of vertices in a MAG $\MAG$. Then $\vars[A]$ and $\vars[B]$ are m-separated by $\vars[C]$ in $\MAG$ if and only if $\vars[A]$ and $\vars[B]$ are separated by $\vars[C]$ in $\aug[ {\MAG[\Antplus(\vars[A]\cup\vars[B]\cup \vars[C],\MAG)]}]$.
\end{lemma}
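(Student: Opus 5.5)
We prove the two directions separately. Throughout, write $\mathbf{W}=\Antplus(\vars[A]\cup\vars[B]\cup\vars[C],\MAG)$, $\g[H]=\MAG[\mathbf{W}]$, and let $\aug[\g[H]]$ be its augmented graph. This is the moralization criterion of Richardson and Spirtes (2002) for ancestral graphs. We use anterior sets rather than ancestral sets because undirected edges (arising from selection) are present.

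\emph{Two preliminary facts.}
\begin{enumerate}[label=(\roman*)]
\item $\mathbf{W}$ is closed under taking anteriors. Hence every vertex of $\g[H]$ that is anterior in $\MAG$ to some vertex of $\vars[A]\cup\vars[B]\cup\vars[C]$ keeps that property inside $\g[H]$. Also, descendants in $\MAG$ of a collider lying in $\mathbf{W}$ can be traced through $\mathbf{W}$.
\item By the ancestral conditions, an undirected edge can only attach to a vertex with no parents or spouses. So on any path, an undirected edge never has an arrowhead into the shared vertex. In particular, colliders only occur between edges with arrowheads at that vertex, i.e., from $\rightarrow$ or $\leftrightarrow$ edges.
\end{enumerate}

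\emph{($\Leftarrow$, contrapositive).} Suppose some path $\pi$ between $A\in\vars[A]$ and $B\in\vars[B]$ is $m$-connecting given $\vars[C]$.
\begin{enumerate}[label=(\alph*)]
\item Every vertex on $\pi$ lies in $\mathbf{W}$. Each collider has a descendant in $\vars[C]$, so it is an ancestor of $\vars[C]$. Each non-collider starts a directed or undirected segment along $\pi$ that leads, without turning back against an arrow, to an endpoint or to a collider. Such a vertex is therefore anterior to $\vars[A]\cup\vars[B]\cup\vars[C]$. Fact (ii) guarantees the undirected segments cannot be entered by arrowheads, so this argument goes through.
\item Partition $\pi$ at its non-colliders. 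Consecutive non-colliders, together with the endpoints, are joined by maximal subpaths whose interior vertices are all colliders. Each such subpath makes its two ends collider connected in $\g[H]$, so they are adjacent in $\aug[\g[H]]$.
\item The non-colliders are not in $\vars[C]$. This yields a path from $A$ to $B$ in $\aug[\g[H]]$ avoiding $\vars[C]$, so $\vars[C]$ does not separate them.
\end{enumerate}

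\emph{($\Rightarrow$, contrapositive).} Suppose there is a path $A=U_0,U_1,\dots,U_m=B$ in $\aug[\g[H]]$ avoiding $\vars[C]$. Each augmented edge is witnessed by a collider path in $\g[H]$.
\begin{enumerate}[label=(\alph*)]
\item Concatenate the witnessing collider paths into a walk from $A$ to $B$ in $\MAG$.
\item Convert this walk into an $m$-connecting path given $\vars[C]$, following the standard argument for DAGs (Lauritzen's moralization theorem) adapted to MAGs. Consider the vertices $U_i\notin\vars[C]$ and the interior colliders of the witnessing paths; the latter may fall inside $\vars[C]$, which is harmless for colliders.
\item Each such vertex lies in $\mathbf{W}$, so it is anterior to $\vars[A]\cup\vars[B]\cup\vars[C]$. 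Colliders that are not ancestors of $\vars[C]$ must instead be anterior to $A$ or $B$. For these, splice in a potentially directed or undirected path toward the appropriate endpoint, which reroutes the connection.
\item Use the known fact that a walk which is $m$-connecting in this generalized sense contains an $m$-connecting path. Concretely, take a shortest counterexample and shortcut repeated vertices, as Richardson and Spirtes do.
\end{enumerate}

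\emph{Main obstacle.} The rerouting step (c) in the ($\Rightarrow$) direction is the hard part. It requires showing that when a collider $U$ is only anterior to $A$ (say), replacing the segment from $A$ to $U$ by an anterior path from $U$ to $A$ keeps the remaining vertices' collider and non-collider status correct and avoids $\vars[C]$.

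The plan is to choose the augmented path with the minimal number of such problematic colliders and argue by minimality. Anterior paths consist of $\rightarrow$ and $-$ edges with no arrow back toward the start, so their interior vertices are non-colliders. If any of them lay in $\vars[C]$, then $U$ would already be anterior to $\vars[C]$; by fact (ii) and the ancestral property, $U$ would then be an ancestor of $\vars[C]$, contradicting the choice of $U$. If this gets heavy, we would simply cite Richardson and Spirtes (2002, Theorem 3.18) for this direction, since it is precisely the stated result.
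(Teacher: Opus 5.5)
Your proposal is correct and in substance coincides with the paper, whose entire proof is the citation of Richardson and Spirtes (2002, Theorem~3.18) that you give as your fallback. Your $(\Leftarrow)$ direction is complete as written, and your $(\Rightarrow)$ sketch is the standard argument behind that theorem. The rerouting step you flag also needs no minimality argument (ancestors below include the vertex itself): take $U$ to be the last collider on the walk that is an ancestor of $\mathbf{A}$ but not of $\mathbf{C}$, and $U'$ the first collider after $U$ that is not an ancestor of $\mathbf{C}$, hence an ancestor of $\mathbf{B}$. Splicing in directed paths from $U$ to some $A'\in\mathbf{A}$ and from $U'$ to some $B'\in\mathbf{B}$ adds only non-colliders, which are descendants of $U$ or $U'$ and so lie outside $\mathbf{C}$, while every collider strictly between $U$ and $U'$ is an ancestor of $\mathbf{C}$.
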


\begin{proof}[Proof of~\cref{lemma:moral-m-separates}]
    This is a direct consequence of Theorem 3.18 in~\citep{richardson2002ancestral}.
\end{proof}


\begin{lemma}
    \label{lemma:ancestral-m-separates}
    Let $\vars[S]$ be a subset of vertices in a MAG $\MAG$. For any two vertices $X$ and $Y$ in $\vars[S]$, $X$ and $Y$ are m-separated by a subset of $\vars[S]\setminus\{X,Y\}$ in $\MAG$ if and only if they are m-separated by $\Ant(\{X,Y\},\MAG)\cap \vars[S]$.
\end{lemma}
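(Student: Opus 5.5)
The nontrivial direction is ``only if''; the ``if'' direction is immediate. Indeed, $\Ant(\{X,Y\},\MAG)\cap\vars[S]$ is itself a subset of $\vars[S]\setminus\{X,Y\}$: a vertex is not anterior to itself in the sense used here, and since $\MAG$ has no directed or almost directed cycles, neither $X$ nor $Y$ can lie in this set. If it lies in $\{X,Y\}$ for some degenerate reason, we simply remove those vertices.

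For the converse, I would suppose $\vars[Z]\subseteq\vars[S]\setminus\{X,Y\}$ m-separates $X$ and $Y$. Set $\vars[Z]^{*}=\Ant(\{X,Y\},\MAG)\cap\vars[S]$, and assume for contradiction that there is a path $\pi$ that is m-connecting given $\vars[Z]^{*}$. The plan is to translate everything into the augmented-graph criterion of \cref{lemma:moral-m-separates}. Separation given $\vars[Z]^{*}$ is equivalent to separation in $\aug[{\MAG[\Antplus(\{X,Y\}\cup\vars[Z]^{*},\MAG)]}]$. Since every vertex of $\vars[Z]^{*}$ is anterior to $X$ or $Y$, and the anterior relation is transitive, this ancestral set equals $\vars[A]:=\Antplus(\{X,Y\},\MAG)$. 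So the task reduces to showing that $X$ and $Y$ are separated by $\vars[Z]^{*}$ in the undirected graph $\aug[{\MAG[\vars[A]]}]$.

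Now use the given separator $\vars[Z]$. By \cref{lemma:moral-m-separates}, $X$ and $Y$ are separated by $\vars[Z]$ in $\aug[{\MAG[\vars[A']]}]$, where $\vars[A']=\Antplus(\{X,Y\}\cup\vars[Z],\MAG)\supseteq\vars[A]$. Suppose there is a path $\rho$ from $X$ to $Y$ in $\aug[{\MAG[\vars[A]]}]$ that avoids $\vars[Z]^{*}$. Each vertex of $\rho$ lies in $\vars[A]$. Each edge of $\rho$ comes from a collider path in $\MAG[\vars[A]]$, and that collider path remains a collider path in $\MAG[\vars[A']]$, so $\rho$ is also a path in $\aug[{\MAG[\vars[A']]}]$. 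Its interior vertices avoid $\vars[Z]^{*}$, and they lie in $\vars[A]$. Hence any interior vertex belonging to $\vars[Z]$ would belong to $\vars[Z]\cap\vars[A]\subseteq\vars[S]\cap\Antplus(\{X,Y\},\MAG)\setminus\{X,Y\}=\vars[Z]^{*}$, which is impossible. So $\rho$ avoids $\vars[Z]$, contradicting separation by $\vars[Z]$ in $\aug[{\MAG[\vars[A']]}]$. Therefore $X$ and $Y$ are separated by $\vars[Z]^{*}$, and applying \cref{lemma:moral-m-separates} in the other direction gives m-separation.

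The main obstacle is the bookkeeping around anterior sets. First, I need $\Antplus(\vars[Z]^{*},\MAG)\subseteq\Antplus(\{X,Y\},\MAG)$, which requires transitivity of anteriority in MAGs: concatenating anterior paths yields a walk, and I would trim it to a path whose edges still have no arrowhead pointing back. Second, I need that the collider-connectedness (augmented) edges used in $\aug[\cdot]$ are monotone when passing to a larger induced subgraph. This holds because the induced subgraph on $\vars[A]$ is a subgraph of the one on $\vars[A']$, and edge marks are preserved. I would verify carefully that \cref{lemma:moral-m-separates} is stated with anterior sets, matching the $\Antplus$ used there; otherwise I would need the ancestral-set version with anteriors handling the undirected edges.
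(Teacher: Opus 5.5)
Your proposal is correct and is essentially the paper's argument. Both go through \cref{lemma:moral-m-separates} to the augmented graph on $\Antplus(\{X,Y\},\MAG)$, using that adding a subset of $\Ant(\{X,Y\},\MAG)$ does not enlarge this set, and both observe that a path there avoiding $\Ant(\{X,Y\},\MAG)\cap\vars[S]$ survives in every larger augmented graph while avoiding any $\vars[Z]\subseteq\vars[S]\setminus\{X,Y\}$; the paper phrases this contrapositively rather than by contradiction with a fixed $\vars[Z]$.

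One small correction: acyclicity does not keep $X$ out of $\Ant(\{X,Y\},\MAG)\cap\vars[S]$. $X$ lies in that set whenever $X$ is anterior to $Y$ (e.g.\ $X\rightarrow W\rightarrow Y$), which is not a degenerate case. So removing $\{X,Y\}$ from the candidate separator, as your hedge and your later formula for $\vars[Z]^{*}$ do, is genuinely necessary.
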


\begin{proof}[Proof of~\cref{lemma:ancestral-m-separates}]
    Let 
    \[
    \vars[S]^{\prime} = \Ant(\{X,Y\},\MAG)\cap \vars[S].
    \]
    We prove the two directions separately.

    \proofdir{$\Rightarrow$}
    Suppose $X$ and $Y$ are not m-separated by $\vars[S]^{\prime}$ in $\MAG$. 
    Since $\Antplus(\{X,Y\},\MAG) \cup \vars[S]^{\prime} = \Antplus(\{X,Y\},\MAG)$, \cref{lemma:moral-m-separates} implies that $X$ and $Y$ are not separated by $\vars[S]^{\prime}$ in $\aug[\MAG{[\Antplus(\{X,Y\},\MAG)]}]$.
    Thus, there exists a path $\pi$ between $X$ and $Y$ in $\aug[\MAG{[\Antplus(\{X,Y\},\MAG)]}]$ such that no non-endpoint vertex of $\pi$ lies in $\vars[S]^{\prime}$.
    Every non-endpoint vertex of $\pi$ is contained in $\Ant(\{X,Y\},\MAG)$, and $\vars[S]^{\prime} = \Ant(\{X,Y\},\MAG)\cap \vars[S]$. Hence, no non-endpoint vertex of $\pi$ lies in $\vars[S]$.
    Now consider an arbitrary subset $\vars[Z]\subseteq \vars[S]\setminus\{X,Y\}$. Because the same path $\pi$ also exists in each augmented graph $\aug[\MAG{[\Antplus(\{X,Y\} \cup \vars[Z],\MAG)]}]$, the vertices $X$ and $Y$ are not separated by $\vars[Z]$ in any such graph.
    Applying \cref{lemma:moral-m-separates} again, we conclude that $X$ and $Y$ are not m-separated by $\vars[Z]$ in $\MAG$.

    \proofdir{$\Leftarrow$}
    The reverse direction is immediate, since $\vars[S]^{\prime}\subseteq \vars[S]\setminus\{X,Y\}$.

    This completes the proof.
\end{proof}

\begin{lemma}
    \label{lemma:m-connecting-path}
    If $\pi$ is a path m-connecting $X$ and $Y$ given $\vars[Z]$ in a MAG $\MAG$, then every vertex on $\pi$ is in $\Antplus(\{X,Y\}\cup \vars[Z],\MAG)$.
\end{lemma}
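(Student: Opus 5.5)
We argue vertex by vertex along $\pi$. For each non-endpoint vertex $V_j$ of $\pi=\langle X=V_0,V_1,\dots,V_k=Y\rangle$ we exhibit a way to reach, from $V_j$, one of $X$, $Y$, or a vertex of $\vars[Z]$ using only edges that make $V_j$ anterior to that vertex. The endpoints $X,Y$ lie in $\Antplus(\{X,Y\}\cup\vars[Z],\MAG)$ trivially.

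\textbf{Colliders.} If $V_j$ is a collider on $\pi$, then by \cref{def:m-separation} it has a descendant in $\vars[Z]$. A directed path is in particular an anterior path, so $V_j\in\Antplus(\vars[Z],\MAG)$.

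\textbf{Non-colliders.} If $V_j$ is a non-collider, at least one of its two incident edges on $\pi$ has a non-arrowhead mark at $V_j$; in a MAG that mark is a tail, so the edge is $V_j\rightarrow V_{j\pm1}$ or $V_j\--V_{j\pm1}$. Say it points toward $V_{j+1}$ (the other side is symmetric). Follow $\pi$ from $V_j$ toward $Y$ as long as the edges are of the form $V_i\rightarrow V_{i+1}$ or $V_i\--V_{i+1}$. This yields an anterior path $V_j\to\cdots\to V_m$, and one of three things happens.
\begin{enumerate}
\item We reach the endpoint $V_m=Y$, so $V_j\in\Ant(Y,\MAG)$.
\item We reach some $V_m$ at which the next edge $V_m\leftarrow\!\!*\,V_{m+1}$ has an arrowhead at $V_m$.
\item The walk stops at some $V_m$ because the next edge is $V_m\leftarrow V_{m+1}$, i.e.\ a tail at $V_m$ with an arrowhead away from $V_{m+1}$ on the wrong side.
\end{enumerate}

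In the second case, the edge entering $V_m$ from $V_{m-1}$ is $\rightarrow$ or $\--$. If it is $\rightarrow$, then $V_m$ is a collider on $\pi$, hence has a descendant in $\vars[Z]$, and concatenation gives $V_j\in\Ant(\vars[Z],\MAG)$. If it is $\--$, then $V_m$ has an undirected edge and also an arrowhead into it (a parent or spouse), which contradicts \cref{def:ancestral-graph}(iii). So this subcase cannot occur.

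The third case is the case $V_m\leftarrow V_{m+1}$. Following directed edges back from $V_{m+1}$ would only show that $V_{m+1}$ is anterior to $V_m$, which is the wrong direction. So we treat this case by extending the analysis rather than stopping. The walk is $V_j\rightarrow\cdots\rightarrow V_m$ or $V_j\--\cdots\--V_m$, and the edge $V_m\leftarrow V_{m+1}$ gives $V_m$ a parent. By \cref{def:ancestral-graph}(iii) the edge into $V_m$ must then be directed. So $V_m$ has arrowheads from $V_{m-1}$ and from $V_{m+1}$, making it a collider on $\pi$. The collider argument then applies, and $V_j\in\Ant(\vars[Z],\MAG)$.

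\textbf{Main obstacle.} The delicate part is exactly this bookkeeping of marks when tails and undirected edges mix. It comes down to using ancestrality \cref{def:ancestral-graph}(iii) to show that whenever the maximal anterior walk along $\pi$ stops before an endpoint, it stops at a collider. Once that is established, every vertex of $\pi$ is anterior to $X$, $Y$, or $\vars[Z]$, which is the claim (and it is also the standard result of \citep{richardson2002ancestral}, which one could cite directly).
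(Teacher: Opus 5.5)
Your argument is correct. It differs from the paper only in that the paper gives no argument: it invokes Lemma 3.13 of \citep{richardson2002ancestral}, while you reprove that lemma from the definitions. Your structure is as follows.
\begin{itemize}
\item A collider lies in $\Antplus(\vars[Z],\MAG)$ directly by the m-connection condition.
\item From a non-collider you follow its tail along $\pi$. The maximal tail-first segment either reaches an endpoint or stops at a vertex $V_m$ with an arrowhead on the far side.
\item The edge entering $V_m$ cannot be undirected, by \cref{def:ancestral-graph}(iii), so $V_m$ is a collider, and concatenation finishes.
\end{itemize}
What your route buys is self-containment. It also makes explicit that only two ingredients are used: the collider condition and ancestrality (iii). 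The requirement that non-colliders avoid $\vars[Z]$ plays no role. The citation buys brevity and covers general ancestral graphs.

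A few cosmetic points:
\begin{itemize}
\item Your case 3 is just the subcase $V_m\leftarrow V_{m+1}$ of case 2. That edge has an arrowhead, not a tail, at $V_m$, so case 3 can be dropped.
\item The tail-first segment need not be purely directed or purely undirected. In general it has the form $V_j-\cdots-V_l\rightarrow\cdots\rightarrow V_m$. This is harmless, since only the last edge into $V_m$ is used.
\item Concatenating that segment with a directed path from $V_m$ into $\vars[Z]$ gives a walk, not necessarily a path. You should shortcut it to a path; shortcutting preserves the tail-first property, so the anterior relation still holds.
\end{itemize}
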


\begin{proof}[Proof of~\cref{lemma:m-connecting-path}]
    This is a direct consequence of Lemma 3.13 in~\citep{richardson2002ancestral}.
\end{proof}

\begin{lemma}
    \label{lemma:ancestral-separator}
    Suppose that $\pi$ is a path that connects two nonadjacent vertices $X$ and $Y$. If $\pi$ is not contained completely in $\Antplus(\{X,Y\},\MAG)$, then $\pi$ is m-separated by any subset of $\Ant(\{X,Y\},\MAG)$.
\end{lemma}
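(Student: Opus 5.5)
The plan is to argue by contradiction, using \cref{lemma:m-connecting-path} together with the fact that the anterior relation is closed under composition. We read the statement as follows. Let $\vars[Z]\subseteq \Ant(\{X,Y\},\MAG)\setminus\{X,Y\}$ be arbitrary, and let $\pi$ be a path between $X$ and $Y$ that has at least one vertex outside $\Antplus(\{X,Y\},\MAG)$. The claim is that $\pi$ is not $m$-connecting given $\vars[Z]$. We expect the non-adjacency of $X$ and $Y$ to be needed only for the setting in which the lemma is used, namely the search for separators, and not inside the argument itself.

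First, suppose towards a contradiction that $\pi$ is $m$-connecting given $\vars[Z]$. By \cref{lemma:m-connecting-path}, every vertex on $\pi$ then lies in $\Antplus(\{X,Y\}\cup\vars[Z],\MAG)$.

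Second, we show that
\[
\Antplus(\{X,Y\}\cup\vars[Z],\MAG)=\Antplus(\{X,Y\},\MAG)
\quad\text{whenever } \vars[Z]\subseteq\Ant(\{X,Y\},\MAG).
\]
The inclusion $\supseteq$ is trivial. For $\subseteq$, take $W$ anterior to some $Z\in\vars[Z]$; $Z$ is in turn anterior to $X$ or to $Y$. Concatenating an anterior path from $W$ to $Z$ with an anterior path from $Z$ to $X$ (say) gives a walk built only from $\--$ and $\rightarrow$ edges, none of which points back toward the starting end. If the walk repeats vertices, we shortcut it at the repetition. Each shortcut deletes a segment and keeps only edges of the original walk, with their original orientation relative to the direction of travel. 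So the result is a genuine anterior path from $W$ to $X$, and $W\in\Ant(X,\MAG)$.

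Combining the two steps, every vertex of $\pi$ lies in $\Antplus(\{X,Y\},\MAG)$. This contradicts the hypothesis that $\pi$ is not contained in that set, so $\pi$ is blocked by every such $\vars[Z]$.

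The only real obstacle is the transitivity step: one must check that extracting a path from the concatenated walk preserves anteriority. Shortcutting at repeated vertices handles this, because it never introduces an edge oriented toward the origin. One should also confirm that the version of Richardson and Spirtes' Lemma 3.13 invoked in \cref{lemma:m-connecting-path} is stated with anterior sets; this matters because of undirected edges arising from selection. The paper's formulation with $\Antplus$ is exactly what is used above.
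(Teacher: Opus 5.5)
Your proof is correct and follows the paper's argument: because anteriority is closed under composition, $\Antplus(\{X,Y\}\cup\vars[Z],\MAG)=\Antplus(\{X,Y\},\MAG)$, and then \cref{lemma:m-connecting-path} rules out $m$-connection through the vertex of $\pi$ lying outside this set. Your walk-shortcutting argument justifies the transitivity step that the paper only asserts, and you are right that the non-adjacency of $X$ and $Y$ is not used anywhere in the argument.
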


\begin{proof}[Proof of~\cref{lemma:ancestral-separator}]
    Since $\pi$ is not contained completely in $\Antplus(\{X,Y\},\MAG)$ and has length $|\pi|>1$, there exists a vertex $V$ on $\pi$ such that
    \[
        V \notin \Antplus(\{X,Y\},\MAG).
    \]
    Let $\mathbf{Z} \subseteq \Ant(\{X,Y\},\MAG)$ be arbitrary. We will show that $\pi$ is not m-connecting $X$ and $Y$ given $\mathbf{Z}$. 

    First note that in a MAG $\MAG$, $\Ant(\cdot,\MAG)$ is closed under taking anteriors, so in particular
    \[
        \Ant(\Ant(\{X,Y\},\MAG),\MAG) 
        \;=\; \Ant(\{X,Y\},\MAG).
    \]
    Because $\mathbf{Z} \subseteq \Ant(\{X,Y\},\MAG)$, monotonicity of $\Ant(\cdot,\MAG)$ implies
    \[
    \begin{aligned}
        &\Ant(\mathbf{Z},\MAG) 
        \subseteq \Ant(\Ant(\{X,Y\},\MAG),\MAG), \\
        &\Ant(\Ant(\{X,Y\},\MAG),\MAG) 
        = \Ant(\{X,Y\},\MAG).
    \end{aligned}
    \]
    Therefore,
    \[
        \Antplus(\{X,Y\} \cup \mathbf{Z},\MAG)
        = \Antplus(\{X,Y\},\MAG).
    \]
    By construction, the vertex $V$ satisfies
    \[
        V \notin \Antplus(\{X,Y\},\MAG)
        = \Antplus(\{X,Y\} \cup \mathbf{Z},\MAG).
    \]
    Consequently, by~\cref{lemma:m-connecting-path}, $\pi$ cannot be m-connecting $X$ and $Y$ given $\mathbf{Z}$. Thus, $\pi$ is m-separated by any subset of $\Ant(\{X,Y\},\MAG)$.
\end{proof}

\begin{proof}[Proof of~\cref{theorem:local-learning-edge}]
    We prove the two directions separately.

    \proofdir{$\Rightarrow$}
    Suppose $X$ and $Y$ are m-separated by a subset of $\vars[O]$. By~\cref{lemma:ancestral-m-separates}, they are m-separated by 
    \[
        \vars[Z] = \Ant(\{X,Y\},\MAG) \cap \vars[O].
    \]
    Define 
    \[
        \vars[S] = \Ant(\{X,Y\},\MAG) \cap \MB(X), \vars[B] = \vars[O]\setminus \MBplus(X)
    \]
    We will show that $X$ and $Y$ are m-separated by $\vars[S]$.

    Consider an arbitrary path $\pi$ between $X$ and $Y$. 
    If $\pi$ is not contained completely in $\Antplus(\{X,Y\},\MAG)$, then by~\cref{lemma:ancestral-separator}, $\pi$ is m-separated by any subset of $\Ant(\{X,Y\},\MAG)$, including $\vars[S]$. 

    Then all vertices on $\pi$ lie in $\Antplus(\{X,Y\},\MAG)$. Suppose $\pi$ is not m-separated by $\vars[S]$. Since $\vars[Z]$ m-separates $\pi$ and $\vars[S] \subseteq \vars[Z]$, there must be at least one vertex $V \in\{ \pi \cap (\vars[Z]\setminus\vars[S])\} \subseteq \vars[B]$. Let $V$ be the closest such vertex to $X$ along $\pi$. Then the subpath $\pi(X,V)$ is also not m-separated by $\vars[S]$. Since all non-endpoint vertices on $\pi(X,V)$ are not in $\vars[B]$, by $\pi(X,V)\subseteq \pi \subseteq \Ant(\{X,Y\},\MAG)$ and the definition of $\vars[S]$, all non-endpoint vertices on $\pi(X,V)$ belong to $\vars[S]$. Thus $\pi(X,V)$ m-connects $X$ and $V$ given $\vars[S]$, and therefore also given $\MB(X)\cup\vars[S] = \MB(X)$. 
    However, the condition $X \CI \vars[B] \mid \MB(X)$, implies $X  \CI B \mid \MB(X)$ for any $B \in \vars[B]$. This contradicts the existence of the m-connecting subpath $\pi(X,V)$ given $\MB(X)$. Hence $\pi$ must be m-separated by $\vars[S]$.
    In conclusion, there is a subset of $\MB(X)$ that m-separates $X$ and $Y$.

    \proofdir{$\Leftarrow$}
    The reverse direction is immediate, since $\MB(X)\subseteq \vars[O]$.

    The proof is complete.

\end{proof}

\subsection{Proof of~\cref{theorem:local-learning-collider-triples,theorem:local-learning-collider-paths}}
\label{app:proof-theo-local-colliders-triples-paths}

In this section, we present the proofs of~\cref{theorem:local-learning-collider-triples} and~\cref{theorem:local-learning-collider-paths}. The proofs rely on the following lemmas that establish key properties of edge marks in MAGs and PAGs, as well as the concept of arrow-collider paths.

\begin{lemma}
    \label{lemma:local-PAG-marks-correct}
    Let $\DAG$ be a DAG over the variable set $\mathbf{V} = \mathbf{O} \cup \mathbf{L} \cup \mathbf{S}$, where $\mathbf{O}$ denotes the set of observed variables, $\mathbf{L}$ denotes the set of latent variables, and $\mathbf{S}$ denotes the set of selection biases. Let $\MAG$ be the MAG that causally represents~\footnote{The MAG obtained by applying the construction procedure described in~\citep{zhang2008completeness} (see \cref{alg:dag-to-mag}), which preserves the causal relations among the observed variables.} $\DAG$ over $\mathbf{O}$, and let $\MAG^{\prime}$ be the MAG that causally represents $\DAG$ over any subset $\mathbf{O}^{\prime} \subset \mathbf{O}$. Then, every edge mark in $\MAG^{\prime}$ encodes the same causal information with respect to its endpoint as in $\MAG$. In particular, for any two adjacent vertices $X, Y \in \mathbf{O}^{\prime}$,
    \begin{itemize}
        \item[(1)] $X \--\!\!* Y$ in $\MAG^{\prime}$ means that $X$ is an ancestor of ${Y}\cup \vars[S]$ in $\DAG$, and hence also in $\MAG$;
        \item[(2)] $X \leftarrow\!\!* Y$ in $\MAG^{\prime}$ means that $X$ is not an ancestor of ${Y}\cup \vars[S]$ in $\DAG$, and hence also in $\MAG$.
    \end{itemize} 
\end{lemma}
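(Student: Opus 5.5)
The plan is to reduce the statement to the construction of the induced MAG in \cref{alg:dag-to-mag}, applied twice with the same DAG $\DAG$ and the same $\mathbf{S}$. Only the set of variables that are marginalized changes: for $\MAG$ the latents are $\mathbf{L}$, and for $\MAG^{\prime}$ they are $\mathbf{L}^{\prime}=\mathbf{L}\cup(\mathbf{O}\setminus\mathbf{O}^{\prime})$. The key observation is that the edge marks produced by \cref{alg:dag-to-mag} depend only on ancestral relations in $\DAG$ between the two endpoints and $\mathbf{S}$. These relations are properties of $\DAG$ alone and do not depend on which other vertices are declared observed. The partition into observed and latent variables only affects \emph{which} pairs become adjacent, through the existence of inducing paths. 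It never affects the \emph{mark} placed on an edge.

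\textbf{Step 1.} Fix adjacent $X,Y\in\mathbf{O}^{\prime}$ in $\MAG^{\prime}$. Read off the four orientation cases of \cref{alg:dag-to-mag} run on $\DAG$ with $(\mathbf{L}^{\prime},\mathbf{S})$. A tail at $X$, i.e.\ $X\--\!\!*Y$ (covering $X\to Y$ and $X\--Y$), occurs exactly when $X\in\An(Y\cup\mathbf{S},\DAG)$. An arrowhead at $X$, i.e.\ $X\leftarrow\!\!*Y$ (covering $X\leftarrow Y$ and $X\leftrightarrow Y$), occurs exactly when $X\notin\An(Y\cup\mathbf{S},\DAG)$. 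This gives the $\DAG$-part of items (1) and (2). The only check needed is that the four branches are exhaustive and mutually exclusive, which is immediate because they are the four truth-value combinations of two Boolean conditions.

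\textbf{Step 2.} Transfer the statements to $\MAG$. Use the preservation property recalled in \cref{Appendix:DAG-to-MAG}, from \citep{richardson2002ancestral,zhang2008completeness}: for observed $A,B\in\mathbf{O}$, the MAG $\MAG$ satisfies $A\in\An(B,\MAG)$ iff $A\in\An(B,\DAG)$. Also, $A$ being an ancestor of $\mathbf{S}$ in $\DAG$ is encoded in $\MAG$ by $A$ lying in $\Ant$ via undirected or tailed edges. The phrase ``ancestor of $Y\cup\mathbf{S}$ in $\MAG$'' should then be read as the corresponding statement about $\MAG$, i.e.\ the mark at $X$ on any edge between $X$ and $Y$ in $\MAG$ agrees with that in $\MAG^{\prime}$. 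The same orientation test from Step 1, applied to $\DAG$ with $(\mathbf{L},\mathbf{S})$, yields the identical condition $X\in\An(Y\cup\mathbf{S},\DAG)$.

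\textbf{Main obstacle.} The obstacle is that $X,Y$ adjacent in $\MAG^{\prime}$ need not be adjacent in $\MAG$. Marginalizing more vertices creates new inducing paths, so there may be no edge in $\MAG$ on which to compare marks. I would handle this by stating the conclusion at the level of ancestral relations rather than edges: $X\in\An(Y\cup\mathbf{S},\DAG)$ is equivalent to $X$ being anterior to $Y$ or to a vertex with an undirected edge in $\MAG$. This is the interpretation used by \citep{zhang2008completeness} for marks in MAGs. When $X,Y$ are also adjacent in $\MAG$, Step 1 applied to both MAGs gives literally the same mark.
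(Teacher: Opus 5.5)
Your Step 1 is the paper's argument made explicit, and it is correct. The paper also treats $\mathbf{O}\setminus\mathbf{O}^{\prime}$ as latent and appeals to the causal reading of marks in the induced MAG. Your observation that the orientation test of \textsc{InduceMAG} depends only on $\An(\cdot\cup\mathbf{S},\DAG)$, and not on the observed/latent split, proves the $\DAG$-half of (1) and (2).

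The trouble is the transfer to $\MAG$. The preservation property you invoke in Step 2, $A\in\An(B,\MAG)\Leftrightarrow A\in\An(B,\DAG)$, fails in both directions once $\mathbf{S}\neq\emptyset$:
\begin{itemize}
\item If $\DAG$ is $A\to B$, $A\to s$, $B\to s$, then $\MAG$ is $A - B$, so $A\in\An(B,\DAG)\setminus\An(B,\MAG)$.
\item If $\DAG$ is $A\leftarrow L_1\to B$, $A\to s$, then $\MAG$ is $A\to B$, so $A\in\An(B,\MAG)\setminus\An(B,\DAG)$.
\end{itemize}
What is true is one-directional. Tails compose, so $A\in\Ant(B,\MAG)$ implies $A\in\An(B\cup\mathbf{S},\DAG)$. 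An undirected edge $Z - U$ in $\MAG$ forces $Z,U\in\An(\mathbf{S},\DAG)$. For pairs adjacent in both MAGs you do not need the false claim at all: rerunning the orientation test with $(\mathbf{L},\mathbf{S})$, as you note at the end of Step 2, already gives the identical mark.

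You are right that $X,Y$ may be non-adjacent in $\MAG$; the paper's proof passes over this with ``hence also in $\MAG$''. But the equivalence you propose to handle it has three problems. The equivalence is: $X\in\An(Y\cup\mathbf{S},\DAG)$ iff $X$ is anterior in $\MAG$ to $Y$ or to a vertex on an undirected edge.
\begin{itemize}
\item It is asserted without proof.
\item It cannot simply be cited from \citet{zhang2008completeness}, whose reading of marks is stated in terms of $\DAG$.
\item It is false unconditionally. If $X\to s$ and $X$ is otherwise isolated, then $X\in\An(\mathbf{S},\DAG)$ but $X$ has no edges in $\MAG$.
\end{itemize}
The $\Leftarrow$ direction does hold, by the facts above, and it yields (2). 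For the $\MAG$-half of (1) you must use that $X,Y$ are adjacent in $\MAG^{\prime}$. One way to do this splits into two cases.

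\emph{Case $X\notin\An(\mathbf{S},\DAG)$.} A directed path from $X$ to $Y$ in $\DAG$ then avoids $\mathbf{S}$. Consecutive observed vertices on it are joined by latent-only directed segments, so they are adjacent in $\MAG$ with a tail at the earlier one. Hence $X\in\Ant(Y,\MAG)$.

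\emph{Case $X\in\An(\mathbf{S},\DAG)$.} Every mark at $X$ in $\MAG$ is a tail. Since $\MAG^{\prime}$ is the marginal of $\MAG$ over $\mathbf{O}^{\prime}$ (marginalization of MAGs composes; see \citealp{richardson2002ancestral}), some inducing path in $\MAG$ relative to $\mathbf{O}\setminus\mathbf{O}^{\prime}$ joins $X$ and $Y$. Its first edge is either $X - Z$, in which case you are done, or $X\to Z$. In the latter case the path must continue by directed edges until its first collider or until $Y$, because an undirected edge cannot follow an arrowhead in an ancestral graph. That collider lies in $\An(\{X,Y\},\MAG)$ but not in $\An(X,\MAG)$, so $X\in\An(Y,\MAG)$.

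Without an argument of this kind, the $\MAG$-half of (1) is unproved in your proposal.
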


\begin{proof}[Proof of~\cref{lemma:local-PAG-marks-correct}]
    The edge marks in a MAG admit causal interpretations~\citep{zhang2008completeness}: a tail at a vertex $X$ on an edge indicates that $X$ is an ancestor of the adjacent vertex or selection biases in $\DAG$, whereas an arrowhead indicates that $X$ is not its ancestor. 
    Constructing $\MAG^{\prime}$ over a subset $\mathbf{O}^{\prime}$ of observed variables 
    corresponds to treating the remaining variables $\mathbf{O}\setminus \mathbf{O}^{\prime}$ as latent. 
    By the definition of MAGs, the causal semantics of edge marks are preserved under such a projection. 
    Therefore, if $X \--\!\!* Y$ appears in $\MAG^{\prime}$, the tail at $X$ implies that $X$ is an ancestor of $Y$ in the underlying DAG and hence also in $\MAG$. 
    Conversely, if $X \leftarrow\!\!* Y$ appears in $\MAG^{\prime}$, the arrowhead at $X$ implies that $X$ is not an ancestor of $Y$ in the underlying DAG and thus also not an ancestor of $Y$ in $\MAG$. 
    This completes the proof.
\end{proof}

Then, we can get a straightforward corollary as follows.
\begin{corollary}
    \label{corollary:local-PAG-marks-correct}
    Let $\MAG$ be a MAG over $\mathbf{O}$, and let $\LocalP[\mathbf{O}^{\prime}]$ denote the local PAG over any subset $\mathbf{O}^{\prime} \subset \mathbf{O}$. Then, every non-circle edge mark in $\LocalP[\mathbf{O}^{\prime}]$ encodes the same causal information as in $\MAG$. Specifically, for any two adjacent vertices $X, Y \in \mathbf{O}^{\prime}$,
    \begin{itemize}[leftmargin=26pt,itemsep=0pt,topsep=0pt,parsep=0pt]
        \item[(1)] $X \--\!\!* Y$ in $\LocalP[\mathbf{O}^{\prime}]$ means that $X$ is an ancestor of ${Y}\cup \vars[S]$,
        \item[(2)] $X \leftarrow\!\!* Y$ in $\LocalP[\mathbf{O}^{\prime}]$ means that $X$ is not an ancestor of ${Y}\cup \vars[S]$.
    \end{itemize}
\end{corollary}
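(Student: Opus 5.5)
The corollary follows from \cref{lemma:local-PAG-marks-correct} once we connect the local PAG $\LocalP[\mathbf{O}^{\prime}]$ to the family of MAGs it represents. The plan is to show that any non-circle mark in $\LocalP[\mathbf{O}^{\prime}]$ appears in the particular MAG $\MAG^{\prime}$ that causally represents $\DAG$ over $\mathbf{O}^{\prime}$. Then we transfer its causal meaning via the lemma.

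The first step identifies the distribution from which $\LocalP[\mathbf{O}^{\prime}]$ is learned. The marginal selected distribution $P_{\mathrm{obs}}(\mathbf{O}^{\prime})$ equals $P(\mathbf{O}^{\prime}\mid \mathbf{S}=\mathbf{s})$. This is exactly the observed distribution of the same underlying DAG $\DAG$ when the latent set is enlarged to $\mathbf{L}\cup(\mathbf{O}\setminus\mathbf{O}^{\prime})$ and the selection set is kept as $\mathbf{S}$. By the construction in \cref{alg:dag-to-mag} applied with this enlarged latent set, $\DAG$ induces a MAG $\MAG^{\prime}$ over $\mathbf{O}^{\prime}$. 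For disjoint $\mathbf{X},\mathbf{Y},\mathbf{Z}\subseteq\mathbf{O}^{\prime}$, m-separation in $\MAG^{\prime}$ coincides with d-separation given $\mathbf{Z}\cup\mathbf{S}$ in $\DAG$. That in turn coincides with m-separation in $\MAG$ restricted to sets inside $\mathbf{O}^{\prime}$. So Markov and faithfulness of $P_{\mathrm{obs}}(\mathbf{O})$ to $\MAG$ imply Markov and faithfulness of $P_{\mathrm{obs}}(\mathbf{O}^{\prime})$ to $\MAG^{\prime}$.

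The second step invokes soundness of the PAG learner. Since $\LocalP[\mathbf{O}^{\prime}]$ is produced by a sound and complete procedure from a distribution faithful to $\MAG^{\prime}$, it is the PAG of $[\MAG^{\prime}]$. By \cref{def:pag}, every non-circle mark in it is invariant across $[\MAG^{\prime}]$, and in particular it appears in $\MAG^{\prime}$ itself. A tail at $X$ on the edge with $Y$ in $\LocalP[\mathbf{O}^{\prime}]$ is therefore a tail at $X$ in $\MAG^{\prime}$. By \cref{lemma:local-PAG-marks-correct}(1), $X$ is then an ancestor of $\{Y\}\cup\mathbf{S}$ in $\DAG$, and hence in $\MAG$. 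The arrowhead case follows from \cref{lemma:local-PAG-marks-correct}(2) in the same way.

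The main point needing care is the first step. We must confirm that treating $\mathbf{O}\setminus\mathbf{O}^{\prime}$ as latent gives a legitimate instance of the DAG-to-MAG setting. We must also confirm that the MAG "causally representing" $\DAG$ over $\mathbf{O}^{\prime}$ has the same m-separations as the marginal of $\MAG$, so that faithfulness transfers. I would argue this directly from the d-separation preservation property stated after \cref{alg:dag-to-mag}, applied twice: once with latent set $\mathbf{L}$ and once with latent set $\mathbf{L}\cup(\mathbf{O}\setminus\mathbf{O}^{\prime})$.
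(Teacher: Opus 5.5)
Your proposal is correct and follows the route the paper intends. The paper gives no separate proof and treats the corollary as immediate from \cref{lemma:local-PAG-marks-correct}. Your argument makes explicit the two steps left implicit there: faithfulness transfers to the marginal, so $\LocalP[\mathbf{O}^{\prime}]$ is the PAG of $[\MAG^{\prime}]$, where $\MAG^{\prime}$ is obtained by treating $\mathbf{O}\setminus\mathbf{O}^{\prime}$ as latent; and its non-circle marks, being invariant, appear in $\MAG^{\prime}$ itself, so the lemma applies.
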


\begin{definition}[\textbf{Arrow-Collider Path}~\citep{li2025local}]
    \label{def:arrow-collider-path}
    In a mixed graph, a path $\pi= \left \langle V_0,\dots,V_n  \right \rangle $ is called an arrow-collider path from $V_0$ to $V_n$
    if every non-endpoint vertex is a collider on $\pi$, and the edge between $V_0$ and $V_1$ points into $V_0$, \ie, $V_0 \leftrightarrow V_1 \dots \leftarrow \!\!* V_n$.  If $n=1$, $\pi$ simplifies to $V_0 \leftarrow \!\!* V_1$.
\end{definition}

\begin{lemma}
\label{lemma:local-define-sepset}
Let $\MAG$ be a MAG over $\mathbf{O}$, and let
$V_1$ and $V_2$ be two non-adjacent vertices in $\MAG$ such that
$V_1 \notin \Ant(V_2,\MAG)$.
Let
\[
\mathbf{Z}
=
\{\ArrColl(V_1,\MAG)
\cap
\Ant(\{V_1,V_2\},\MAG)\}\setminus \{V_1, V_2\},
\]
where $\ArrColl(V_1,\MAG)$ denotes the set of vertices reachable from $V_1$ via
arrow-collider paths in $\MAG$.
Then $\mathbf{Z}$ $m$-separates $V_1$ and $V_2$ in $\MAG$.
\end{lemma}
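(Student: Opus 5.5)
Assume for contradiction that some path $\pi=\langle V_1=W_0,W_1,\dots,W_k=V_2\rangle$ is $m$-connecting given $\mathbf{Z}$. By \cref{lemma:m-connecting-path}, every vertex of $\pi$ lies in $\Antplus(\{V_1,V_2\}\cup\mathbf{Z},\MAG)$. Since $\mathbf{Z}\subseteq\Ant(\{V_1,V_2\},\MAG)$ and anteriority is transitive (as in the proof of \cref{lemma:ancestral-separator}), this set equals $\Antplus(\{V_1,V_2\},\MAG)$. So every vertex of $\pi$ is anterior to $V_1$ or to $V_2$, and colliders on $\pi$ are not needed to reach $\mathbf{Z}$ via descendants beyond this set.

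Next look at the first edge of $\pi$ at $V_1$. \textbf{Case 1: the mark at $V_1$ is a tail} ($V_1\to W_1$ or $V_1 - W_1$). Then $V_1$ is anterior to $W_1$. Follow $\pi$ from $V_1$ for as long as it stays directed or undirected away from $V_1$. If the whole path is of this form, then $V_1\in\Ant(V_2,\MAG)$, contradicting the hypothesis. Otherwise there is a first vertex $W_j$ with an arrowhead into it from the next edge. That vertex is a collider, so it must have a descendant in $\mathbf{Z}$ or, by the first step, be anterior to $V_1$ or $V_2$. Being anterior to $V_2$ again makes $V_1$ anterior to $V_2$. Being anterior to $V_1$ creates a (almost) directed or anterior cycle through $V_1\to\cdots\to W_j$, which ancestrality forbids. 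The one subtle spot is an undirected edge at $V_1$; here I plan to use condition (iii) of \cref{def:ancestral-graph}, namely that no vertex incident to an undirected edge has parents or spouses.

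\textbf{Case 2: the mark at $V_1$ is an arrowhead.} Let $W_j$ be the last vertex such that $\pi(V_1,W_j)$ is an arrow-collider path from $V_1$. Every $W_i$ with $1\le i\le j$ is then in $\ArrColl(V_1,\MAG)$, and by the first step it is also in $\Ant(\{V_1,V_2\},\MAG)$. So $W_i\in\mathbf{Z}$ unless $W_i=V_2$. If $W_j=V_2$ we need $j\ge2$ because $V_1,V_2$ are non-adjacent; the intermediate vertices are colliders in $\mathbf{Z}$, so the path is indeed active, and this is the step needing care. Otherwise $W_j\neq V_2$, and maximality means $W_j$ is a non-collider on $\pi$ lying in $\mathbf{Z}$ (if $j\ge1$), which blocks $\pi$.

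The \textbf{main obstacle} is the sub-case $W_j=V_2$, where $\pi$ is itself a collider path $V_1\leftarrow\!\!*\cdots\leftarrow\!\!*V_2$ with all interior vertices in $\mathbf{Z}$. Here I would use maximality of $\MAG$ together with the inducing-path characterization: a collider path whose interior vertices are all anterior to $\{V_1,V_2\}$ is an inducing path, and in a MAG an inducing path between two vertices forces them to be adjacent (\citep{richardson2002ancestral}). That contradicts the non-adjacency of $V_1$ and $V_2$. One needs "ancestor" rather than "anterior" for the inducing-path lemma. Because an interior collider has arrowheads into it, it cannot be incident to undirected edges, so "anterior" upgrades to "ancestor". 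Collecting the cases shows no $m$-connecting path exists, so $\mathbf{Z}$ $m$-separates $V_1$ and $V_2$.
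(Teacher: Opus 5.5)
Your proof is correct and uses the same ingredients as the paper's proof. These are a case split on the mark at $V_1$; ancestrality (a vertex with an incoming arrowhead has no undirected edges) to show that the first collider after a tail prefix cannot be anterior to $\{V_1,V_2\}$; membership in $\mathbf{Z}$ of the non-collider reached along an arrow-collider prefix; and maximality via inducing paths when the whole path is a collider path. The difference is organizational: arguing by contradiction and invoking \cref{lemma:m-connecting-path} up front puts every vertex of $\pi$ in $\Antplus(\{V_1,V_2\},\MAG)$, so a single maximal-arrow-collider-prefix argument replaces the paper's sub-cases after the maximal bidirected prefix. That argument also covers the path $V_1\leftrightarrow\cdots\leftrightarrow A_i\leftarrow V_2$, where the paper's step ``$A_{i+1}\in\mathbf{Z}$, a blocking non-collider'' is not justified because $A_{i+1}=V_2$ is excluded from $\mathbf{Z}$.
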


\begin{proof}[Proof of~\cref{lemma:local-define-sepset}]
    We show that $\mathbf{Z}$ blocks every path between $V_1$ and $V_2$ in $\MAG$.
Let \[\pi = \langle V_1 = A_0, A_1, \dots, A_i, \dots, V_2 = A_n \rangle\] be an arbitrary path between $V_1$ and $V_2$ in $\MAG$.  
We analyze cases according to the form of $\pi$:
\begin{itemize}
    \item If $\pi$ forms $V_1 \leftarrow A_1 \dots V_2$, then $\mathbf{Z}$ blocks the path because $A_1 \in \mathbf{Z}$ and $A_1$ is a non-collider on $\pi$.
    \item If $\pi$ forms $V_1 \leftrightarrow A_1 \dots A_i \dots V_2$, then $\mathbf{Z}$ also blocks the path.  
    Specifically, if all edges on $\pi$ are bi-directed, there must exist some $A_i \notin \Ant(\{V_1,V_2\}, \MAG)$; otherwise, $\pi$ would be an inducing path between $V_1$ and $V_2$ but they are not adjacent.  
    Thus, $\mathbf{Z}$ blocks $\pi$ since $A_i$ is a collider and $A_i \notin \Antplus(\mathbf{Z}, \MAG)$. 
    
    Otherwise, let $i$ ($0 < i < n$) be the largest index such that all edges between $V_1$ and $A_i$ are bi-directed. Then the edge between $A_i$ and $A_{i+1}$ is either $A_i \rightarrow A_{i+1}$ or $A_i \leftarrow A_{i+1}$. The case $A_i \-- A_{i+1}$ is impossible, as it would contradict the ancestral property due to the presence of $A_{i-1} \leftrightarrow A_i$. We proceed by considering the direction of this edge:  
    \begin{itemize}
        \item If $A_i \rightarrow A_{i+1}$ and \[A_i \in \Ant(\{V_1,V_2\}, \MAG),\] then $A_i \in \mathbf{Z}$, so $\mathbf{Z}$ blocks $\pi$ as $A_i$ is a non-collider and $A_i \in \mathbf{Z}$.
          
        \item If $A_i \rightarrow A_{i+1}$ and \[A_i \notin \Ant(\{V_1,V_2\}, \MAG),\] then there must exist a collider on $\pi(A_i, V_2)$, let $A_j$ be the collider that is the nearest to $A_i$ with $i < j < n$. Since \[A_i\in\Ant(A_j,\MAG),\] we have $A_j \notin \Ant(\{V_1,V_2\}, \MAG)$; otherwise, $A_i$ would belong to $\Ant(\{V_1,V_2\}, \MAG)$. Therefore, $\mathbf{Z}$ blocks this path because $A_j$ is a collider on $\pi$ and $A_j \notin \Antplus(\mathbf{Z}, \MAG)$.  
        
        \item If $A_i \leftarrow A_{i+1}$ and \[A_{i+1} \in \Ant(\{V_1,V_2\}, \MAG),\] then $A_{i+1} \in \mathbf{Z}$, so $\mathbf{Z}$ blocks $\pi$ as $A_{i+1}$ is a non-collider and $A_{i+1} \in \mathbf{Z}$.  
        \item If $A_i \leftarrow A_{i+1}$ and \[A_{i+1} \notin \Ant(\{V_1,V_2\}, \MAG),\] then $A_i \notin \Ant(\{V_1,V_2\}, \MAG)$. Hence, $\mathbf{Z}$ blocks this path because $A_i$ is a collider on $\pi$ and $A_i \notin \Antplus(\mathbf{Z}, \MAG)$.
    \end{itemize}
    \item If $\pi$ forms $V_1 \--\!\!* A_1 \dots A_i \dots V_2$. Since $V_1\notin \Ant(V_2,\MAG)$, there must exist at least one collider on $\pi$. Let $A_i$ be the collider nearest to $V_1$. Since $V_1 \in \Ant(A_i, \MAG)$, we have $A_i \notin \Ant(\{V_1,V_2\}, \MAG)$.  
    Therefore, $\mathbf{Z}$ blocks $\pi$ because $A_i$ is a collider and $A_i \notin \Ant(\mathbf{Z}, \MAG)$.
\end{itemize}
Consequently, every path $\pi$ between $V_1$ and $V_2$ is blocked by $\vars[Z]$. Therefore, $\mathbf{Z}$ $m$-separates $V_1$ and $V_2$, which completes the proof.
\end{proof}

We then prove \cref{theorem:local-learning-collider-paths} and \cref{theorem:local-learning-collider-triples} respectively. For notational convenience, let $\Sepset(X,Y)$ denote the set of vertices that m-separates $X$ and $Y$. Let $\PAG$ denote the ground-truth PAG over $\mathbf{O}$.

\begin{proof}[Proof of~\cref{theorem:local-learning-collider-paths}]
    We prove the two directions separately.

\proofdir{$\Rightarrow$}
Without loss of generality, suppose that an uncovered collider path 
\[
    X \ast\!\!\rightarrow V_1 \leftrightarrow \dots \leftarrow\!\!\ast V_i
\]
is identified in $\LocalP[X]$, with $i\ge 2$.   
By definition, all intermediate vertices along this path are colliders, with every consecutive triplet forming an unshielded collider.  

Consider the first triplet $\langle X, V_1, V_2 \rangle$. Since this triplet is identified over $\MBplus(X)$, the following hold:
\begin{enumerate}[label=(\arabic*)]
    \item $\forall \mathbf{S} \subset \MB(X)$, we have $X \nCI V_1 \mid \mathbf{S}$.
    \item $\exists\, \Sepset(X, V_2) \subset \MB(X)$ such that \[X \CI V_2 \mid \Sepset(X, V_2).\]
    \item $\forall \mathbf{S} \subset \MBplus(X)$, we have $V_1 \nCI V_2 \mid \mathbf{S}$.
    \item $V_1 \notin \Sepset(X, V_2)$.
\end{enumerate}
From~\cref{theorem:local-learning-edge}, (1) ensures that the edge between $X$ and $V_1$ in $\LocalP[X]$ is consistent with $\PAG$, and (2) ensures that no edge between $X$ and $V_2$ exists in both.  
Assume for contradiction that the edge between $X$ and $V_1$ in $\PAG$ is $X \ast\!\!- V_1$. Then, together with (3) and (4), we would have $\forall \mathbf{S} \subseteq \MB(X)$, $X \nCI V_2 \mid \mathbf{S}$, contradicting (2).  
Therefore, the edge must be $X \ast\!\!\rightarrow V_1$ in $\PAG$ (this can also be inferred from Corollary~\ref{corollary:local-PAG-marks-correct}).

Next, suppose that the edge between $V_1$ and $V_2$ in $\LocalP[X]$ is spurious, meaning there is no direct edge between $V_1$ and $V_2$ in the underlying $\MAG$, but at least one active path connects them. 
Define \[\mathbf{Z} = \{\ArrColl(V_1, \MAG) \cap \Ant(\{V_1,V_2\}, \MAG)\}\setminus\{V_1,V_2\},\] since $X \ast\!\!\rightarrow V_1$, it follows that \[\ArrColl(V_1, \MAG) \subset \MBplus(X),\] hence $\mathbf{Z} \subset \MBplus(X)$.  
Since $V_1 \leftarrow\!\!* V_2$ in $\LocalP[X]$, by~\cref{corollary:local-PAG-marks-correct}, $V_1$ is not an ancestor of $V_2$ in $\MAG$.
Then, by~\cref{lemma:local-define-sepset}, $\vars[Z] \subset \MBplus(X)$ $m$-separates $V_1$ and $V_2$.
We have thus confirmed that the edge $V_1 \leftarrow\!\!* V_2$ in $\LocalP[X]$ is consistent with $\PAG$.

Next, consider the second triplet $\langle V_1, V_2, V_3 \rangle$. The following holds: 
\begin{enumerate}[label=(\arabic*)]
    \item[(1)] $\forall \mathbf{S} \subset \MBplus(X),\; V_{1} \nCI V_{2} \mid \mathbf{S}$, 
    \item[(2)] $\exists \Sepset(V_{1}, V_{3}) \subset \MB(X),\;$ \[ V_{1} \CI V_{3} \mid \Sepset(V_{1}, V_{3}),\] 
    \item[(3)] $\forall \mathbf{S} \subset \MBplus(X),\; V_{2} \nCI V_{3} \mid \mathbf{S}$, and 
    \item[(4)] $V_{2} \notin \Sepset(V_{1}, V_{3})$. 
\end{enumerate}
As discussed above, the direct edge between $V_{1}$ and $V_{2}$ in $\LocalP[X]$ is consistent with $\PAG$, and the absence of a direct edge between $V_{2}$ and $V_{3}$ in $\LocalP[X]$ is also consistent with $\PAG$. Assuming the edge between $V_1$ and $V_{2}$ in $\PAG$ is $V_{1} \leftarrow V_{2}$, conditions (3) and (4) lead to $\forall \mathbf{S} \subseteq \MB(X), V_{1} \nCI V_{3} \mid \mathbf{S}$, which contradicts condition (2). Therefore, the edge between $V_1$ and $V_2$ must be $V_1 \leftrightarrow V_2$ in $\PAG$.
Similarly, $\ArrColl(V_2, \MAG) \subset \MB(X)$ since $X*\!\!\rightarrow V_{1} \leftrightarrow V_{2}$. Thus, by~\cref{lemma:local-define-sepset}, the direct edge between $V_2$ and $V_3$ in $\LocalP[X]$ is the same as $\PAG$. The edges along this triplet in $\LocalP[X]$ are consistent with $\PAG$, and the analysis extends iteratively to the remaining unshielded collider triples along the path.  
Hence, all uncovered collider paths of the form 
$X \ast\!\!\rightarrow V_i \leftrightarrow \dots \leftarrow\!\!\ast V_j$
identified in $\LocalP[X]$ are consistent with those in the ground-truth $\PAG$.

($\Leftarrow$) 
Conversely, suppose that an uncovered collider path
\[
\pi : \langle X(=V_0) \ast\!\!\rightarrow V_1 \leftrightarrow \dots \leftarrow\!\!\ast V_i \rangle
\]
exists in the ground-truth $\PAG$, where $i \ge 2$. 
Then, for every adjacent pair of vertices on $\pi$, there exists no $\vars[Z]\subseteq \vars[O]$ that m-separates them; consequently, no such $\vars[Z]\subseteq \MB(X)$ m-separates them either.

We consider triples $\langle V_j, V_{j+1}, V_{j+2} \rangle$ along $\pi$ according to the index $j$.

\begin{enumerate}[label=(\arabic*)]

\item If $j=0$, then by~\cref{theorem:local-learning-edge}, there exists a set $\vars[Z]\subset \MB(X)$ that m-separates $V_0$ and $V_2$. Hence, there is no edge between $V_0$ and $V_2$ in $\LocalP[X]$, and $V_1 \notin \vars[Z]$.

\item If $0 < j < i-2$, then $ V_j\leftrightarrow V_{j+1}\leftrightarrow V_{j+2}$. 

If $V_{j+2} \in \Ant(V_j,\MAG)$, then necessarily \[V_j \notin \Ant(V_{j+2},\MAG).\] 
By~\cref{lemma:local-define-sepset}, the set
\[
\vars[Z] = \{\ArrColl(V_j,\MAG) \cap \Ant(\{V_j,V_{j+2}\},\MAG)\} \setminus \{V_j,V_{j+2}\}
\]
m-separates $V_j$ and $V_{j+2}$, and $V_{j+1} \notin \vars[Z]\subset \MB(X)$.

If
\[
V_j \in \Ant(V_{j+2},\MAG) \text{ and } V_{j+2} \notin \Ant(V_j,\MAG),
\]
or
\[
V_j \notin \Ant(V_{j+2},\MAG) \text{ and } V_{j+2} \notin \Ant(V_j,\MAG),
\]
then, by an argument analogous to the above and by~\cref{lemma:local-define-sepset}, there also exists a set $\vars[Z]\subset \MB(X)$ that m-separates $V_j$ and $V_{j+2}$, with $V_{j+1}\notin \vars[Z]$.

Therefore, there is no edge between $V_j$ and $V_{j+2}$ in $\LocalP[X]$, and $V_{j+1}$ does not belong to the separating set of $V_j$ and $V_{j+2}$.

\item If $j=i-2$.

Suppose $V_{i-1} \leftarrow V_i$. Then \[V_{i-2} \notin \Ant(V_i,\MAG)\] must hold; otherwise this would contradict the edge $V_{i-2}\leftrightarrow V_{i-1}$. 
By~\cref{lemma:local-define-sepset}, the set
\[
\vars[Z] = \{\ArrColl(V_{i-2},\MAG) \cap \Ant(\{V_{i-2},V_i\},\MAG)\}\setminus \{V_{i-2},V_i\}
\]
m-separates $V_{i-2}$ and $V_i$, and $V_{i-1} \notin \vars[Z]$.

If $V_{i-1} \leftrightarrow V_i$, then by reasoning analogous to case (2), there exists a set $\vars[Z]\subset \MB(X)$ that m-separates $V_{i-2}$ and $V_i$, with $V_{i-1}\notin \vars[Z]$.

Hence, there is no edge between $V_{i-2}$ and $V_i$ in $\LocalP[X]$, and $V_{i-1}$ does not belong to the separating set of $V_{i-2}$ and $V_i$.

\end{enumerate}

Consequently, all unshielded triples along $\pi$ are identified in $\LocalP[X]$, and for each $j$, the vertex $V_{j+1}$ is not contained in the separating set of $V_j$ and $V_{j+2}$. 
Therefore, the uncovered collider path $\pi$ is correctly identified in $\LocalP[X]$. 
This completes the proof.

\end{proof}

\begin{proof}[Proof of~\cref{theorem:local-learning-collider-triples}]
Without loss of generality, suppose an unshielded collider $V_1 \ast\!\!\rightarrow X \leftarrow\!\!\ast V_2$ is identified in $\LocalP[X]$.  
By~\cref{theorem:local-learning-edge}, the existence of direct edges between $X$ and $V_1$, and between $X$ and $V_2$, is consistent with $\PAG$. Likewise, the absence of a direct edge between $V_1$ and $V_2$ in $\LocalP[X]$ is also consistent with $\PAG$. 

Furthermore, since $\exists\Sepset(V_1, V_2) \subset \MBplus(X)$ and $X \notin \Sepset(V_1, V_2)$, $X$ must be a collider in the triple $\langle V_1, X, V_2 \rangle$ in $\PAG$ (this conclusion also follows from~\cref{corollary:local-PAG-marks-correct}). Hence, the identified collider $V_1 \ast\!\!\rightarrow X \leftarrow\!\!\ast V_2$ in $\LocalP[X]$ agrees with that in $\PAG$.
\end{proof}

\subsection{Proof of~\cref{theorem:completeness-stop}}
\label{app:proof-theo-completeness-stop}

We prove that each stopping rule in \cref{theorem:completeness-stop} is sufficient for terminating the sequential learning procedure without losing target-specific causal information. 
The proof relies on the following elementary observation: removing variables that are irrelevant leaf variables with respect to a retained set does not change the joint distribution over the retained variables.

\begin{lemma}
\label{lemma:relative-leaf-deletion}
Let $\MAG$ be the underlying MAG over the observed variable set $\mathbf{O}$ induced by a DAG $\DAG$, and let $\mathbf{X}\subseteq \mathbf{O}$ be a set of variables to be retained. 
Let $\mathbf{Y}\subseteq \mathbf{O}\setminus \mathbf{X}$ be a set of variables that can be removed in an order $Y_1,\ldots,Y_m$ such that, at each step, $Y_k$ is a leaf relative to the variables retained at that step. 
Then removing $\mathbf{Y}$ does not change the marginal distribution over $\mathbf{X}$, i.e.,
\[
P_{\MAG}(\mathbf{X}) = P_{\MAG\setminus \mathbf{Y}}(\mathbf{X}).
\]
\end{lemma}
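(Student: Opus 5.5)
The plan is to argue by induction on the number $m$ of removed variables, so that it suffices to treat the removal of a single variable $Y_1$ that is a leaf relative to the currently retained set $\mathbf{X}\cup\{Y_2,\ldots,Y_m\}$. Here ``leaf relative to a retained set'' will be read as follows: $Y_1$ has no descendant in that set, i.e., no directed path in $\MAG$, and hence no directed path of observed ancestors in $\DAG$, from $Y_1$ into the set. Given the one-step statement, iterating it over $Y_1,\ldots,Y_m$ and composing the equalities of marginals proves the lemma. At step $k$ the retained set is $\mathbf{X}\cup\{Y_{k+1},\ldots,Y_m\}$, which is exactly the set relative to which $Y_k$ is assumed to be a leaf.

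For a single step, I will work at the level of the underlying SCM rather than the MAG. The observed distribution over $\mathbf{W}=\mathbf{O}\setminus\{Y_1\}$ is
\[
P(\mathbf{W}\mid \mathbf{S}=\mathbf{s})=\int P(\mathbf{W},Y_1,\mathbf{L}\mid \mathbf{S}=\mathbf{s})\,dY_1\,d\mathbf{L}.
\]
Since $Y_1$ is marginalized exactly as a latent variable would be, this equals the distribution obtained by treating $Y_1$ as an additional latent variable, that is, by moving it from $\mathbf{O}$ to $\mathbf{L}$. By the construction in \cref{alg:dag-to-mag}, that distribution is represented by the MAG induced over $\mathbf{W}$. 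I will identify this induced MAG with $\MAG\setminus Y_1$ and interpret $P_{\MAG\setminus\mathbf{Y}}$ accordingly, which is the same projection idea used in \cref{lemma:local-PAG-marks-correct}. Marginalizing further over $\mathbf{W}\setminus\mathbf{X}$ then gives the desired equality of marginals over $\mathbf{X}$.

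The main obstacle is checking that the MAG over $\mathbf{W}$ obtained by latent projection coincides with the induced subgraph $\MAG[\mathbf{W}]$, so that ``removing'' $Y_1$ really is simply deleting the vertex. The leaf condition is what makes this work. Because $Y_1$ has no descendants among the retained variables, it cannot act as a non-collider intermediate on a new inducing path between two retained vertices: any inducing path through $Y_1$ would need $Y_1$ to be a collider, and in that case $Y_1$ would have to be an ancestor of an endpoint or of $\mathbf{S}$. The first option contradicts the leaf property. For the second, an observed $Y_1\in\An(\mathbf{S})$ would already make $Y_1$ an ancestor of $\mathbf{S}$ through some other route, so I will argue that any such path can be rerouted, and therefore no new adjacencies arise. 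Edge marks between retained vertices depend only on ancestral relations in $\DAG$, and these do not change. The subtle case is an undirected edge created by selection, and I would handle it by verifying the inducing-path criterion of \cref{def:inducing-path-relative} directly. If this identification proved delicate, I would fall back on the SCM-level identity above, which holds regardless of the graph: in a DAG, deleting a childless node together with its structural equation leaves the factorization over the remaining nodes unchanged.
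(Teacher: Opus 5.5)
There is a genuine gap, and it sits exactly in the step you call the main obstacle. You read ``leaf'' as ``has no descendant in the retained set''. With selection variables this condition is too weak for the vertex-deleted graph to represent the marginal, so your identification of the projected MAG with $\MAG[\mathbf{W}]$ is false in general.

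Take $\DAG$ with edges $A\to Y\to S\leftarrow B$, $\mathbf{S}=\{S\}$, $\mathbf{L}=\emptyset$.
\begin{itemize}
\item Since $A,Y,B\in\An(\mathbf{S},\DAG)$, the induced MAG is $A - Y - B$ with both edges undirected.
\item $Y$ has no descendant in $\{A,B\}$ in either $\MAG$ or $\DAG$, so it is a leaf in your sense.
\item Once $Y$ is treated as latent, $A\to Y\to S\leftarrow B$ is an inducing path relative to $(\{Y\},\mathbf{S})$. The projected MAG over $\{A,B\}$ is therefore $A - B$.
\item $\MAG[\{A,B\}]$ has no edge and entails $A\CI B$, which fails (under faithfulness) in $P(A,B\mid \mathbf{S}=\mathbf{s})$.
\end{itemize}
So $Y_1$ \emph{can} be a non-collider on a new inducing path. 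The directed segment leaving $Y_1$ may end at a collider in $\An(\mathbf{S},\DAG)$ rather than at an ancestor of an endpoint, and no rerouting argument removes this.

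Your fallback does not close the gap either. Here $Y$ has the child $S$ in $\DAG$, so deleting $Y$ with its structural equation alters the selection mechanism. The bare identity $P(\mathbf{W}\mid\mathbf{S}=\mathbf{s})=\int P(\mathbf{W},Y_1,\mathbf{L}\mid\mathbf{S}=\mathbf{s})\,dY_1\,d\mathbf{L}$ is just marginalization. It says nothing about the graph $\MAG\setminus\mathbf{Y}$, which is the whole content of the lemma. Relatedly, your inference from ``no directed path in $\MAG$'' to ``no directed path in $\DAG$'' is invalid: $Y\to A\to S$ induces $Y - A$.

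The missing ingredient is the condition the paper's proof actually invokes. In the single step with $\mathbf{R}=\mathbf{O}\setminus\{Y\}$, you need $Y\notin\Ant(\mathbf{R},\MAG)$ (and analogously in the current graph at each later step). Equivalently, every edge at $Y$ carries an arrowhead at $Y$, which rules out undirected as well as directed edges from $Y$ into $\mathbf{R}$. With this reading your projection route goes through without any inducing-path case analysis:
\begin{itemize}
\item $\mathbf{R}$ is anterior-closed, so anterior sets of subsets of $\mathbf{R}$ coincide in $\MAG$ and $\MAG[\mathbf{R}]$.
\item By \cref{lemma:moral-m-separates}, every $m$-separation among subsets of $\mathbf{R}$ is decided by the same augmented graph in both.
\item By \cref{lemma:ancestral-m-separates}, two retained vertices $U,V$ that are non-adjacent in $\MAG$ are separated by a subset of $\Ant(\{U,V\},\MAG)\subseteq\mathbf{R}$. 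Hence adjacencies agree with the projection.
\item Edge marks agree because they depend only on ancestral relations in $\DAG$.
\end{itemize}

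Your reduction to a single step followed by iteration matches the paper's. The difference is that the paper sums out $Y$'s ``factor'' as one would in a DAG. The corrected projection argument is the more natural justification for MAGs, which carry no such product factorization, but only once the leaf condition is stated in terms of anteriority.
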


\cref{lemma:relative-leaf-deletion} implies that deleting variables outside the retained set does not affect the distribution of interest, provided that they are removed as relative leaves. 
In particular, when a variable $Y\notin\mathbf{X}$ has no remaining descendants that are relevant to $\mathbf{X}$, marginalizing out $Y$ only removes its own local factor and leaves the marginal distribution over $\mathbf{X}$ unchanged. 
Thus, the distribution over $\mathbf{X}$ in the reduced MAG $\MAG\setminus\mathbf{Y}$ is identical to that in the original MAG $\MAG$.

\begin{proof}
It is sufficient to prove the claim for the removal of a single variable. 
Let $Y\in \mathbf{O}\setminus \mathbf{X}$ be a variable to be removed, and let
\[
\mathbf{R}=\mathbf{O}\setminus\{Y\}.
\]
Since $Y$ is a leaf relative to the currently retained variables, it does not appear in the anterior set of any variable in $\mathbf{R}$ under the corresponding underlying causal structure representation. 
Therefore, the factor associated with $Y$ can be separated from the remaining factors when marginalizing over $Y$. 
Specifically,
\begin{align}
P_{\MAG}(\mathbf{X})
&= \sum_{\mathbf{O}\setminus \mathbf{X}} P_{\MAG}(\mathbf{O}) \notag\\
&= \sum_{\mathbf{R}\setminus \mathbf{X}} \sum_y P_{\MAG}(\mathbf{R},y) \notag\\
&= \sum_{\mathbf{R}\setminus \mathbf{X}} P_{\MAG\setminus Y}(\mathbf{R}) \notag\\
&= P_{\MAG\setminus Y}(\mathbf{X}).
\end{align}
Thus, removing such a relative leaf preserves the marginal distribution over $\mathbf{X}$.

Applying the same argument sequentially to $Y_1,\ldots,Y_m$ gives
\[
P_{\MAG}(\mathbf{X})
=
P_{\MAG\setminus Y_1}(\mathbf{X})
=
\cdots
=
P_{\MAG\setminus \mathbf{Y}}(\mathbf{X}),
\]
which proves the claim.
\end{proof}

\begin{proof}[Proof of~\cref{theorem:completeness-stop}]
Throughout the sequential procedure, \method only inserts adjacencies and endpoint marks whose consistency with the global PAG is guaranteed by the local learning results in \cref{sec:foundations-local-learning} (\ie, \cref{corollary:local-learning-edge,corollary:local-PAG-marks-correct,theorem:local-learning-collider-paths,theorem:local-learning-collider-triples}), and \textsc{OrientPAG} applies sound orientation rules.
Thus, every structural mark currently recorded in $\hat{\PAG}$ is sound with respect to the global PAG.

\paragraph{Rule $\mathcal{R}1$}
If all edge marks incident to $T$ have been determined, then the target-specific information of interest has already been fixed.
Since all determined adjacencies and endpoint marks incident to $T$ in $\hat{\PAG}$ are sound with respect to the global PAG, no later local learning step can validly change them.
Hence the direct causes and direct effects of $T$ encoded by the target-specific structure have already been recovered.

\paragraph{Rule $\mathcal{R}2$}
If $\mathrm{Waitlist}$ is empty under the update, then there is no unprocessed vertex in the current $T$-connected active region that is still selected for local learning.
Thus, when $\mathrm{Waitlist}$ is empty, the current local structure around $T$ is equivalent to the target-specific structure obtained from global causal discovery.

\paragraph{Rule $\mathcal{R}3$}
Let $\mathbf{Y}$ be the set of remaining unprocessed variables.
Under $\mathcal{R}3$, these variables lie outside the anterior region that does not affect the target-specific structure around $T$.
Relative to the retained region consisting of $T$ and the currently relevant processed vertices, variables in $\mathbf{Y}$ can be treated as removable relative leaves: marginalizing them out does not change the distribution over the retained region by \cref{lemma:relative-leaf-deletion}.
Hence, further local learning around variables in $\mathbf{Y}$ cannot supply additional conditional independence constraints or orientation information for determining the adjacencies and endpoint marks incident to $T$.

This completes the proof that each stopping rule is sufficient for terminating the sequential procedure without losing target-specific causal information.
\end{proof}

\subsection{Proof of~\cref{theorem:locals-correctness}}
\label{app:proof-theorem-locals-correctness}

Let $\PAG$ denote the global PAG that would be obtained by applying a sound and complete global causal discovery procedure to $P_{\mathrm{obs}}(\mathbf{O})$. 
We first note the following invariant of \method: after each iteration, all adjacencies and endpoint marks inserted into $\hat{\PAG}$ are consistent with those in $\PAG$. 
Indeed, for each processed variable $X$, \cref{corollary:local-learning-edge,corollary:local-PAG-marks-correct} guarantees that the adjacencies incident to $X$ in the locally learned structure $\g[L]_X$ coincide with those in the global PAG. 
Moreover, \method preserves only the local orientation information justified by \cref{theorem:local-learning-collider-paths,theorem:local-learning-collider-triples}, and the subsequent call to \textsc{OrientPAG} applies sound PAG orientation rules. 
Thus, no adjacency or endpoint mark added to $\hat{\PAG}$ contradicts the global PAG.

Since the target variable $T$ is processed first, the adjacencies incident to $T$ are correctly recovered from $\g[L]_T$ by \cref{corollary:local-learning-edge,corollary:local-PAG-marks-correct}. 
Subsequent iterations may further orient endpoint marks incident to $T$, but by the invariant above, every such orientation remains consistent with $\PAG$. 
Therefore, throughout the sequential procedure, the learned target-specific structure in $\hat{\PAG}$ is always sound with respect to the global PAG.

It remains to justify that the algorithm does not stop before all target-specific information has been recovered. 
This follows from \cref{theorem:completeness-stop}. 
If $\mathcal{R}1$ holds, then all endpoint marks incident to $T$ have been determined. 
If $\mathcal{R}2$ holds, then there is no remaining variable selected for further local learning. 
If $\mathcal{R}3$ holds, then every path from an unprocessed variable to $T$ is blocked by a non-potentially-anterior mark, so no remaining variable can provide additional information for resolving the local structure around $T$. 
Hence, upon termination, the local structure around $T$ in $\hat{\PAG}$ contains exactly the adjacencies and identifiable endpoint marks incident to $T$ that would be obtained from the global PAG $\PAG$.

Therefore, \method identifies the direct causes and direct effects of $T$ that are identifiable from $\PAG$, and the returned local structure is consistent with the target-specific structure obtained by global causal discovery.

\section{Additional Technical Details for LoCaLS}

\subsection{Orientation Rules}
\label{app:orientation-rules}
Let $\Sepsets$ denote the collection of all conditional independence relations identified during the local learning process. A separator $\Sepset(X, Y)$ between two variables $X$ and $Y$ recorded in $\Sepsets$ implies that there is no edge between $X$ and $Y$ in the learned graph.

\citep{zhang2008completeness} presented sound and complete orientation rules for learning a PAG. These rules are summarized as~\cref{alg:orient-pag}.

\begin{algorithm}[hpt!]
\caption{\textsc{OrientPAG}}
\label{alg:orient-pag}
\begin{algorithmic}[1]
\REQUIRE A PAG $\PAG$ and a collection of separating sets $\Sepsets$.
\ENSURE The oriented PAG $\PAG$.

\STATE Initialize $\PAG$ with the current skeleton and existing edge marks. \\
\algcomment{Apply any of the following rules whenever its antecedent is satisfied.}
\REPEAT

    \STATE \textbf{Rule 1.}
    If $X *\!\!\rightarrow Y \circ\!\!-\!\!* Z$, $\Sepset(X,Z)\in\Sepsets$, and $Y\in\Sepset(X,Z)$, then orient the triple as $X *\!\!\rightarrow Y \rightarrow Z$.

    \STATE \textbf{Rule 2.}
    If $X\rightarrow Y *\!\!\rightarrow Z$ or $X *\!\!\rightarrow Y\rightarrow Z$, and $X *\!\!-\!\circ Z$, then orient $X *\!\!-\!\circ Z$ as $X *\!\!\rightarrow Z$.

    \STATE \textbf{Rule 3.}
    If $X *\!\!\rightarrow Y \leftarrow\!\!* Z$, $X *\!\!-\!\circ W \circ\!\!-\!\!* Z$, $W *\!\!-\!\circ Y$, $\Sepset(X,Z)\in\Sepsets$, and $W\in\Sepset(X,Z)$, then orient $W *\!\!-\!\circ Y$ as $W *\!\!\rightarrow Y$.

    \STATE \textbf{Rule 4.} \IF{there exists a discriminating path $\pi=\langle W,\ldots,X,Y,Z\rangle$ between $W$ and $Z$ for $Y$, $Y\circ\!\!-\!\!* Z$, and $\Sepset(W,Z)\in\Sepsets$} \IF{$Y\in\Sepset(W,Z)$} \STATE Orient $Y\circ\!\!-\!\!* Z$ as $Y\rightarrow Z$. \ELSE \STATE Orient the triple $\langle X,Y,Z\rangle$ as $X\leftrightarrow Y\leftrightarrow Z$. \ENDIF \ENDIF

    \STATE \textbf{Rule 5.}
    For every remaining edge $X\circ\!\!-\!\circ Y$, if there exists an uncovered circle path $p=\langle X,Z,\ldots,W,Y\rangle$ between $X$ and $Y$ such that $\Sepset(X,W)\in\Sepsets$ and $Y\in\Sepset(X,W)$, and $\Sepset(Y,Z)\in\Sepsets$ and $X\in\Sepset(Y,Z)$, then orient $X\circ\!\!-\!\circ Y$ and every edge on $p$ as undirected edges.

    \STATE \textbf{Rule 6.}
    If $X - Y \circ\!\!-\!\!* Z$, then orient $Y\circ\!\!-\!\!* Z$ as $Y -\!\!* Z$.

    \STATE \textbf{Rule 7.}
    If $X -\!\!\circ Y \circ\!\!-\!\!* Z$, $\Sepset(X,Z)\in\Sepsets$, and $Y\in\Sepset(X,Z)$, then orient $Y\circ\!\!-\!\!* Z$ as $Y -\!\!* Z$.

    \STATE \textbf{Rule 8.}
    If $X\rightarrow Y\rightarrow Z$ or $X-\!\circ Y\rightarrow Z$, and $X\circ\!\!\rightarrow Z$, then orient $X\circ\!\!\rightarrow Z$ as $X\rightarrow Z$.

    \STATE \textbf{Rule 9.}
    If $X\circ\!\!\rightarrow Z$, and $p=\langle X,Y,W,\ldots,Z\rangle$ is an uncovered potentially directed path from $X$ to $Z$ such that $\Sepset(Z,Y)\in\Sepsets$ and $X\in\Sepset(Z,Y)$, then orient $X\circ\!\!\rightarrow Z$ as $X\rightarrow Z$.

    \STATE \textbf{Rule 10.}
    Suppose $X\circ\!\!\rightarrow Z$, $Y\rightarrow Z\leftarrow W$, $\pi_1$ is an uncovered potentially directed path from $X$ to $Y$, and $\pi_2$ is an uncovered potentially directed path from $X$ to $W$. Let $U$ be the vertex adjacent to $X$ on $\pi_1$ ($U$ could be $Y$) and let $V$ be the vertex adjacent to $X$ on $\pi_2$ ($V$ could be $W$). If $U$ and $V$ are distinct, $\Sepset(U,V)\in\Sepsets$, and $X\in\Sepset(U,V)$, then orient $X\circ\!\!\rightarrow Z$ as $X\rightarrow Z$.

\UNTIL{no rule can further orient any edge mark in $\PAG$}

\STATE \textbf{return} $\PAG$.
\end{algorithmic}
\end{algorithm}

\subsection{Update Waitlist}
\label{app:update-waitlist}

\begin{algorithm}[hpt!] 
    \caption{\textsc{UpdateWaitlist}} 
    \label{alg:update-waitlist} 
    \begin{algorithmic}[1] 
        \REQUIRE Target variable $T$, current graph $\hat{\PAG}$, processed set $\mathrm{Donelist}$. 
        \ENSURE Updated set $\mathrm{Waitlist}$. 
        \STATE Initialize $\mathrm{Waitlist}\gets \emptyset$. 
        \FOR{each variable $V$ in $\hat{\PAG}$} 
        \IF{$V$ has a potentially anterior path to $T$ in $\hat{\PAG}$} 
        \STATE Add $V$ to $\mathrm{Waitlist}$. 
        \ENDIF 
        \ENDFOR 
        \STATE Remove all variables in $\mathrm{Donelist}$ from $\mathrm{Waitlist}$. 
        \STATE \textbf{return} $\mathrm{Waitlist}$. 
\end{algorithmic} 
\end{algorithm}

The selective update in \cref{alg:update-waitlist} does not change the completeness argument above. 
It is a refinement of the coarse waitlist update: instead of retaining all unprocessed vertices connected to $T$, it keeps only those that still have a potentially anterior path to $T$ in the current graph $\hat{\PAG}$. 
Vertices discarded by this rule are exactly the vertices covered by the structural redundancy condition in $\mathcal{R}3$, and by \cref{theorem:completeness-stop}, learning their local structures cannot further affect the target-specific structure around $T$. 
Therefore, replacing the coarse update with \cref{alg:update-waitlist} preserves completeness, while making $\mathcal{R}3$ implicit in the waitlist update; in practice, the algorithm only needs to explicitly check $\mathcal{R}1$ and $\mathcal{R}2$.

\FloatBarrier
\section{Supplementary Examples}
\label{app:example-R3}

This example illustrates a learning process that terminates by $\mathcal{R}3$ under a coarse waitlist update and, when using \textsc{UpdateWaitlist}, terminates instead by $\mathcal{R}2$.

\begin{figure}[H]
    \centering
    \captionsetup[subfigure]{font=small, skip=3pt}
    \setlength{\fboxsep}{8pt}
    \setlength{\fboxrule}{0.4pt}

    {
        \renewcommand\fbox[1]{\fcolorbox{gray!45}{gray!05}{#1}}%

    \fbox{
    \begin{subfigure}[c]{0.30\linewidth}
        \centering
        \fixedtikzbox{%
        \begin{tikzpicture}[x=1.0cm,y=1.0cm]

            \draw (0.0, 2.0) node(B)  [obs] {$B$};
            \draw (2.0, 2.0) node(A)  [obs] {$A$};
             \draw (5.0, 2.0) node(H)   [obs] {$H$};
            
            \draw (1.0, 1.0) node(S) [selection] {$S$};
            
            \draw (2.0, 0.0) node(T)  [obs] {$T$};
            \draw (3.5, 0.0) node(E)   [obs] {$E$};
            \draw (5.0, 0.0) node(G)   [obs] {$G$};

            \draw (2.75, 1.0) node(L1) [latent] {$L_1$};
            \draw (4.25, 1.0) node(L2) [latent] {$L_2$};

            \draw (0.0, 0.0) node(C)  [obs] {$C$};

            \draw (2.0, -1.0) node(D)   [obs] {$D$};
            \draw (3.5, -1.0) node(F)   [obs] {$F$};

            \draw[-arcsq] (A) -- (S);
            \draw[-arcsq] (B) -- (S);
            \draw[-arcsq] (B) -- (C);

            \draw[-arcsq] (A) -- (T);
            \draw[-arcsq] (L1) -- (T);
            \draw[-arcsq] (L1) -- (E);

            \draw[-arcsq] (T) -- (C);
            \draw[-arcsq] (C) -- (D);
            \draw[-arcsq] (D) -- (F);

            \draw[-arcsq] (E) -- (G);
            \draw[-arcsq] (L2) -- (G);
            \draw[-arcsq] (L2) -- (H);
            \draw[-arcsq] (G) -- (F);

        \end{tikzpicture}%
        }
        \caption{DAG}
        \label{fig:exampleR3-dag}
    \end{subfigure}
    \hspace{1em}
    \begin{subfigure}[c]{0.30\linewidth}
        \centering
        \fixedtikzbox{%
        \begin{tikzpicture}[x=1.0cm,y=1.0cm]

            \draw (0.5, 1.3) node(B)  [obs] {$B$};
            \draw (2.0, 1.3) node(A)  [obs] {$A$};
            \draw (5.0, 1.3) node(H)   [obs] {$H$};

            \draw (2.0, 0.0) node(T)  [obs] {$T$};
            \draw (3.5, 0.0) node(E)   [obs] {$E$};
            \draw (5.0, 0.0) node(G)   [obs] {$G$};
            \draw (0.5, 0.0) node(C)  [obs] {$C$};
            
            \draw (2.0, -1.0) node(D)   [obs] {$D$};
            \draw (3.5, -1.0) node(F)   [obs] {$F$};

            \draw[-] (A) -- (B);

            \draw[-arcsq] (A) -- (T);
            \draw[-arcsq] (B) -- (C);
            \draw[-arcsq] (T) -- (C);

            \draw[-arcsq] (C) -- (D);
            \draw[-arcsq] (D) -- (F);

            \draw[arcsq-arcsq] (T) -- (E);
            \draw[-arcsq] (E) -- (G);

            \draw[arcsq-arcsq] (G) -- (H);
            \draw[-arcsq] (G) -- (F);

        \end{tikzpicture}%
        }
        \caption{Induced MAG}
        \label{fig:exampleR3-mag}
    \end{subfigure}
    \hspace{1em}
    \begin{subfigure}[c]{0.31\linewidth}
        \centering
        \fixedtikzbox{%
        \begin{tikzpicture}[x=1.0cm,y=1.0cm]

            \draw (0.5, 1.3) node(B)  [obs] {$B$};
            \draw (2.0, 1.3) node(A)  [obs] {$A$};
            \draw (5.0, 1.3) node(H)   [obs] {$H$};

            \draw (2.0, 0.0) node(T)  [obs] {$T$};
            \draw (3.5, 0.0) node(E)   [obs] {$E$};
            \draw (5.0, 0.0) node(G)   [obs] {$G$};
            \draw (0.5, 0.0) node(C)  [obs] {$C$};
            
            \draw (2.0, -1.0) node(D)   [obs] {$D$};
            \draw (3.5, -1.0) node(F)   [obs] {$F$};

            \draw[circle-circle] (A) -- (B);

            \draw[circle-arcsq] (A) -- (T);
            \draw[-arcsq] (B) -- (C);
            \draw[-arcsq] (T) -- (C);

            \draw[-arcsq] (C) -- (D);
            \draw[-arcsq] (D) -- (F);

            \draw[circle-arcsq] (E) -- (T);
            \draw[circle-arcsq] (E) -- (G);

            \draw[circle-arcsq] (H) -- (G);
            \draw[-arcsq] (G) -- (F);

        \end{tikzpicture}%
        }
        \caption{Ground-truth PAG}
        \label{fig:exampleR3-pag}
    \end{subfigure}

    }}

    \caption{Underlying causal graphs for \cref{example:exampleR3}: (a) the DAG, where $L_1$ and $L_2$ are latent variables and $S$ is an unobserved selection variable; (b) the induced MAG over the observed variables; and (c) the corresponding ground-truth PAG.}
    \label{fig:exampleR3-graphs}
\end{figure}

\begin{figure}[H]
    \centering
    \captionsetup[subfigure]{font=small, skip=2pt}

    {\renewcommand\fbox[1]{\fcolorbox{gray!45}{gray!05}{#1}}%
    \setlength{\seqfigbodyheight}{2.0cm}

    \fbox{%
    \begin{minipage}[c]{0.92\linewidth}
        \centering

        \begin{subfigure}[c]{0.45\linewidth}
            \centering
            \fixedseqtikzbox{%
            \begin{tikzpicture}[x=1.0cm,y=1.0cm]
                \draw[fill=red!80, fill opacity=0.35, draw=none]
                    (2.0, 0.0) ellipse [x radius=0.5cm, y radius=0.4cm];

                \draw (0.5, 1.3) node(B)  [obs] {$B$};
                \draw (2.0, 1.3) node(A)  [obs] {$A$};

                \draw (2.0, 0.0) node(T)  [obs] {$T$};
                \draw (3.5, 0.0) node(E)   [obs] {$E$};
                \draw (0.5, 0.0) node(C)  [obs] {$C$};

                \draw[circle-circle,red] (A) -- (B);
                \draw[circle-arcsq] (A) -- (T);
                \draw[circle-arcsq] (E) -- (T);
                \draw[circle-arcsq] (T) -- (C);
                \draw[circle-arcsq] (B) -- (C);

            \end{tikzpicture}%
            }
            \caption{$\g[L]_{T}$}
            \label{fig:exampleR3-inferred-local-pag-T}
        \end{subfigure}
        \hfill
        \begin{subfigure}[c]{0.45\linewidth}
            \centering
            \fixedseqtikzbox{%
            \begin{tikzpicture}[x=1.0cm,y=1.0cm]
                \draw (0.5, 1.3) node(B)  [obs] {$B$};
                \draw (2.0, 1.3) node(A)  [obs] {$A$};

                \draw (2.0, 0.0) node(T)  [target] {$T$};
                \draw (3.5, 0.0) node(E)   [obs] {$E$};
                \draw (0.5, 0.0) node(C)  [obs] {$C$};

                \draw[circle-arcsq] (A) -- (T);
                \draw[circle-arcsq] (E) -- (T);
                \draw[-arcsq] (T) -- (C);
                \draw[circle-arcsq] (B) -- (C);
            \end{tikzpicture}%
            }
            \caption{$\hat{\PAG}$ after learning $\g[L]_{T}$}
            \label{fig:exampleR3-hat-pag-after-T}
        \end{subfigure}

    \end{minipage}}

    \vspace{0mm}

    \begin{tikzpicture}
        \draw[-stealth, line width=1.5pt] (0,0.35) -- (0,-0.35);
        \node[font=\small, fill=white, inner sep=4pt, align=center,
              draw=black!25, rounded corners=4pt]
              at (-2.4,0) {$\mathrm{Waitlist}=\{A,E\}$};
        \node[font=\small, fill=white, inner sep=4pt, align=center,
              draw=black!25, rounded corners=4pt]
              at (2.4,0) {$\mathrm{Donelist}=\{T\}$};
    \end{tikzpicture}

    \vspace{0mm}

    \fbox{%
    \begin{minipage}[c]{0.92\linewidth}
        \centering

        \begin{subfigure}[c]{0.45\linewidth}
            \centering
            \fixedseqtikzbox{%
            \begin{tikzpicture}[x=1.0cm,y=1.0cm]
                \draw[fill=red!80, fill opacity=0.35, draw=none]
                    (2.0, 1.3) ellipse [x radius=0.5cm, y radius=0.4cm];

                \draw (0.5, 1.3) node(B)  [obs] {$B$};
                \draw (2.0, 1.3) node(A)  [obs] {$A$};

                \draw (2.0, 0.0) node(T)  [obs] {$T$};
                \draw (3.5, 0.0) node(E)   [obs] {$E$};

     
                \draw[circle-circle] (A) -- (B);
                \draw[circle-arcsq] (A) -- (T);
                \draw[circle-arcsq] (E) -- (T);
            \end{tikzpicture}%
            }
            \caption{$\g[L]_{A}$}
            \label{fig:exampleR3-inferred-local-pag-A}
        \end{subfigure}
        \hfill
        \begin{subfigure}[c]{0.45\linewidth}
            \centering
            \fixedseqtikzbox{%
            \begin{tikzpicture}[x=1.0cm,y=1.0cm]
                \draw (0.5, 1.3) node(B)  [obs] {$B$};
                \draw (2.0, 1.3) node(A)  [obs] {$A$};

                \draw (2.0, 0.0) node(T)  [target] {$T$};
                \draw (3.5, 0.0) node(E)   [obs] {$E$};
                \draw (0.5, 0.0) node(C)  [obs] {$C$};

                \draw[circle-circle] (A) -- (B);
                \draw[circle-arcsq] (A) -- (T);
                \draw[circle-arcsq] (E) -- (T);
                \draw[-arcsq] (T) -- (C);
                \draw[-arcsq] (B) -- (C);
   
            \end{tikzpicture}%
            }
            \caption{$\hat{\PAG}$ after learning $\g[L]_{A}$}
            \label{fig:exampleR3-hat-pag-after-A}
        \end{subfigure}

    \end{minipage}}

    \vspace{0mm}

    \begin{tikzpicture}
        \draw[-stealth, line width=1.5pt] (0,0.35) -- (0,-0.35);
        \node[font=\small, fill=white, inner sep=4pt, align=center,
              draw=black!25, rounded corners=4pt]
              at (-2.4,0) {$\mathrm{Waitlist}=\{E,B\}$};
        \node[font=\small, fill=white, inner sep=4pt, align=center,
              draw=black!25, rounded corners=4pt]
              at (2.4,0) {$\mathrm{Donelist}=\{T,A\}$};
    \end{tikzpicture}

    \vspace{0mm}

    \fbox{%
    \begin{minipage}[c]{0.92\linewidth}
        \centering

        \begin{subfigure}[c]{0.45\linewidth}
            \centering
            \fixedseqtikzbox{%
            \begin{tikzpicture}[x=1.0cm,y=1.0cm]
                \draw[fill=red!80, fill opacity=0.35, draw=none]
                    (3.5, 0.0) ellipse [x radius=0.5cm, y radius=0.4cm];

         
                \draw (2.0, 1.3) node(A)  [obs] {$A$};
                \draw (5.0, 1.3) node(H)   [obs] {$H$};

                \draw (2.0, 0.0) node(T)  [obs] {$T$};
                \draw (3.5, 0.0) node(E)   [obs] {$E$};
                \draw (5.0, 0.0) node(G)   [obs] {$G$};

                \draw[circle-arcsq] (E) -- (T);
                \draw[circle-arcsq] (E) -- (G);
                \draw[circle-arcsq] (H) -- (G);
                \draw[circle-arcsq] (A) -- (T);
                
            \end{tikzpicture}%
            }
            \caption{$\g[L]_{E}$}
            \label{fig:exampleR3-inferred-local-pag-E}
        \end{subfigure}
        \hfill
        \begin{subfigure}[c]{0.45\linewidth}
            \centering
            \fixedseqtikzbox{%
            \begin{tikzpicture}[x=1.0cm,y=1.0cm]
                \draw (0.5, 1.3) node(B)  [obs] {$B$};
                \draw (2.0, 1.3) node(A)  [obs] {$A$};
                \draw (5.0, 1.3) node(H)   [obs] {$H$};

                \draw (2.0, 0.0) node(T)  [target] {$T$};
                \draw (3.5, 0.0) node(E)   [obs] {$E$};
                \draw (0.5, 0.0) node(C)  [obs] {$C$};
                \draw (5.0, 0.0) node(G)   [obs] {$G$};

                \draw[circle-circle] (A) -- (B);
                \draw[circle-arcsq] (A) -- (T);
                \draw[circle-arcsq] (E) -- (T);
                \draw[-arcsq] (T) -- (C);
                \draw[-arcsq] (B) -- (C);
                 \draw[circle-arcsq] (E) -- (G);
                \draw[circle-arcsq] (H) -- (G);
            \end{tikzpicture}%
            }
            \caption{$\hat{\PAG}$ after learning $\g[L]_{E}$}
            \label{fig:exampleR3-hat-pag-after-E}
        \end{subfigure}

    \end{minipage}}

    \vspace{0mm}

    \begin{tikzpicture}
        \draw[-stealth, line width=1.5pt] (0,0.35) -- (0,-0.35);
        \node[font=\small, fill=white, inner sep=4pt, align=center,
              draw=black!25, rounded corners=4pt]
              at (-2.4,0) {$\mathrm{Waitlist}=\{B\}$};
        \node[font=\small, fill=white, inner sep=4pt, align=center,
              draw=black!25, rounded corners=4pt]
              at (2.4,0) {$\mathrm{Donelist}=\{T,A,E\}$};
    \end{tikzpicture}

    \vspace{0mm}

    \fbox{%
    \begin{minipage}[c]{0.92\linewidth}
        \centering

        \begin{subfigure}[c]{0.45\linewidth}
            \centering
            \fixedseqtikzbox{%
            \begin{tikzpicture}[x=1.0cm,y=1.0cm]
                \draw[fill=red!80, fill opacity=0.35, draw=none]
                    (0.5, 1.3) ellipse [x radius=0.5cm, y radius=0.4cm];

                \draw (0.5, 1.3) node(B)  [obs] {$B$};
                \draw (2.0, 1.3) node(A)  [obs] {$A$};

                \draw (2.0, 0.0) node(T)  [obs] {$T$};
                \draw (0.5, 0.0) node(C)  [obs] {$C$};

                \draw[circle-circle] (A) -- (B);
                \draw[circle-circle,red] (A) -- (T);
                \draw[circle-arcsq] (T) -- (C);
                \draw[circle-arcsq] (B) -- (C);
                
            \end{tikzpicture}%
            }
            \caption{$\g[L]_{B}$}
            \label{fig:exampleR3-inferred-local-pag-B}
        \end{subfigure}
        \hfill
        \begin{subfigure}[c]{0.45\linewidth}
            \centering
            \fixedseqtikzbox{%
            \begin{tikzpicture}[x=1.0cm,y=1.0cm]
                \draw (0.5, 1.3) node(B)  [obs] {$B$};
                \draw (2.0, 1.3) node(A)  [obs] {$A$};
                \draw (5.0, 1.3) node(H)   [obs] {$H$};

                \draw (2.0, 0.0) node(T)  [target] {$T$};
                \draw (3.5, 0.0) node(E)   [obs] {$E$};
                \draw (0.5, 0.0) node(C)  [obs] {$C$};
                \draw (5.0, 0.0) node(G)   [obs] {$G$};

                \draw[circle-circle] (A) -- (B);
                \draw[circle-arcsq] (A) -- (T);
                \draw[circle-arcsq] (E) -- (T);
                \draw[-arcsqblue] (T) -- (C);
                \draw[-arcsqblue] (B) -- (C);
                 \draw[circle-arcsqblue] (E) -- (G);
                \draw[circle-arcsq] (H) -- (G);
            \end{tikzpicture}%
            }
            \caption{$\hat{\PAG}$ after learning $\g[L]_{B}$}
            \label{fig:exampleR3-hat-pag-after-B}
        \end{subfigure}

    \end{minipage}}

    \vspace{0mm}

    \begin{tikzpicture}
        \draw[-stealth, line width=1.5pt] (0,0.35) -- (0,-0.35);
        \node[font=\small, fill=white, inner sep=4pt, align=center,
              draw=black!25, rounded corners=4pt]
              at (-2.4,0) {$\mathrm{Waitlist}=\{\}$};
        \node[font=\small, fill=white, inner sep=4pt, align=center,
              draw=black!25, rounded corners=4pt]
              at (2.4,0) {$\mathrm{Donelist}=\{T,A,E,B\}$};
        \node[font=\small, fill=OrangeRed!5, inner sep=4pt, align=center,
              draw=OrangeRed, rounded corners=4pt]
              at (0,-0.70) {Terminate: $\mathrm{Waitlist}=\emptyset$};
    \end{tikzpicture}
    }

    \caption{Sequential execution of \method in \cref{example:exampleR3}. Each row shows the learned local structure (left) and the updated $\hat{\PAG}$ (right). Red shading marks the currently processed variable, red edges are discarded because their global consistency is not guaranteed, and the blue arrows in the final update block all remaining potentially anterior paths to $T$. Thus, \textsc{UpdateWaitlist} returns an empty waitlist and the procedure terminates by $\mathcal{R}2$.}
    \label{fig:exampleR3-inferred-local-pag-process}
\end{figure}

\begin{example}
\label{example:exampleR3}
Consider the causal graphs in \cref{fig:exampleR3-graphs}, and assume oracle conditional independence tests. 
Let $T$ be the target variable. 
The algorithm starts with 
$
\mathrm{Waitlist}=\{T\},
\mathrm{Donelist}=\emptyset,
$
and $\hat{\PAG}=\emptyset$. 
Its execution is illustrated in \cref{fig:exampleR3-inferred-local-pag-process} and proceeds as follows:
\begin{itemize}
    \item \textbf{Processing $T$.}
    The algorithm first identifies
    $
    \MBplus(T)=\{A,B,C,E,T\}
    $
    and learns the local graph $\g[L]_{T}$ shown in \cref{fig:exampleR3-inferred-local-pag-T}. 
    It then incorporates the locally learned information that is guaranteed to be globally consistent and discards the red edge. 
    The resulting graph is shown in \cref{fig:exampleR3-hat-pag-after-T}. 
    Since $A$ and $E$ have potentially anterior paths to $T$, the waitlist update gives
    $
    \mathrm{Donelist}=\{T\},
    \mathrm{Waitlist}=\{A,E\}.
    $

    \item \textbf{Processing $A$.}
    Next, the algorithm identifies
    $
    \MBplus(A)=\{A,B,E,T\}
    $
    and learns $\g[L]_{A}$, shown in \cref{fig:exampleR3-inferred-local-pag-A}. 
    After preserving the globally consistent information and applying the orientation rules, the algorithm obtains the graph in \cref{fig:exampleR3-hat-pag-after-A}. 
    The waitlist update yields
    $
    \mathrm{Donelist}=\{T,A\},
    \mathrm{Waitlist}=\{E,B\}.
    $

    \item \textbf{Processing $E$.}
    The algorithm then identifies
    $
    \MBplus(E)=\{A,E,G,H,T\}.
    $
    The learned local graph $\g[L]_{E}$ and the updated graph are shown in \cref{fig:exampleR3-inferred-local-pag-E,fig:exampleR3-hat-pag-after-E}, respectively. 
    After preserving the globally consistent information and applying the orientation rules, only $B$ remains on the waitlist:
    $
    \mathrm{Donelist}=\{T,A,E\},
    \mathrm{Waitlist}=\{B\}.
    $

    \item \textbf{Processing $B$.}
    Finally, the algorithm identifies
    $
    \MBplus(B)=\{A,B,C,T\}
    $
    and learns $\g[L]_{B}$, shown in \cref{fig:exampleR3-inferred-local-pag-B}. 
    After the update, $\hat{\PAG}$ becomes the graph shown in \cref{fig:exampleR3-hat-pag-after-B}. 
    At this point, every remaining unprocessed variable has no potentially anterior path to $T$. 
    Therefore, \textsc{UpdateWaitlist} returns
    $
    \mathrm{Donelist}=\{T,A,E,B\},
    \mathrm{Waitlist}=\emptyset.
    $
    The algorithm consequently terminates by $\mathcal{R}2$. 
    Under a coarser waitlist update, the same structural redundancy would instead be detected by $\mathcal{R}3$.
\end{itemize}
\end{example}

\FloatBarrier
\section{Supplementary Experiments} 
\label{app:supplementary-experiments}

\subsection{Additional Details on Baseline Methods}
\label{app:supplementary-baseline-methods}

For all compared methods, we use the default hyperparameter settings provided by their official or publicly available implementations, unless otherwise stated. 
For conditional-independence-test-based methods, the same significance level $\alpha=0.01$ is used throughout all experiments to ensure comparability across methods. 
All experiments are conducted on a machine equipped with an Intel i9-14900K CPU and 64 GB RAM.

\cref{tab:baseline-methods} summarizes the baseline methods used in our experiments, including their learning scope, supported assumptions, and code sources.

\begin{table}[hpt!]
    \centering
    \caption{Overview of compared causal structure learning methods.}
    \label{tab:baseline-methods}
    \small
    \renewcommand{\arraystretch}{1.15}
    \setlength{\tabcolsep}{4.2mm}
    \resizebox{\textwidth}{!}{%
    \begin{tabular}{@{}lcccp{0.50\textwidth}@{}}
        \toprule
        \textbf{Method} & \textbf{Local} & \textbf{Latent Variables} & \textbf{Selection Bias} & \textbf{Code Source} \\
        \midrule

        PC~\citep{spirtes1991PC} 
        & \xmark & \xmark & \xmark 
        & \url{https://github.com/kuiy/pyCausalFS} \\

        EEMBI-PC~\citep{dong2025intersecting} 
        & \xmark & \xmark & \xmark 
        & \url{https://github.com/ronedong/EEMBI} \\

        FCI~\citep{spirtes1995fci} 
        & \xmark & \cmark & \cmark 
        & \url{https://github.com/cmu-phil/causal-learn} \\

        RFCI~\citep{colombo2012rfci} 
        & \xmark & \cmark & \cmark 
        & \url{https://pcalg.r-forge.r-project.org/} \\

        FCI$^+$~\citep{claassen2013learning} 
        & \xmark & \cmark & \cmark 
        & \url{https://pcalg.r-forge.r-project.org/} \\

        L-MARVEL~\citep{akbari2021recursive} 
        & \xmark & \cmark & \cmark 
        & \url{https://github.com/ban-epfl/rcd} \\

        ICD~\citep{rohekar2021iterative} 
        & \xmark & \cmark & \cmark 
        & \url{https://github.com/IntelLabs/causality-lab} \\

        \midrule

        PCD-by-PCD~\citep{yin2008partial} 
        & \cmark & \xmark & \xmark 
        & \url{https://github.com/kuiy/pyCausalFS} \\

        MB-by-MB~\citep{wang2014discovering} 
        & \cmark & \xmark & \xmark 
        & \url{https://github.com/kuiy/pyCausalFS} \\

        CMB~\citep{gao2015local} 
        & \cmark & \xmark & \xmark 
        & \url{https://github.com/kuiy/pyCausalFS} \\

        PSL~\citep{ling2022psl} 
        & \cmark & \xmark & \xmark 
        & \url{https://github.com/z-dragonl/Causal-Learner} \\

        GraN-LCS~\citep{liang2023gradient} 
        & \cmark & \xmark & \xmark 
        & \url{https://www.sdu-idea.cn/codes.php?name=GraN-LCS} \\

        LatentLCD~\citep{ling2025local} 
        & \cmark & \cmark & \xmark 
        & \url{https://github.com/z-dragonl/Causal-Learner} \\

        \hdashline

        \rowcolor{gray!10}
        \textbf{\method (Ours)} 
        & \cmark & \cmark & \cmark 
        & \url{https://github.com/zhengli0060/LoCaLS} \\

        \bottomrule
    \end{tabular}
    }
    \vspace{1mm}
    \begin{minipage}{0.95\textwidth}
    \footnotesize
    \textit{Note.} 
    ``Local'' indicates whether the method is designed to learn target-specific local causal structures rather than the entire causal graph. 
    ``Latent Variables'' and ``Selection Bias'' indicate whether the method can handle latent confounders and selection bias, respectively.
    \end{minipage}
\end{table}

\FloatBarrier
\subsection{Additional Details on Evaluation Metrics}
\label{app:supplementary-evaluation-metrics}

Let $\PAG$ and $\hat{\PAG}$ denote the true PAG and the learned PAG, respectively. 
For a target variable $T$, we evaluate the target row and target column of the corresponding PAG matrices, excluding the diagonal entry $(T,T)$. 
Formally, define
\[
\Omega_T
=
\{(T,V):V\in\vars[O]\setminus\{T\}\}
\cup
\{(V,T):V\in\vars[O]\setminus\{T\}\}.
\]
The set $\Omega_T$ contains all matrix entries corresponding to the endpoint marks of edges incident to $T$. 
Each nonzero entry encodes a PAG endpoint mark, while a zero entry indicates the absence of an edge.

\paragraph{Mark-Precision, Mark-Recall, and Mark-F1}
A predicted mark is counted as a true positive only when it is nonzero and exactly matches the corresponding true mark:
\[
\mathrm{TP}_{\mathrm{mark}}
=
\sum_{(i,j)\in\Omega_T}
\mathbb{I}
\left[
\hat{\PAG}(i,j)\neq 0
\ \land\
\hat{\PAG}(i,j)=\PAG(i,j)
\right].
\]
Then Mark-Precision, Mark-Recall, and Mark-F1 are defined as
\[
\mathrm{Mark\mbox{-}Precision}
=
\frac{
\mathrm{TP}_{\mathrm{mark}}
}{
\sum_{(i,j)\in\Omega_T}
\mathbb{I}\!\left[\hat{\PAG}(i,j)\neq 0\right]
},
\]
\[
\mathrm{Mark\mbox{-}Recall}
=
\frac{
\mathrm{TP}_{\mathrm{mark}}
}{
\sum_{(i,j)\in\Omega_T}
\mathbb{I}\!\left[\PAG(i,j)\neq 0\right]
},
\]
and
\[
\mathrm{Mark\mbox{-}F1}
=
\frac{
2\cdot
\mathrm{Mark\mbox{-}Precision}
\cdot
\mathrm{Mark\mbox{-}Recall}
}{
\mathrm{Mark\mbox{-}Precision}
+
\mathrm{Mark\mbox{-}Recall}
}.
\]
If the denominator of a metric is zero, the corresponding score is set to zero.

\paragraph{Local-SHD}
Structural Hamming Distance (SHD) is a widely used metric in causal structure learning for measuring the discrepancy between a learned graph and the ground-truth graph. In this work, we also report a target-specific version of Structural Hamming Distance, termed Local-SHD, defined as
\[
\mathrm{Local\mbox{-}SHD}
=
\sum_{(i,j)\in\Omega_T}
\mathbb{I}
\left[
\hat{\PAG}(i,j)
\neq
\PAG(i,j)
\right].
\]
Local-SHD counts the number of mismatched PAG entries associated with the target variable, including missing adjacencies, false adjacencies, and incorrect endpoint marks.

\subsection{Supplementary Results on Random Structures with Varying Dimensions}
\label{app:supplementary-experiments-varying-dimensions}

This section provides supplementary results for the experiment in \cref{sec:performance-varying-dimensions}, including summary statistics of the underlying induced MAGs and detailed efficiency results.

Because marginalizing latent variables and conditioning on selection variables may change the sparsity of the observed-variable structure, \cref{tab:vary-dimension-degree} reports the average degree of the underlying induced MAGs used in this experiment. The induced MAGs remain sparse across all settings, with average degrees ranging from approximately $2.05$ to $2.16$. Therefore, this experiment primarily evaluates scalability with respect to the number of variables rather than changes in graph density.

\cref{tab:vary-dimension-runtime,tab:vary-dimension-ci-test} provide the detailed runtime and CI-test results corresponding to \cref{fig:ER-varying-vertices-results}.

\begin{table}[H]
\centering
\caption{Summary statistics for random structures with varying dimensionality.} 
\label{tab:vary-dimension-degree}
\begin{tabular}{lC{0.5cm}C{0.5cm}C{0.5cm}C{0.5cm}C{5.0cm}}
\toprule
\textbf{Underlying Structure} & $|\vars[V]|$ & $|\vars[O]|$ & $|\vars[L]|$ & $|\vars[S]|$ & 
\textbf{Avg. Degree in Induced MAGs}  \\
\midrule
ER$(20,2)$ graphs  & 20  & 18  & 1  & 1  & 2.05$\pm$0.25  \\
ER$(40,2)$ graphs  & 40  & 36  & 2  & 2  & 2.11$\pm$0.21  \\
ER$(60,2)$ graphs  & 60  & 54  & 3  & 3  & 2.13$\pm$0.16  \\
ER$(80,2)$ graphs  & 80  & 72  & 4  & 4  & 2.11$\pm$0.17  \\
ER$(120,2)$ graphs & 120 & 108 & 6  & 6  & 2.16$\pm$0.19  \\
ER$(160,2)$ graphs & 160 & 144 & 8  & 8  & 2.16$\pm$0.17  \\
ER$(200,2)$ graphs & 200 & 180 & 10 & 10 & 2.13$\pm$0.13  \\
\bottomrule
\end{tabular}
\end{table}

\begin{table}[H]
\centering
\caption{Runtime on random structures with varying dimensionality.} 
\label{tab:vary-dimension-runtime}
\small
\setlength{\tabcolsep}{2.5pt}
\renewcommand{\arraystretch}{1.08}

\begin{adjustbox}{max width=\textwidth}
\begin{tabular}{lccccccc}
\toprule
\multirow{2}{*}{\textbf{Algorithm}} 
& \multicolumn{7}{c}{\textbf{Number of variables}} \\
\cmidrule(lr){2-8}
& \textbf{20} & \textbf{40} & \textbf{60} & \textbf{80} & \textbf{120} & \textbf{160} & \textbf{200} \\
\midrule
PC & 0.09 $\pm$ 0.02 & 0.37 $\pm$ 0.08 & 0.85 $\pm$ 0.17 & 1.48 $\pm$ 0.27 & 3.83 $\pm$ 0.61 & 17.57 $\pm$ 3.69 & 36.02 $\pm$ 7.69 \\
EEMBI-PC & 1120.61 $\pm$ 237.41 & 6008.58 $\pm$ 1716.68 & 13733.03 $\pm$ 595.24 & -- & -- & -- & --\\
FCI & 0.07 $\pm$ 0.02 & 0.29 $\pm$ 0.09 & 0.56 $\pm$ 0.13 & 0.92 $\pm$ 0.26 & 1.79 $\pm$ 0.38 & 2.86 $\pm$ 0.41 & 4.17 $\pm$ 0.45 \\
RFCI & 0.90 $\pm$ 0.40 & 0.93 $\pm$ 0.40 & 0.98 $\pm$ 0.41 & 1.04 $\pm$ 0.42 & 1.21 $\pm$ 0.41 & 1.39 $\pm$ 0.43 & 1.60 $\pm$ 0.43 \\
FCI$^{+}$ & 1.03 $\pm$ 0.42 & 1.17 $\pm$ 0.41 & 1.46 $\pm$ 0.47 & 1.69 $\pm$ 0.44 & 2.41 $\pm$ 0.56 & 3.40 $\pm$ 0.66 & 4.82 $\pm$ 0.60 \\
L-MARVEL & 0.03 $\pm$ 0.02 & 0.08 $\pm$ 0.04 & 0.15 $\pm$ 0.12 & 0.16 $\pm$ 0.07 & 0.29 $\pm$ 0.08 & 0.55 $\pm$ 0.19 & 0.72 $\pm$ 0.15 \\
ICD & 0.10 $\pm$ 0.07 & 4.12 $\pm$ 7.27 & 67.13 $\pm$ 131.27 & 108.39 $\pm$ 184.79 & 235.84 $\pm$ 296.21 & 314.58 $\pm$ 355.24 & 454.36 $\pm$ 393.13 \\
\midrule
PCD-by-PCD & 0.14 $\pm$ 0.06 & 0.68 $\pm$ 0.27 & 1.56 $\pm$ 0.73 & 3.11 $\pm$ 0.89 & 8.73 $\pm$ 2.54 & 30.43 $\pm$ 9.73 & 65.60 $\pm$ 15.14 \\
MB-by-MB & 0.23 $\pm$ 0.11 & 0.90 $\pm$ 0.61 & 1.51 $\pm$ 1.30 & 2.26 $\pm$ 1.33 & 4.16 $\pm$ 2.86 & 11.32 $\pm$ 7.44 & 16.58 $\pm$ 9.87 \\
CMB & 0.22 $\pm$ 0.21 & 0.70 $\pm$ 0.64 & 1.09 $\pm$ 1.06 & 1.34 $\pm$ 1.57 & 2.92 $\pm$ 3.07 & 15.08 $\pm$ 12.83 & 27.99 $\pm$ 32.40 \\
PSL & 0.10 $\pm$ 0.06 & 0.38 $\pm$ 0.19 & 0.67 $\pm$ 0.45 & 0.96 $\pm$ 0.58 & 1.71 $\pm$ 1.28 & 4.74 $\pm$ 2.96 & 9.37 $\pm$ 5.38 \\
GraN-LCS & 98.04 $\pm$ 40.84 & 103.93 $\pm$ 53.11 & 136.32 $\pm$ 83.38 & 126.59 $\pm$ 69.37 & 136.55 $\pm$ 89.04 & 139.10 $\pm$ 77.14 & 161.75 $\pm$ 128.57 \\
LatentLCD & 0.10 $\pm$ 0.04 & 0.33 $\pm$ 0.17 & 0.55 $\pm$ 0.36 & 1.08 $\pm$ 0.67 & 2.07 $\pm$ 1.45 & 3.38 $\pm$ 2.73 & 5.51 $\pm$ 3.89 \\
\hdashline
\rowcolor{gray!10}
LoCaLS (Ours) & \textbf{0.02 $\pm$ 0.01} & \textbf{0.04 $\pm$ 0.02} & \textbf{0.04 $\pm$ 0.03} & \textbf{0.05 $\pm$ 0.03} & \textbf{0.08 $\pm$ 0.05} & \textbf{0.10 $\pm$ 0.05} & \textbf{0.11 $\pm$ 0.04} \\
\bottomrule
\end{tabular}
\end{adjustbox}
\vspace{1mm}
\begin{tablenotes}
\footnotesize
\item Note: ``--'' indicates that the method was excluded due to excessive runtime.
The best result in each group is highlighted in bold.
\end{tablenotes}
\end{table}

\begin{table}[H]
\centering
\caption{Number of CI tests on random structures with varying dimensionality.} 
\label{tab:vary-dimension-ci-test}
\small
\setlength{\tabcolsep}{2.5pt}
\renewcommand{\arraystretch}{1.08}

\begin{adjustbox}{max width=\textwidth}
\begin{tabular}{lccccccc}
\toprule
\multirow{2}{*}{\textbf{Algorithm}} 
& \multicolumn{7}{c}{\textbf{Number of variables}} \\
\cmidrule(lr){2-8}
& \textbf{20} & \textbf{40} & \textbf{60} & \textbf{80} & \textbf{120} & \textbf{160} & \textbf{200} \\
\midrule
PC & 269.9 $\pm$ 44.7 & 1067.0 $\pm$ 155.4 & 2325.4 $\pm$ 311.5 & 3976.2 $\pm$ 463.9 & 8101.7 $\pm$ 618.9 & 13569.1 $\pm$ 696.6 & 20702.8 $\pm$ 1189.3 \\
FCI & 520.8 $\pm$ 137.7 & 1986.5 $\pm$ 511.3 & 3830.0 $\pm$ 848.2 & 6225.8 $\pm$ 1670.8 & 11495.5 $\pm$ 2255.4 & 17679.7 $\pm$ 2022.3 & 25342.9 $\pm$ 2301.2 \\
RFCI & 463.1 $\pm$ 99.9 & 1705.8 $\pm$ 277.8 & 3353.2 $\pm$ 570.5 & 5364.0 $\pm$ 670.9 & 10243.6 $\pm$ 1331.7 & 16113.0 $\pm$ 1062.4 & 23557.7 $\pm$ 1583.9 \\
FCI$^{+}$ & 578.1 $\pm$ 157.6 & 2262.3 $\pm$ 444.6 & 4550.9 $\pm$ 1071.6 & 7245.1 $\pm$ 1373.1 & 13256.9 $\pm$ 2475.7 & 20215.7 $\pm$ 2254.6 & 29245.8 $\pm$ 3874.8 \\
L-MARVEL & 244.7 $\pm$ 76.1 & 938.0 $\pm$ 313.2 & 1966.7 $\pm$ 794.7 & 3065.1 $\pm$ 291.2 & 6464.2 $\pm$ 330.4 & 11270.2 $\pm$ 571.7 & 17116.7 $\pm$ 286.6 \\
ICD & 346.9 $\pm$ 56.4 & 1370.1 $\pm$ 169.1 & 2769.0 $\pm$ 403.9 & 4652.0 $\pm$ 564.3 & 8983.4 $\pm$ 680.1 & 15118.3 $\pm$ 1233.4 & 22359.0 $\pm$ 2143.0 \\
\midrule
PCD-by-PCD & 391.7 $\pm$ 153.3 & 1620.0 $\pm$ 602.7 & 3592.9 $\pm$ 1676.2 & 6665.7 $\pm$ 1889.8 & 13479.3 $\pm$ 3529.9 & 21081.2 $\pm$ 5269.9 & 34744.4 $\pm$ 4194.0 \\
MB-by-MB & 667.7 $\pm$ 340.4 & 2344.5 $\pm$ 1603.0 & 3887.3 $\pm$ 4633.0 & 4848.7 $\pm$ 3131.5 & 6748.4 $\pm$ 4928.5 & 7744.9 $\pm$ 6971.3 & 9511.9 $\pm$ 6029.1 \\
CMB & 682.4 $\pm$ 776.5 & 1960.2 $\pm$ 2025.8 & 3214.6 $\pm$ 4013.1 & 3760.0 $\pm$ 5119.9 & 6321.6 $\pm$ 7113.8 & 9874.0 $\pm$ 9466.8 & 19788.4 $\pm$ 30942.8 \\
PSL & 307.9 $\pm$ 173.6 & 1017.3 $\pm$ 502.4 & 1695.0 $\pm$ 1226.6 & 2311.6 $\pm$ 1678.2 & 3684.9 $\pm$ 3465.7 & 3514.7 $\pm$ 2459.0 & 5651.3 $\pm$ 4162.4 \\
LatentLCD & 983.2 $\pm$ 442.7 & 3287.8 $\pm$ 1827.2 & 5398.3 $\pm$ 3844.3 & 9460.4 $\pm$ 5985.3 & 15194.2 $\pm$ 11109.6 & 20657.9 $\pm$ 17861.9 & 36316.1 $\pm$ 26655.3 \\
\hdashline
\rowcolor{gray!10}
LoCaLS (Ours) & \textbf{159.0 $\pm$ 87.1} & \textbf{357.1 $\pm$ 190.8} & \textbf{433.4 $\pm$ 243.6} & \textbf{558.8 $\pm$ 268.9} & \textbf{706.0 $\pm$ 431.7} & \textbf{757.4 $\pm$ 478.2} & \textbf{978.1 $\pm$ 449.6} \\
\bottomrule
\end{tabular}
\end{adjustbox}
\vspace{1mm}
\begin{tablenotes}
\footnotesize
\item Note:
The best result in each group is highlighted in bold.
\end{tablenotes}
\end{table}

\FloatBarrier
\subsection{Supplementary Results on Random Structures with Varying Density} 
\label{app:supplementary-experiments-varying-density}

This section provides supplementary numerical results for the experiment in \cref{sec:performance-varying-density}. Specifically, \cref{tab:vary-density-runtime,tab:vary-density-ci-test} report the detailed runtime and CI-test results corresponding to \cref{fig:ER-varying-density-results}. EEMBI-PC is omitted due to excessive runtime.

\begin{table}[H]
\centering
\caption{Runtime on random structures with varying graph density.} \label{tab:vary-density-runtime}
\small
\setlength{\tabcolsep}{2.5pt}
\renewcommand{\arraystretch}{1.08}

\begin{adjustbox}{max width=\textwidth}
\begin{tabular}{lccccccc}
\toprule
\multirow{2}{*}{\textbf{Algorithm}} 
& \multicolumn{7}{c}{\textbf{Density}} \\
\cmidrule(lr){2-8}
& \textbf{0.05} & \textbf{0.1} & \textbf{0.2} & \textbf{0.4} & \textbf{0.6} & \textbf{0.8} & \textbf{1.0} \\
\midrule
PC & 1.96 $\pm$ 0.34 & 2.35 $\pm$ 0.45 & 3.39 $\pm$ 0.75 & 4.90 $\pm$ 0.95 & 5.93 $\pm$ 1.20 & 6.48 $\pm$ 1.17 & 6.69 $\pm$ 1.17 \\
FCI & 1.66 $\pm$ 0.36 & 2.16 $\pm$ 0.50 & 2.96 $\pm$ 0.75 & 4.67 $\pm$ 1.49 & 5.33 $\pm$ 0.97 & 6.67 $\pm$ 1.17 & 6.82 $\pm$ 1.44 \\
RFCI & 1.50 $\pm$ 0.62 & 1.79 $\pm$ 0.64 & 1.69 $\pm$ 0.60 & 2.01 $\pm$ 0.65 & 2.05 $\pm$ 0.63 & 2.74 $\pm$ 0.80 & 2.61 $\pm$ 0.72 \\
FCI$^{+}$ & 2.73 $\pm$ 0.69 & 3.35 $\pm$ 0.89 & 3.32 $\pm$ 0.87 & 4.17 $\pm$ 1.03 & 4.46 $\pm$ 1.09 & 6.05 $\pm$ 1.26 & 5.63 $\pm$ 1.16 \\
L-MARVEL & 0.28 $\pm$ 0.09 & 0.53 $\pm$ 0.84 & 1.53 $\pm$ 1.69 & 17.42 $\pm$ 18.09 & 32.09 $\pm$ 28.93 & 88.85 $\pm$ 61.77 & 150.31 $\pm$ 113.99 \\
ICD & 37.10 $\pm$ 114.49 & 1704.42 $\pm$ 3036.36 & 3129.47 $\pm$ 1864.21 & 1555.47 $\pm$ 1213.07 & 2508.14 $\pm$ 1044.65 & 2693.20 $\pm$ 1204.20 & 3143.88 $\pm$ 1775.54 \\

\midrule
PCD-by-PCD & 2.30 $\pm$ 1.12 & 5.54 $\pm$ 2.95 & 11.05 $\pm$ 3.41 & 18.51 $\pm$ 4.06 & 22.76 $\pm$ 5.60 & 27.88 $\pm$ 7.26 & 27.86 $\pm$ 5.45 \\
MB-by-MB & 1.37 $\pm$ 1.41 & 3.00 $\pm$ 3.31 & 7.71 $\pm$ 5.81 & 12.87 $\pm$ 7.67 & 16.60 $\pm$ 10.49 & 20.62 $\pm$ 12.40 & 23.16 $\pm$ 16.24 \\
CMB & 2.58 $\pm$ 1.79 & 4.45 $\pm$ 3.76 & 8.77 $\pm$ 5.50 & 23.77 $\pm$ 19.21 & 32.09 $\pm$ 24.40 & 23.73 $\pm$ 14.02 & 55.92 $\pm$ 36.06 \\
PSL & 1.02 $\pm$ 0.59 & 1.85 $\pm$ 1.50 & 3.68 $\pm$ 1.87 & 5.98 $\pm$ 3.37 & 7.52 $\pm$ 3.23 & 8.44 $\pm$ 4.30 & 9.82 $\pm$ 4.46 \\
GraN-LCS & 155.33 $\pm$ 102.98 & 123.29 $\pm$ 77.04 & 125.30 $\pm$ 56.99 & 164.29 $\pm$ 101.18 & 182.34 $\pm$ 123.93 & 179.46 $\pm$ 126.89 & 204.91 $\pm$ 160.07 \\
LatentLCD & 1.37 $\pm$ 0.65 & 2.75 $\pm$ 1.49 & 4.41 $\pm$ 3.52 & 7.93 $\pm$ 7.14 & 5.73 $\pm$ 5.51 & 8.26 $\pm$ 8.99 & 9.08 $\pm$ 9.86 \\
\hdashline
\rowcolor{gray!10}
LoCaLS (Ours) & \textbf{0.12 $\pm$ 0.05} & \textbf{0.15 $\pm$ 0.07} & \textbf{0.18 $\pm$ 0.07} & \textbf{0.20 $\pm$ 0.08} & \textbf{0.28 $\pm$ 0.19} & \textbf{0.46 $\pm$ 0.29} & \textbf{0.47 $\pm$ 0.40} \\
\bottomrule
\end{tabular}
\end{adjustbox}
\vspace{1mm}
\begin{tablenotes}
\footnotesize
\item Note: The best result in each group is highlighted in bold.
\end{tablenotes}
\end{table}

\begin{table}[H]
\centering
\caption{Number of CI tests on random structures with varying graph density.} 
\label{tab:vary-density-ci-test}
\small
\setlength{\tabcolsep}{2.5pt}
\renewcommand{\arraystretch}{1.08}

\begin{adjustbox}{max width=\textwidth}
\begin{tabular}{lccccccc}
\toprule
\multirow{2}{*}{\textbf{Algorithm}} 
& \multicolumn{7}{c}{\textbf{Density}} \\
\cmidrule(lr){2-8}
& \textbf{0.05} & \textbf{0.1} & \textbf{0.2} & \textbf{0.4} & \textbf{0.6} & \textbf{0.8} & \textbf{1.0} \\
\midrule
PC & 4955.3 $\pm$ 367.0 & 6397.0 $\pm$ 942.2 & 9299.2 $\pm$ 1722.7 & 13147.1 $\pm$ 1992.1 & 15475.9 $\pm$ 2489.4 & 17081.3 $\pm$ 2420.6 & 17638.3 $\pm$ 2539.4 \\
FCI & 6436.2 $\pm$ 931.6 & 10701.7 $\pm$ 2628.1 & 17559.6 $\pm$ 4267.4 & 29473.9 $\pm$ 7846.5 & 35188.3 $\pm$ 6540.5 & 40727.4 $\pm$ 7262.5 & 42031.6 $\pm$ 8354.8 \\
RFCI & 5984.7 $\pm$ 698.7 & 9235.1 $\pm$ 2037.7 & 14705.3 $\pm$ 3187.4 & 23463.6 $\pm$ 4114.1 & 28652.4 $\pm$ 5000.5 & 32858.9 $\pm$ 5628.6 & 34058.3 $\pm$ 6224.9 \\
FCI$^{+}$ & 6971.4 $\pm$ 1171.5 & 12733.1 $\pm$ 4164.3 & 24253.2 $\pm$ 7199.3 & 42573.2 $\pm$ 9145.9 & 54814.1 $\pm$ 12403.1 & 63872.5 $\pm$ 12577.2 & 67575.8 $\pm$ 15107.2 \\
L-MARVEL & 4381.1 $\pm$ 243.4 & 6448.2 $\pm$ 7022.9 & 12607.7 $\pm$ 13010.5 & 107921.2 $\pm$ 125798.5 & 191199.7 $\pm$ 209637.3 & 415518.9 $\pm$ 314416.8 & 678769.8 $\pm$ 630760.4 \\
ICD & 5314.4 $\pm$ 509.3 & 7468.7 $\pm$ 1223.3 & 12180.0 $\pm$ 2467.3 & 18060.4 $\pm$ 2891.2 & 24911.2 $\pm$ 3926.0 & 27085.8 $\pm$ 3549.3 & 27360.4 $\pm$ 3283.2 \\
\midrule
PCD-by-PCD & 5353.3 $\pm$ 2766.7 & 12113.7 $\pm$ 6587.6 & 25001.8 $\pm$ 7602.9 & 41267.8 $\pm$ 8217.4 & 50762.8 $\pm$ 10952.1 & 59922.7 $\pm$ 13524.6 & 60804.2 $\pm$ 11682.4 \\
MB-by-MB & 3795.9 $\pm$ 5408.7 & 7957.1 $\pm$ 11382.0 & 19009.5 $\pm$ 19035.0 & 27815.6 $\pm$ 16671.5 & 34630.6 $\pm$ 21526.2 & 41856.1 $\pm$ 26189.3 & 48282.1 $\pm$ 35115.3 \\
CMB & 6812.5 $\pm$ 5954.4 & 11527.7 $\pm$ 11732.8 & 22616.2 $\pm$ 15742.7 & 39088.9 $\pm$ 34534.5 & 46467.1 $\pm$ 32405.8 & 53827.5 $\pm$ 35395.8 & 59007.0 $\pm$ 42992.4 \\
PSL & 2638.0 $\pm$ 1819.9 & 4794.3 $\pm$ 4771.1 & 8829.4 $\pm$ 4765.7 & 13581.0 $\pm$ 8804.0 & 16611.1 $\pm$ 7713.0 & 18843.0 $\pm$ 10145.5 & 22048.9 $\pm$ 11609.0 \\
LatentLCD & 9010.3 $\pm$ 4878.1 & 24738.3 $\pm$ 15588.3 & 44342.4 $\pm$ 39417.8 & 85422.0 $\pm$ 82438.2 & 65808.3 $\pm$ 78557.0 & 100936.1 $\pm$ 129444.6 & 108625.3 $\pm$ 144386.5 \\
\hdashline
\rowcolor{gray!10}
LoCaLS (Ours) & \textbf{407.0 $\pm$ 304.2} & \textbf{459.1 $\pm$ 436.2} & \textbf{563.9 $\pm$ 390.3} & \textbf{632.8 $\pm$ 435.9} & \textbf{823.7 $\pm$ 744.1} & \textbf{1158.7 $\pm$ 857.8} & \textbf{1109.8 $\pm$ 857.0} \\
\bottomrule
\end{tabular}
\end{adjustbox}
\vspace{1mm}
\begin{tablenotes}
\footnotesize
\item Note: The best result in each group is highlighted in bold.
\end{tablenotes}
\end{table}

\FloatBarrier
\subsection{Supplementary Results on Real-World Structures} \label{app:supplementary-experiments-real-world-structures}

This section provides supplementary results for the real-world benchmark experiments in \cref{sec:performance-real-world-structures}. Specifically, \cref{tab:real-structure-degree} summarizes the causal structures used in this experiment. \cref{fig:small-benchmark-results} reports the runtime, number of CI tests, and Local-SHD results on the smaller structures MILDEW and BARLEY. \cref{fig:benchmark-prf-results} reports the Mark-Precision, Mark-Recall, and Mark-F1 curves for all four benchmark structures.

\begin{table}[H] 
\centering 
\caption{Summary statistics for real-world benchmark structures.} 
\label{tab:real-structure-degree} 
\begin{tabular}{lC{0.25cm}C{0.25cm}C{0.25cm}C{0.25cm}C{3.0cm}C{4.0cm}} 
\toprule 
\textbf{Real-World Structure} 
& $|\vars[V]|$ 
& $|\vars[O]|$ 
& $|\vars[L]|$ 
& $|\vars[S]|$ 
& \textbf{Avg. Degree in DAGs} 
& \textbf{Avg. Degree in Induced MAGs} \\ 
\midrule 
MILDEW & 35  & 31  & 2  & 2  & 2.63 & 3.37$\pm$0.55 \\ 
BARLEY & 48  & 42  & 3  & 3  & 3.5 & 3.86$\pm$0.30 \\ 
ANDES  & 223 & 213 & 5  & 5  & 3.03 & 3.31$\pm$0.20 \\ 
LINK   & 724 & 694 & 15 & 15 & 3.11 & 3.19$\pm$0.04 \\ 
\bottomrule 
\end{tabular} 
\end{table}

\begin{figure}[H]
    \centering
    \begin{subfigure}{1.0\linewidth}
        \centering
        \includegraphics[width=\linewidth]{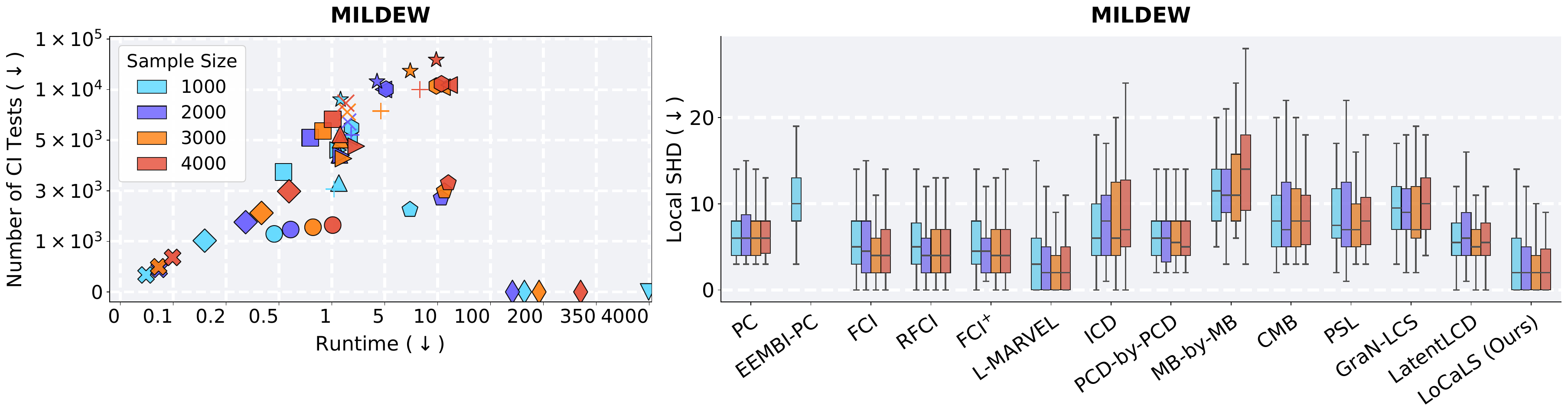}
        \label{fig:mildew-runtime-ci}
        
    \end{subfigure}
        \begin{subfigure}{1.0\linewidth}
        \centering
        \includegraphics[width=\linewidth]{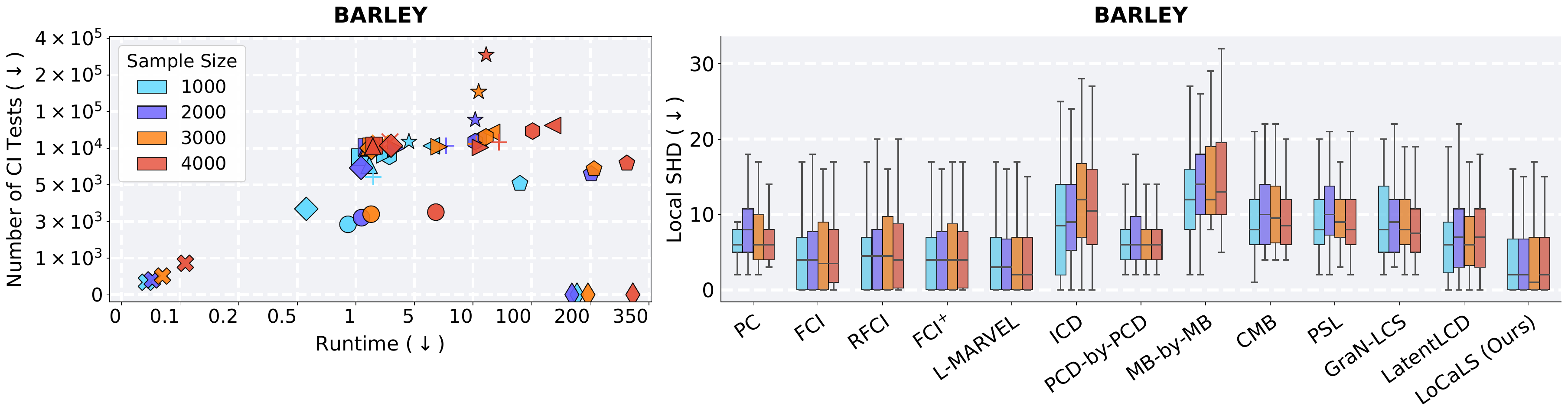}
        \label{fig:barley-runtime-ci}
    \end{subfigure}
    
        \begin{subfigure}{0.9\linewidth}
        \centering
        \includegraphics[width=\linewidth]{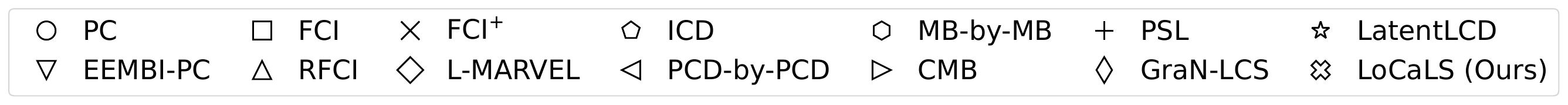}
    \end{subfigure}
    \caption{Results on the smaller real-world benchmark structures MILDEW and BARLEY under varying sample sizes. The panels report runtime, CI-test counts, and Local-SHD. The symbol $\downarrow$ indicates that lower values are better.} 
\label{fig:small-benchmark-results}
\end{figure}

\begin{figure}[H]
    \centering
    \begin{subfigure}{1.0\linewidth}
        \centering
        \includegraphics[width=\linewidth]{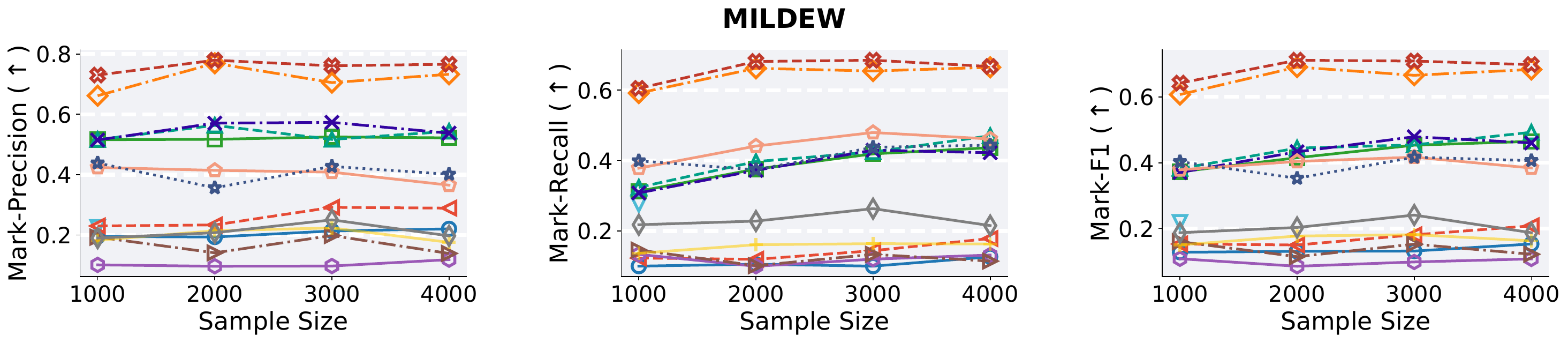}
        \label{fig:mildew-prf}
        
    \end{subfigure}
        \begin{subfigure}{1.0\linewidth}
        \centering
        \includegraphics[width=\linewidth]{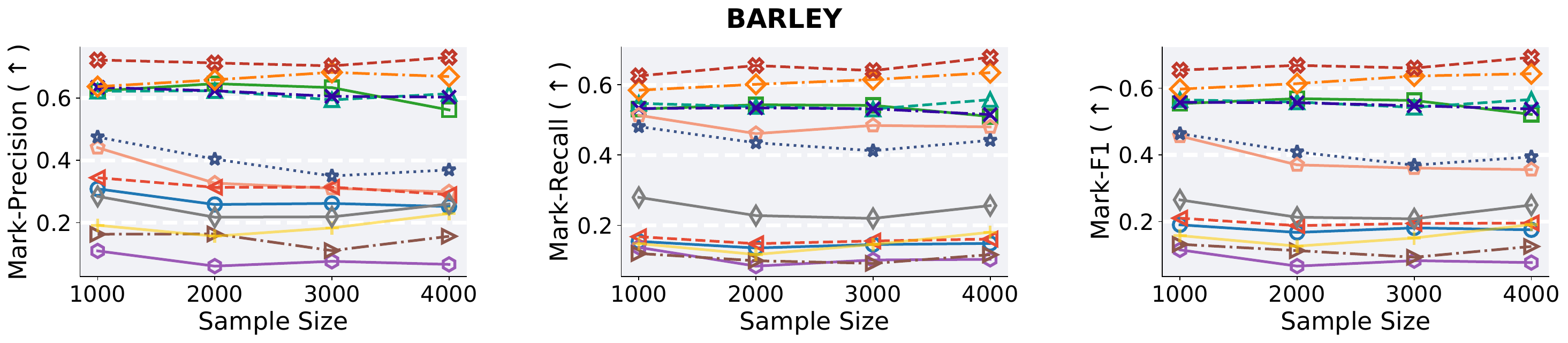}
        \label{fig:barley-prf}
    \end{subfigure}
    \begin{subfigure}{1.0\linewidth}
        \centering
        \includegraphics[width=\linewidth]{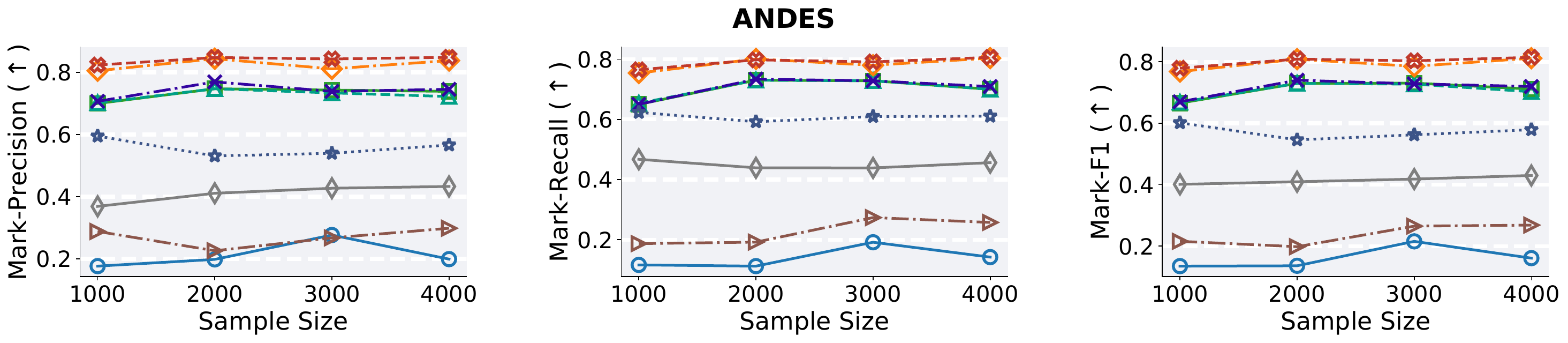}
        \label{fig:andes-prf}
    \end{subfigure}

    \begin{subfigure}{1.0\linewidth}
        \centering
        \includegraphics[width=\linewidth]{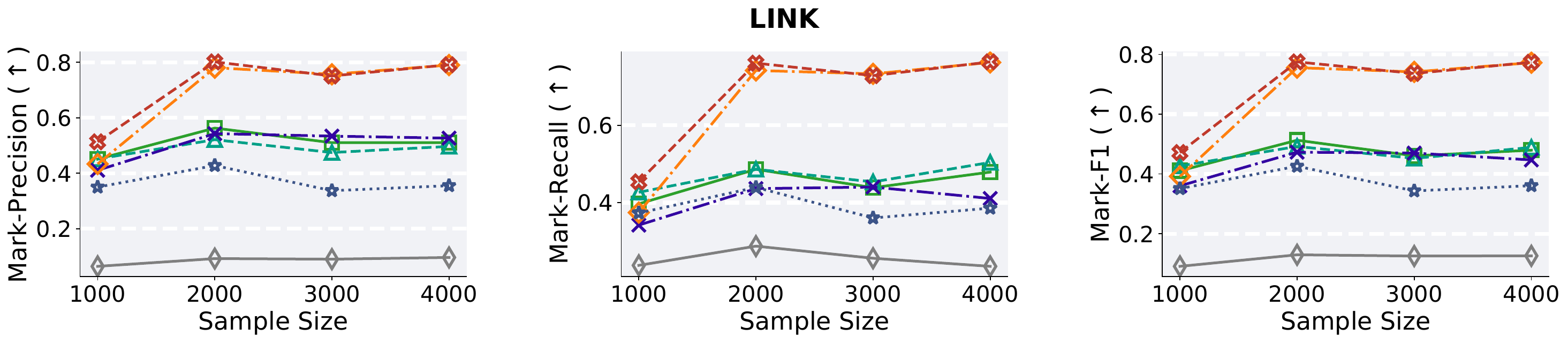}
        \label{fig:link-prf}
    \end{subfigure}
        \begin{subfigure}{0.9\linewidth}
        \centering
        \includegraphics[width=\linewidth]{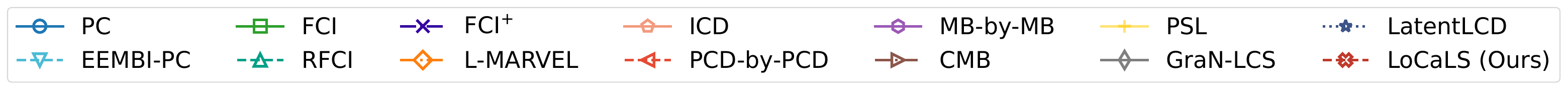}

    \end{subfigure}
    \caption{Mark-based accuracy on real-world benchmark structures under varying sample sizes. The panels report Mark-Precision, Mark-Recall, and Mark-F1 for MILDEW, BARLEY, ANDES, and LINK. The symbol $\uparrow$ indicates that higher values are better.} 
    \label{fig:benchmark-prf-results}
\end{figure}

\FloatBarrier
\subsection{Supplementary Results on Latent-Variable Sensitivity} \label{app:supplementary-sensitivity-latent} 

This section provides supplementary results for the latent-variable sensitivity experiment in \cref{sec:sensitivity-latent}. As shown in \cref{tab:vary-latent-ratio}, increasing the latent ratio only slightly increases the average degree of the induced MAGs, indicating that the main varying factor in this experiment is the amount of latent confounding rather than the overall graph density. \cref{tab:vary-latent-ratio-efficiency} reports the detailed runtime and CI-test results, and \cref{fig:ER-varying-latent-results} reports the Mark-Precision, Mark-Recall, Mark-F1, and Local-SHD results.

\begin{table}[H]
\centering
\caption{Summary statistics under varying latent ratios. The selection ratio is fixed at $0.05$.} 
\label{tab:vary-latent-ratio}
\begin{tabular}{lC{0.3cm}C{0.3cm}C{0.3cm}C{7.0cm}}
\toprule
\textbf{Latent Ratio} & $|\vars[O]|$ & $|\vars[L]|$ & $|\vars[S]|$ & 
\textbf{Avg. Degree in Induced MAGs}  \\
\midrule
0.05  & 100  & 5  & 5  & 2.38$\pm$0.11  \\
0.10  & 100  & 10  & 5  & 2.44$\pm$0.06  \\
0.15  & 100  & 15  & 5  & 2.45$\pm$0.04  \\
0.20  & 100  & 20  & 5  & 2.46$\pm$0.04  \\
\bottomrule
\end{tabular}
\end{table}

\begin{table}[H]
\centering
\caption{Runtime and number of CI tests under varying latent ratios.}
\label{tab:vary-latent-ratio-efficiency}
\scriptsize
\setlength{\tabcolsep}{3pt}
\renewcommand{\arraystretch}{1.08}
\resizebox{\textwidth}{!}{
\begin{tabular}{lcccccccc}
\toprule
\multirow{2}{*}{\textbf{Algorithm}}
& \multicolumn{4}{c}{\textbf{Runtime (seconds) $\downarrow$}}
& \multicolumn{4}{c}{\textbf{Number of CI tests $\downarrow$}} \\
\cmidrule(lr){2-5} \cmidrule(lr){6-9}
& $r_L = 0.05$ & $r_L=0.10$ & $r_L=0.15$ & $r_L=0.20$
& $r_L = 0.05$ & $r_L=0.10$ & $r_L=0.15$ & $r_L=0.20$ \\
\midrule
PC
& 3.51 $\pm$ 0.59 & 3.65 $\pm$ 0.58 & 3.70 $\pm$ 0.64 & 3.75 $\pm$ 0.66
& 7349.8 $\pm$ 597.5 & 7789.8 $\pm$ 655.2 & 8081.3 $\pm$ 741.6 & 7988.3 $\pm$ 602.5 \\

FCI
& 1.88 $\pm$ 0.28 & 2.04 $\pm$ 0.30 & 2.22 $\pm$ 0.36 & 2.49 $\pm$ 0.60
& 10187.1 $\pm$ 1271.5 & 10875.5 $\pm$ 1066.0 & 11311.8 $\pm$ 1236.7 & 11368.4 $\pm$ 1785.4 \\

RFCI
& 1.83 $\pm$ 0.73 & 1.91 $\pm$ 0.68 & 1.95 $\pm$ 0.73 & 1.98 $\pm$ 0.69
& 9362.0 $\pm$ 988.7 & 9931.1 $\pm$ 902.9 & 10274.2 $\pm$ 929.0 & 10337.5 $\pm$ 1012.0 \\

FCI$^{+}$
& 3.71 $\pm$ 0.98 & 3.94 $\pm$ 1.11 & 3.82 $\pm$ 1.01 & 4.12 $\pm$ 1.26
& 12994.2 $\pm$ 2304.7 & 14118.0 $\pm$ 2180.9 & 14865.1 $\pm$ 2439.6 & 14427.6 $\pm$ 2270.3 \\

L-MARVEL
& 0.25 $\pm$ 0.06 & 0.27 $\pm$ 0.07 & 0.27 $\pm$ 0.07 & 0.27 $\pm$ 0.08
& 5476.8 $\pm$ 170.4 & 5496.9 $\pm$ 278.2 & 5454.9 $\pm$ 199.4 & 5479.0 $\pm$ 247.8 \\

ICD
& 16.02 $\pm$ 11.22 & 16.18 $\pm$ 8.53 & 24.73 $\pm$ 17.24 & 19.90 $\pm$ 12.89
& 8772.5 $\pm$ 992.7 & 9358.0 $\pm$ 1090.9 & 9645.2 $\pm$ 1049.6 & 9624.6 $\pm$ 1069.4 \\
\midrule
PCD-by-PCD
& 6.24 $\pm$ 2.90 & 6.98 $\pm$ 2.41 & 7.22 $\pm$ 2.12 & 8.16 $\pm$ 1.92
& 12395.7 $\pm$ 5467.3 & 14502.3 $\pm$ 4411.9 & 15474.8 $\pm$ 4219.0 & 16853.5 $\pm$ 3318.8 \\

MB-by-MB
& 2.46 $\pm$ 1.87 & 3.14 $\pm$ 2.11 & 2.58 $\pm$ 1.69 & 3.75 $\pm$ 3.22
& 5208.5 $\pm$ 4508.4 & 5928.6 $\pm$ 4582.5 & 5067.0 $\pm$ 3649.3 & 8132.0 $\pm$ 9952.7 \\

CMB
& 5.33 $\pm$ 5.57 & 5.36 $\pm$ 4.32 & 6.24 $\pm$ 5.94 & 6.54 $\pm$ 5.48
& 12544.6 $\pm$ 15395.1 & 12968.0 $\pm$ 13290.0 & 14293.8 $\pm$ 15664.4 & 14072.0 $\pm$ 12831.9 \\

PSL
& 1.93 $\pm$ 1.40 & 2.45 $\pm$ 1.62 & 2.41 $\pm$ 1.70 & 2.39 $\pm$ 1.40
& 3845.6 $\pm$ 3073.9 & 4809.2 $\pm$ 3860.5 & 5042.7 $\pm$ 4338.8 & 4753.2 $\pm$ 3408.7 \\

GraN-LCS
& 124.11 $\pm$ 67.15 & 137.94 $\pm$ 74.30 & 139.78 $\pm$ 95.75 & 129.08 $\pm$ 70.33
& -- & -- & -- & -- \\

LatentLCD
& 3.08 $\pm$ 1.54 & 3.37 $\pm$ 1.61 & 3.53 $\pm$ 1.74 & 2.92 $\pm$ 1.62
& 12275.5 $\pm$ 6620.6 & 13834.2 $\pm$ 7179.8 & 13616.9 $\pm$ 6988.5 & 12063.5 $\pm$ 6576.6 \\

\hdashline
\rowcolor{gray!10}
LoCaLS (Ours)
& \textbf{0.14 $\pm$ 0.05} & \textbf{0.15 $\pm$ 0.05} & \textbf{0.17 $\pm$ 0.07} & \textbf{0.20 $\pm$ 0.09}
& \textbf{515.0 $\pm$ 365.5} & \textbf{636.9 $\pm$ 452.9} & \textbf{611.7 $\pm$ 433.4} & \textbf{748.4 $\pm$ 580.1} \\
\bottomrule
\end{tabular}
}

\vspace{1mm}
\begin{minipage}{\textwidth}
\footnotesize
Note: ``--'' indicates that the number of CI tests is not applicable for the corresponding method.
The best result in each group is highlighted in bold.
\end{minipage}

\end{table}

\begin{figure}[H]
    \centering
        \begin{subfigure}{1.0\linewidth}
        \centering
        \includegraphics[width=\linewidth]{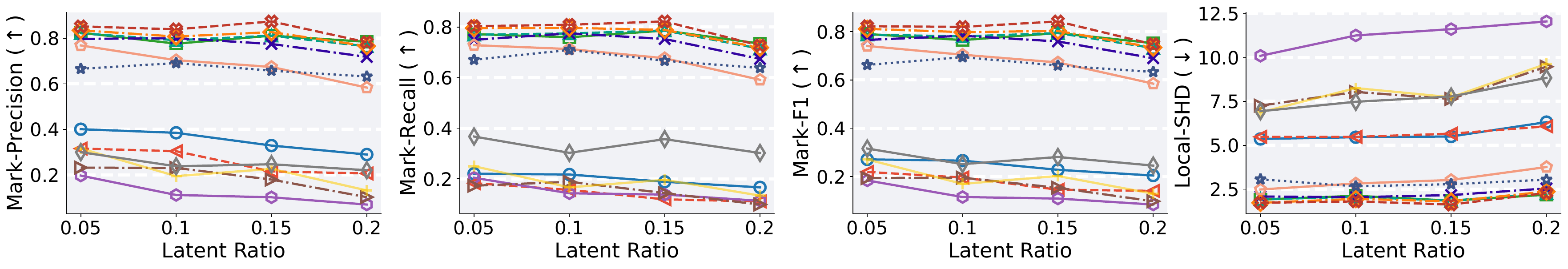}
    \end{subfigure}
        \begin{subfigure}{0.9\linewidth}
        \centering
        \includegraphics[width=\linewidth]{Figures/legend.pdf}
    \end{subfigure}

    \caption{Structural accuracy under varying latent ratios. The panels report Mark-Precision, Mark-Recall, Mark-F1, and Local-SHD, with the selection ratio fixed at $0.05$. The symbols $\uparrow$ and $\downarrow$ indicate that higher and lower values are better, respectively.} 
    \label{fig:ER-varying-latent-results}
\end{figure}

\FloatBarrier
\subsection{Supplementary Results on Selection-Variable Sensitivity} 
\label{app:supplementary-sensitivity-selection} 

This section provides supplementary results for the selection-variable sensitivity experiment in \cref{sec:sensitivity-selection}. Specifically, \cref{tab:vary-selection-ratio} summarizes the induced MAGs under different selection ratios, showing that the average degree changes only mildly as the number of selection variables increases. \cref{tab:vary-selection-ratio-efficiency} reports the detailed runtime and CI-test results, and \cref{fig:ER-varying-selection-results} reports the Mark-Precision, Mark-Recall, Mark-F1, and Local-SHD results.

\begin{table}[H]
\centering
\caption{Summary statistics under varying selection ratios. The latent ratio is fixed at $0.05$.} \label{tab:vary-selection-ratio}
\begin{tabular}{lC{0.3cm}C{0.3cm}C{0.3cm}C{7.0cm}}
\toprule
\textbf{Selection Ratio} & $|\vars[O]|$ & $|\vars[L]|$ & $|\vars[S]|$ & 
\textbf{Avg. Degree in Induced MAGs}  \\
\midrule
0.05  & 100  & 5  & 5  & 2.36$\pm$0.10  \\
0.10  & 100  & 5  & 10  & 2.43$\pm$0.04  \\
0.15  & 100  & 5  & 15  & 2.43$\pm$0.05  \\
0.20  & 100  & 5  & 20  & 2.48$\pm$0.11  \\
\bottomrule
\end{tabular}
\end{table}

\begin{table}[H]
\centering
\caption{Runtime and number of CI tests under varying selection ratios.}
\label{tab:vary-selection-ratio-efficiency}
\scriptsize
\setlength{\tabcolsep}{3pt}
\renewcommand{\arraystretch}{1.08}
\resizebox{\textwidth}{!}{
\begin{tabular}{lcccccccc}
\toprule
\multirow{2}{*}{\textbf{Algorithm}}
& \multicolumn{4}{c}{\textbf{Runtime (seconds) $\downarrow$}}
& \multicolumn{4}{c}{\textbf{Number of CI tests $\downarrow$}} \\
\cmidrule(lr){2-5} \cmidrule(lr){6-9}
& $r_S = 0.05$ & $r_S=0.10$ & $r_S=0.15$ & $r_S=0.20$
& $r_S = 0.05$ & $r_S=0.10$ & $r_S=0.15$ & $r_S=0.20$ \\
\midrule

PC
& 3.65 $\pm$ 0.68 & 3.33 $\pm$ 0.55 & 3.36 $\pm$ 0.57 & 3.39 $\pm$ 0.57
& 7585.6 $\pm$ 714.6 & 7564.8 $\pm$ 808.9 & 7416.0 $\pm$ 576.8 & 7532.3 $\pm$ 576.4 \\

FCI
& 2.12 $\pm$ 0.36 & 2.07 $\pm$ 0.30 & 2.09 $\pm$ 0.33 & 2.13 $\pm$ 0.22
& 10498.7 $\pm$ 1256.0 & 10082.2 $\pm$ 1342.8 & 9866.0 $\pm$ 1143.9 & 9824.1 $\pm$ 1110.7 \\

RFCI
& 2.90 $\pm$ 1.02 & 2.88 $\pm$ 0.98 & 2.89 $\pm$ 1.00 & 2.87 $\pm$ 0.98
& 9618.2 $\pm$ 1006.5 & 9336.8 $\pm$ 1064.6 & 9119.1 $\pm$ 914.1 & 9157.6 $\pm$ 883.5 \\

FCI$^{+}$
& 5.54 $\pm$ 1.35 & 5.49 $\pm$ 1.26 & 5.40 $\pm$ 1.23 & 5.05 $\pm$ 1.63
& 13449.2 $\pm$ 2294.4 & 12949.1 $\pm$ 2477.2 & 12460.5 $\pm$ 2240.3 & 12540.6 $\pm$ 2109.8 \\

L-MARVEL
& 0.25 $\pm$ 0.06 & 0.28 $\pm$ 0.08 & 0.29 $\pm$ 0.12 & 0.28 $\pm$ 0.09
& 5506.4 $\pm$ 239.6 & 5400.3 $\pm$ 168.6 & 5421.2 $\pm$ 250.3 & 5413.3 $\pm$ 187.2 \\

ICD
& 18.00 $\pm$ 12.27 & 13.52 $\pm$ 9.08 & 10.53 $\pm$ 6.84 & 10.37 $\pm$ 6.33
& 8935.0 $\pm$ 1099.4 & 8829.5 $\pm$ 1156.9 & 8595.4 $\pm$ 921.0 & 8792.2 $\pm$ 979.3 \\
\midrule
PCD-by-PCD
& 7.20 $\pm$ 1.95 & 6.30 $\pm$ 2.13 & 6.56 $\pm$ 1.68 & 6.23 $\pm$ 2.46
& 14173.6 $\pm$ 3421.5 & 12499.5 $\pm$ 4040.7 & 13107.1 $\pm$ 3326.5 & 12259.7 $\pm$ 4381.3 \\

MB-by-MB
& 3.78 $\pm$ 3.49 & 2.09 $\pm$ 1.73 & 2.81 $\pm$ 1.54 & 2.70 $\pm$ 2.22
& 8666.0 $\pm$ 11543.0 & 4637.8 $\pm$ 5040.3 & 5873.4 $\pm$ 3953.7 & 5718.4 $\pm$ 5245.6 \\

CMB
& 5.90 $\pm$ 6.20 & 4.73 $\pm$ 4.62 & 4.77 $\pm$ 4.23 & 4.73 $\pm$ 4.33
& 14597.8 $\pm$ 17670.4 & 12176.2 $\pm$ 14096.1 & 12281.2 $\pm$ 13659.3 & 11306.5 $\pm$ 11248.1 \\

PSL
& 2.31 $\pm$ 1.69 & 1.64 $\pm$ 1.11 & 2.01 $\pm$ 1.29 & 2.07 $\pm$ 1.45
& 4772.7 $\pm$ 3891.1 & 3577.7 $\pm$ 3123.2 & 4232.7 $\pm$ 3296.0 & 4163.4 $\pm$ 3217.9 \\

GraN-LCS
& 88.90 $\pm$ 31.59 & 104.78 $\pm$ 41.62 & 130.17 $\pm$ 58.22 & 111.36 $\pm$ 54.73
& -- & -- & -- & -- \\

LatentLCD
& 2.99 $\pm$ 1.47 & 2.93 $\pm$ 1.37 & 2.54 $\pm$ 1.41 & 2.76 $\pm$ 1.43
& 12515.8 $\pm$ 6502.9 & 11285.8 $\pm$ 5542.6 & 9724.4 $\pm$ 5448.3 & 10995.0 $\pm$ 6036.4 \\
\hdashline
\rowcolor{gray!10}
LoCaLS (Ours)
& \textbf{0.14 $\pm$ 0.06} & \textbf{0.13 $\pm$ 0.06} & \textbf{0.16 $\pm$ 0.06} & \textbf{0.15 $\pm$ 0.06}
& \textbf{468.6 $\pm$ 341.4} & \textbf{489.5 $\pm$ 346.9} & \textbf{548.5 $\pm$ 417.3} & \textbf{523.0 $\pm$ 393.3} \\

\bottomrule
\end{tabular}
}

\vspace{1mm}
\begin{minipage}{\textwidth}
\footnotesize
Note: ``--'' indicates that the number of CI tests is not applicable for the corresponding method.
The best result in each group is highlighted in bold.
\end{minipage}

\end{table}

\begin{figure}[H]
    \centering
        \begin{subfigure}{1.0\linewidth}
        \centering
        \includegraphics[width=\linewidth]{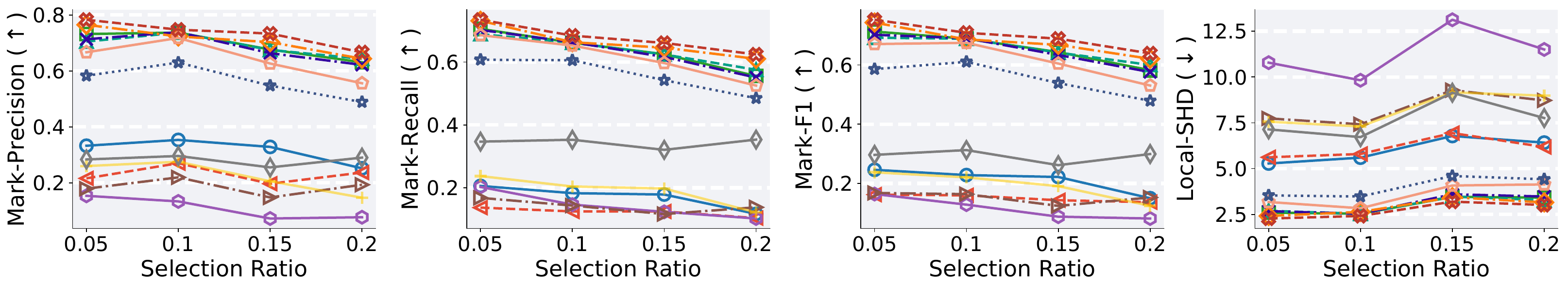}
    \end{subfigure}
        \begin{subfigure}{0.9\linewidth}
        \centering
        \includegraphics[width=\linewidth]{Figures/legend.pdf}
    \end{subfigure}

    \caption{Structural accuracy under varying selection ratios. The panels report Mark-Precision, Mark-Recall, Mark-F1, and Local-SHD, with the latent ratio fixed at $0.05$. The symbols $\uparrow$ and $\downarrow$ indicate that higher and lower values are better, respectively.} 
    \label{fig:ER-varying-selection-results}
\end{figure}

\FloatBarrier
\section{Supplementary Results for Gene Expression Applications}
\label{app:supplementary-real-datasets}

\subsection{Additional Results on \emph{Arabidopsis thaliana} Gene Expression Data}
\label{app:supplementary-real-data-Arabidopsis}

This appendix provides additional target-specific local structures learned from the \emph{Arabidopsis thaliana} gene expression data of Wille et al.~\citep{wille2004sparse}. 
The three targets complement the two representative examples shown in \cref{sec:real-data-Arabidopsis} and further illustrate how \method captures local pathway organization under possible latent regulatory effects and selection bias.

\begin{figure}[H]
    \centering
    \captionsetup[subfigure]{font=small, skip=2pt}

    \begin{subfigure}[t]{0.28\linewidth}
        \centering
        \resizebox{\linewidth}{!}{
        \begin{tikzpicture}
        
            \draw[densely dashed, rounded corners=8pt, line width=0.5pt]
                (-2.4,-2.5) -- (2.1,-2.5) -- (2.1,-0.50) --
                (-0.90,1.25) -- (-2.4,1.25) --
                (-2.4,0.35) -- cycle;
            \node[font=\scriptsize] at (-1.5,1.55) {MEP};

            \draw (0.0, 0.0) node(DXR) [genetarget, inner sep=2pt] {$\mathrm{DXR}$};

            \draw (-1.7, 0.6) node(DXPS2) [geneobs, inner sep=2pt, font=\scriptsize] {$\mathrm{DXPS2}$};
            \draw (-1.0, -1.2) node(MCT) [geneobs, inner sep=2pt, font=\footnotesize] {$\mathrm{MCT}$};
            \draw (0.0, -2.0) node(CMK) [geneobs, inner sep=2pt, font=\footnotesize] {$\mathrm{CMK}$};
            \draw (1.0, -1.2) node(HDS) [geneobs, inner sep=2pt, font=\footnotesize] {$\mathrm{HDS}$};
            \draw (1.7, 0.6) node(UPPS1) [geneobs, inner sep=2pt, font=\scriptsize] {$\mathrm{UPPS1}$};

            \draw[arcsq-arcsq]  (DXR) -- (DXPS2);
            \draw[-arcsq]       (MCT) -- (DXR);
            \draw[arcsq-arcsq]  (CMK) -- (DXR);
            \draw[arcsq-arcsq]  (DXR) -- (HDS);
            \draw[arcsq-arcsq]  (DXR) -- (UPPS1);
    
        \end{tikzpicture}
        }
        \caption{$\target = \mathrm{DXR}$}
        \label{fig:local-pag-DXR}
    \end{subfigure}
    \hfill
    \begin{subfigure}[t]{0.28\linewidth}
        \centering
        \resizebox{0.8\linewidth}{!}{
        \begin{tikzpicture}

            \draw[densely dashed, rounded corners=8pt, line width=0.5pt]
                (-2.2,-1.8) -- (1.0,-1.8) -- (1.0,-0.50) --
                (-0.90,1.1) -- (-2.2,1.1) --
                (-2.2,0.35) -- cycle;
            \node[font=\scriptsize] at (-1.5,1.35) {MEP};

            \draw (-1.3, 0.5) node(PPDS1) [genetarget, inner sep=2pt, font=\footnotesize] {$\mathrm{PPDS1}$};

            \draw (-1.5, -0.8) node(HDR) [geneobs, inner sep=2pt, font=\footnotesize] {$\mathrm{HDR}$};
            \draw (0.0, -1.4) node(PPDS2) [geneobs, inner sep=2pt, font=\scriptsize] {$\mathrm{PPDS2}$};
            \draw (0.8, 0.5) node(DPPS2) [geneobs, inner sep=2pt, font=\scriptsize] {$\mathrm{DPPS2}$};

            \draw[arcsq-arcsq]  (PPDS1) -- (HDR);
            \draw[-arcsq]       (PPDS1) -- (PPDS2);
            \draw[arcsq-arcsq]  (PPDS1) -- (DPPS2);
    
        \end{tikzpicture}
        }
        \caption{$\target = \mathrm{PPDS1}$}
        \label{fig:local-pag-PPDS1}
    \end{subfigure}
    \hfill
    \begin{subfigure}[t]{0.28\linewidth}
        \centering
        \resizebox{\linewidth}{!}{
        \begin{tikzpicture}

            \draw[densely dashed, rounded corners=8pt, line width=0.5pt]
                (-2.2,0.0) -- (1.0,-0.6) -- (1.0,-0.0) --
                (-0.90,1.4) -- (-2.2,1.4) -- cycle;
            \node[font=\scriptsize] at (-1.5,1.75) {MEP};

            \draw (0.0, 0.0) node(MECPS) [genetarget, inner sep=2pt, font=\footnotesize] {$\mathrm{MECPS}$};

            \draw (-1.5, 1.0) node(MCT) [geneobs, inner sep=2pt, font=\footnotesize] {$\mathrm{MCT}$};
            \draw (-1.5, -1.0) node(AACT1) [geneobs, inner sep=2pt, font=\scriptsize] {$\mathrm{AACT1}$};
            \draw (1.5, -1.0) node(HMGR1) [geneobs, inner sep=2pt, font=\scriptsize] {$\mathrm{HMGR1}$};
            \draw (1.5, 1.0) node(HMGR2) [geneobs, inner sep=2pt, font=\scriptsize] {$\mathrm{HMGR2}$};

            \draw[arcsq-arcsq]  (MECPS) -- (MCT);
            \draw[-arcsq]       (MECPS) -- (AACT1);
            \draw[arcsq-arcsq]  (MECPS) -- (HMGR1);
            \draw[arcsq-arcsq]  (MECPS) -- (HMGR2);
        \end{tikzpicture}
        }
        \caption{$\target = \mathrm{MECPS}$}
        \label{fig:local-pag-MECPS}
    \end{subfigure}

    \caption{
    Local causal structures learned for additional \emph{Arabidopsis thaliana} target genes.
    Dashed contours indicate genes from the same pathway group.
    (a) Target $\mathrm{DXR}$;
    (b) target $\mathrm{PPDS1}$;
    (c) target $\mathrm{MECPS}$.
    }
    \label{fig:appendix-Arabidopsis-three-targets}
\end{figure}

Comparing the supplementary local structures in \cref{fig:appendix-Arabidopsis-three-targets} with the known isoprenoid pathway organization and the graphical models reported by Wille et al.~\citep[Figure~3]{wille2004sparse}, we observe the following additional results.
\begin{itemize}[leftmargin=10pt,itemsep=2pt,topsep=2pt,parsep=0pt]
    \item
    \emph{(1) Intra-pathway Connections:}
    The recovered neighborhoods preserve several pathway-local relations reported in~\citep{wille2004sparse}. For the target $\mathrm{DXR}$, \method recovers $\mathrm{DXPS2}$, $\mathrm{MCT}$, $\mathrm{CMK}$, and $\mathrm{HDS}$ as neighbors, which are consistent with the core MEP chain shown in Wille et al.~\citep[Figure~3]{wille2004sparse}. For the downstream MEP-side target $\mathrm{PPDS1}$, the learned neighborhood contains $\mathrm{HDR}$, $\mathrm{PPDS2}$, and $\mathrm{DPPS2}$, which also agrees with the reference graph. For $\mathrm{MECPS}$, the recovered neighbor $\mathrm{MCT}$ lies in the same MEP module and is consistent with the reference structure.

    \item
    \emph{(2) Cross-pathway and Orientation Patterns:}
    The supplementary results also recover sparse links between pathway regions. In particular,
    $\mathrm{DXR}\--\mathrm{UPPS1}$, $\mathrm{PPDS1}\--\mathrm{DPPS2}$, 
    $\mathrm{MECPS}\--\mathrm{HMGR2}$, and $\mathrm{MECPS}\--\mathrm{AACT1}$ are compatible with the cross-compartment connections in Wille et al.~\citep[Figure~3]{wille2004sparse}, where the plastidial MEP pathway, cytosolic MVA pathway, and mitochondrial genes are not completely separated but interact through a small number of bridge genes. 
    This supports the view that \method captures not only pathway-local neighborhoods, but also biologically plausible cross-talk between compartments~\citep{laule2003crosstalk,wille2004sparse}.
    Most recovered edges are bi-directed, suggesting the possible presence of unmeasured common regulators or other latent factors in the gene expression data.
\end{itemize}

\subsection{Additional Results on Melanoma Gene Expression Data}
\label{app:supplementary-melanoma-gene-expression}

This appendix reports additional results for the melanoma gene expression application in \cref{sec:real-data-melanoma}. 
As in the main text, we report L-MARVEL and GraN-LCS as representative baselines; the other compared methods are omitted because a single run exceeds two hours on these datasets.

\begin{table}[H]
\centering
\caption{Comparison on melanoma gene expression data for target $\mathrm{HLA}$-$\mathrm{A}$.}
\label{tab:gene-target-performance-HLA-A}
\begin{threeparttable}
\begin{tabular}{l l C{2.5cm} C{2.5cm} C{2.0cm}}
\toprule
\textbf{Condition} & \textbf{Algorithm} & \textbf{Runtime $\downarrow$} & 
\textbf{\#CI Tests $\downarrow$} & \textbf{IVR $\uparrow$} \\
\midrule
\multirow{3}{*}{Co-culture} 
    & L-MARVEL & 1717.59 & 13064931 & 0.50 \\
    & GraN-LCS & 1897.23 & -- & 0.25 \\
    & \cellcolor{gray!10}LoCaLS (Ours) & \cellcolor{gray!10}\textbf{6.87} & \cellcolor{gray!10}\textbf{68108} & \cellcolor{gray!10}\textbf{0.79} \\
\midrule
\multirow{3}{*}{IFN-$\gamma$} 
    & L-MARVEL & 810.05 & 5632375 & 0.50 \\
    & GraN-LCS & 1916.61 & -- & 0.00 \\
    & \cellcolor{gray!10}LoCaLS (Ours) & \cellcolor{gray!10}\textbf{53.40} & \cellcolor{gray!10}\textbf{993871} & \cellcolor{gray!10}\textbf{0.52} \\
\midrule
\multirow{3}{*}{Control} 
    & L-MARVEL & 19.58 & 1194846 & \textbf{1.00} \\
    & GraN-LCS & 1277.83 & -- & 0.00 \\
    & \cellcolor{gray!10}LoCaLS (Ours) & \cellcolor{gray!10}\textbf{6.01} & \cellcolor{gray!10}\textbf{78693} & \cellcolor{gray!10}0.00 \\
\bottomrule
\end{tabular}
\vspace{1mm}
\begin{tablenotes}[flushleft]
\footnotesize
\item Note: 
The best result in each group is highlighted in bold.
\end{tablenotes}
\end{threeparttable}
\end{table}

\begin{table}[H]
\centering
\caption{Comparison on melanoma gene expression data for target $\mathrm{CD59}$.}
\label{tab:gene-target-performance-CD59}

\begin{threeparttable}
\begin{tabular}{l l C{2.5cm} C{2.5cm} C{2.0cm}}
\toprule
\textbf{Condition} & \textbf{Algorithm} & \textbf{Runtime $\downarrow$} & 
\textbf{\#CI Tests $\downarrow$} & \textbf{IVR $\uparrow$} \\
\midrule
\multirow{3}{*}{Co-culture} 
    & L-MARVEL & 1717.59 & 13064931 & 0.00 \\
    & GraN-LCS & 2129.69 & -- & 0.00 \\
    & \cellcolor{gray!10}LoCaLS (Ours) & \cellcolor{gray!10}\textbf{7.63} & \cellcolor{gray!10}\textbf{95995} & \cellcolor{gray!10}\textbf{0.89} \\
\midrule
\multirow{3}{*}{IFN-$\gamma$} 
    & L-MARVEL & 810.05 & 5632375 & 0.00 \\
    & GraN-LCS & 2056.80 & -- & 0.00 \\
    & \cellcolor{gray!10}LoCaLS (Ours) & \cellcolor{gray!10}\textbf{3.56} & \cellcolor{gray!10}\textbf{43579} & \cellcolor{gray!10}\textbf{1.00} \\
\midrule
\multirow{3}{*}{Control} 
    & L-MARVEL & 19.58 & 1194846 & 0.00 \\
    & GraN-LCS & 1360.88 & -- & 0.00 \\
    & \cellcolor{gray!10}LoCaLS (Ours) & \cellcolor{gray!10}\textbf{10.32} & \cellcolor{gray!10}\textbf{171202} & \cellcolor{gray!10}\textbf{0.73} \\
\bottomrule
\end{tabular}

\vspace{1mm}
\begin{tablenotes}[flushleft]
\footnotesize
\item Note: The best result in each group is highlighted in bold.
\end{tablenotes}
\end{threeparttable}

\end{table}

\textbf{Results.}
\cref{tab:gene-target-performance-HLA-A,tab:gene-target-performance-CD59} provide complementary evidence for two immune-relevant targets beyond $\mathrm{B2M}$. 
For $\mathrm{HLA}$-$\mathrm{A}$ and $\mathrm{CD59}$, \method achieves the highest IVR in nearly all conditions while requiring substantially fewer CI tests and much less runtime than the baselines. Although L-MARVEL obtains the highest IVR in the control condition for $\mathrm{HLA}$-$\mathrm{A}$, its IVR for $\mathrm{CD59}$ is $0.00$ across all three conditions. In contrast, \method obtains nonzero IVR for $\mathrm{CD59}$ under all three conditions, further supporting its robustness on immune-relevant targets.

These supplementary results are consistent with the main $\mathrm{B2M}$ analysis. 
Across antigen-presentation targets ($\mathrm{B2M}$ and $\mathrm{HLA}$-$\mathrm{A}$) and the complement-regulatory target $\mathrm{CD59}$, \method generally recovers local directed relations that are better supported by perturbation-induced distributional changes, while requiring substantially fewer CI tests than L-MARVEL and much less runtime than GraN-LCS. 
This suggests that the proposed \method remains effective for high-dimensional single-cell gene expression data, where latent regulatory factors and selection effects may be present.

\end{document}